\theoremstyle{plain}
\newtheorem{theorem}{Theorem}[section]
\newtheorem{lemma}[theorem]{Lemma}
\theoremstyle{definition}
\newtheorem{assumption}[theorem]{Assumption}
\theoremstyle{remark}
\def\max{\text{max}}
\def\min{\text{min}}
\def\tilde{\widetilde}
\def\hat{\widehat}
\newcommand{\norm}[1]{\left\lVert#1\right\rVert}
\def\Xi{ A}
\def\Upsilon{ g}
\begin{document}

\title{Least Squares Estimation using Sketched  Data with Heteroskedastic Errors}

\author{\textsc{Sokbae Lee}\thanks{Department of Economics, Columbia University and Institute for Fiscal Studies} \and \textsc{Serena Ng}\thanks{Department of Economics, Columbia University and NBER\newline
The authors would like to thank David Woodruff and Shusen Wang for helpful discussions.
The first author would like to thank the European Research Council for financial support (ERC-2014-CoG- 646917-ROMIA) and the UK Economic and Social Research Council for research grant (ES/P008909/1) to the CeMMAP.
The second author would like to thank the National Science Foundation for financial support  (SES: 2018369).}
}

\maketitle

\begin{abstract}
Researchers may perform regressions  using a sketch of data of size $m$ instead of the full sample of size $n$ for a variety of reasons.  This paper considers the case when the regression errors do not have constant variance and heteroskedasticity robust standard errors would normally be needed for test statistics to provide  accurate inference. We show that  estimates using   data sketched by random projections will  behave `as if' the errors were homoskedastic. Estimation by random sampling would not have this property.    The result arises because the sketched estimates in the case of random projections  can be expressed as   degenerate $U$-statistics, and  under certain conditions, these statistics are  asymptotically normal with  homoskedastic variance.   We verify that the conditions hold not only in the case of least squares regression when the covariates are exogenous, but also in instrumental variables estimation when the covariates are endogenous. The result implies that inference, including first-stage F tests for instrument relevance,  can be simpler than the full sample case if the sketching scheme is appropriately chosen.
\bigskip

\noindent {\bf Keywords:} sketching, random projections,  heteroskedasticity, robust standard errors,  first-stage F test.

\end{abstract}

\bibliographystyle{economet}

\thispagestyle{empty}
\setcounter{page}{0}
\newpage

\onehalfspacing

\section{Introduction}

Big data sets can be costly to store and analyze, 
and one approach  around the data bottlenecks is to work with 
a randomly chosen subset, or a \emph{sketch}, of the data. Data privacy may also dictate that a sketch of the data be made available for public use.  The early works of \citet{sarlos-06}, \citet{dmm:06} and \citet{dmms} consider the algorithmic properties of  the least squares estimator using sketched data. Subsequent work extends the  analysis  to
ridge regression \citep[e.g.,][]{wang2017sketched,Liu2020Ridge},
and logistic regression \citep[e.g.,][]{wang2019more}. See, e.g.,  \citet{woodruff2014sketching}, \citet{drineas2018lectures} and \citet{martinsson2020randomized} for a review.  However,   \citet{ma2015statistical}, \citet{raskutti2016statistical}, \citet{dobriban2019asymptotics} and \citet{ma2020asymptotic} have found that an optimal worse-case (algorithmic) error  may not yield  an optimal  mean-squared (statistical) error.  This has led to  interest  in better understanding the  sketched least squares estimates in a Bayesian setting as in   \citet{geppert-etal:17}, or its asymptotic  distribution  as in \citet{ahfock-astle-richardson} and \citet{ma2020asymptotic}.    \citet{LN:2020}  highlights the tension between  a large $m$  required for accurate inference, and a small $m$ for computation  efficiency. 
 To date, these results have been derived under the  assumptions that the errors are   homoskedastic  and that the regressors are exogenous.  But these assumptions are not innocuous. The estimates will be biased when the regressors are not exogenous, as would normally be the case in causal inference. And if the errors are heteroskedastic,  test statistics must use standard errors robust to heteroskedasticity, or else inference will not be accurate even if the regressors are exogenous. 

In this  paper,   we obtain the  surprising result that when sketching is based on random projections,  robust standard errors will not be needed, meaning that  inference using the sketched estimates can proceed  as though the errors were homoskedastic. The proof is obtained by analyzing the difference between the full sample and the sketched estimates in terms of degenerate $U$-statistics. However, the result does not hold when sketching is based on  random sampling.  Our analysis of the least squares estimator and two-stage least squares estimator shows that these findings hold both when the regressors are exogenous and  endogenous.  An implication is that inference, including first-stage F tests for instrument strength, may not require heteroskedasticity robust standard errors if the estimates are based on appropriately sketched data.


The following notation will be used.  Let $\norm{ a }$ denote the Euclidean norm of any vector $a$. 
Let $A_{ij}$ or $[A]_{ij}$ denote the $(i,j)$ element of a matrix $A$.
 For $k = 1,\ldots,d$, let $\sigma_k(A)$ be a singular value of $A$. Let $\norm{ A }_2= \sigma_{\max}(A)$ denote its spectral norm,  where
$\sigma_{\max}(A)$,  and $\sigma_{\min}(A)$ are  the largest and smallest singular values of $A$.  The superscript $T$ denotes the transpose of a matrix.
For an integer $n \geq 1$, 
 $[n]$ is the set of positive integers from 1 to $n$. 
Let $\rightarrow_p$ and $\rightarrow_d$, respectively,  denote convergence in probability and in distribution.
 For a sequence of random variables $A_n$ and a sequence of positive real numbers $a_n$, 
$A_n = o_p(a_n)$ iff $ a_n^{-1} A_n \rightarrow_p 0$;
 $A_n = O_p(a_n)$ iff $a_n^{-1} A_n$ is bounded in probability.

An accompanying R package is available on the Comprehensive R Archive Network (CRAN) at \url{https://CRAN.R-project.org/package=sketching} and all replication files are available at \url{https://github.com/sokbae/replication-LeeNg-2022-ICML}.


\section{Sketched Least Squares Estimation with Heteroskedastic Errors}

Given $n$ observations $\{ (y_i, X_i,Z_i): i=1,\ldots,n \}$, we consider  a linear regression model:
\begin{align}\label{model}
y_i = X_i^T \beta_0 + e_i, \; 
\end{align}
where $y_i$ is the scalar dependent variable,
$X_i$ is a $p \times 1$ vector of regressors,   $\beta_0$ is a $p \times 1$ vector of unknown parameters. The innovation $e_i$ is said to be (conditionally)  homoskedastic if $E[e_i^2|X_i]=E[e_i^2]$.
Otherwise, $e_i$ is said to be heteroskedastic.  The regressors are  said to be exogenous if $E[e_iX_i]=0$. Otherwise it is endogenous. In that case, we assume  a $q\times 1$ vector of instrumental variables, $Z_i$, satisfying $E[e_iZ_i]=0$ are available.
In matrix form, the model given in \eqref{model} can be written as
\begin{align*}
y = X \beta_0 + e, 
\end{align*} 
where $y$ and $e$ are  $n \times 1$ vectors whose $i$-th rows are $y_i$ and $e_i$, respectively,
and 
$X$ is the $n \times p$ matrix of regressors whose $i$-th row is $X_i^T$. 

We first study  the exogenous regressor case  when $\mathbb E(e_iX_i)=0$. The least squares estimator $\hat\beta_{OLS}:=(X^TX)^{-1} X^T y$ is $\sqrt{n}$ consistent and asymptotically normal, i.e.,
$ \sqrt{n}(\hat\beta_{OLS}-\beta_0)\rightarrow_d N(0,V_1)$ as $n\rightarrow\infty$, where \[V_1 :=   [\mathbb E(X_iX_i^T) ]^{-1} \mathbb E(e_i^2X_iX_i^T) [\mathbb E(X_iX_i^T)]^{-1}\]
 is the  heteroskedasticity-robust asymptotic variance.
 Under homoskedasticity,  $V_1$ becomes 
\[  V_0:= \mathbb E(e_i^2) [\mathbb E(X_iX_i^T)]^{-1}.\]
The point estimates $\hat\beta$ can be used to test hypothesis, say, $H_0:\beta_2=\bar \beta_2$ using the $t$ test    $\frac{\sqrt{n}(\hat\beta_2-\bar \beta_2)}{\sqrt{[\hat V]_{22}}}$, where $\hat V$ is an estimate of either $V_1$ or $V_0$, $\beta_2$ is a specific element of $\beta_2$,
$\bar \beta_2$ is the null value, and $[\hat V]_{22}$  the (2,2)  diagonal element of $\hat V$.
The distribution of this test under the null hypothesis crucially  depends on the correct standard error $\sqrt{[\hat V]_{22}}$ being used.   Using $\hat V_0$ when the robust estimator $\hat V_1$ should have been used would lead to inaccurate inference, in the sense of rejecting the null hypothesis too often or not enough.

A sketch of the data $(y, X)$ is  $(\tilde{y}, \tilde{X})$, where
$\tilde{{y}} = \Pi {y}$, $\tilde{{X}} = \Pi {X}$,  and 
 $\Pi$ is usually an  $m \times n$  random matrix.
The sketched least squares estimator is $\tilde\beta_{OLS}:=(\tilde X^T\tilde X)^{-1}\tilde X^T \tilde y$. 
Even though the sketched regression is based on a sample of size $m$, 
$\tilde X^T\tilde X = X^T \Pi^T \Pi X$ and $\tilde X^T \tilde y = X^T \Pi^T \Pi y$ can be seen as weighted moments in a sample of size $n$.  
Thus let $\tilde{\Upsilon}_n := \tilde{{X}}^T   \tilde{{e}}/n$,
$\hat \Upsilon_n := {X}^T {e}/n$,
$\tilde{\Xi}_n := (\tilde X^T\tilde X/n)^{-1}$, and
$\hat{\Xi}_n :=  ( X^T X/n)^{-1}.$  Then
\begin{align*}
\tilde{\beta}_{OLS} - \hat{\beta}_{OLS} 
&= (\tilde{\Xi}_n - \hat{\Xi}_n) \hat\Upsilon_n + \hat{\Xi}_n ( \tilde{\Upsilon}_n - \hat \Upsilon_n) \\
&+ (\tilde{\Xi}_n - \hat{\Xi}_n) ( \tilde{\Upsilon}_n -\hat \Upsilon_n), 
\end{align*}
By the law of large numbers,
$\hat \Xi_n - \Xi = o_p (1)$, where $\Xi := [ \mathbb{E} ( X_i X_i^T ) ]^{-1}$,  
and by the central limit theorem, $\hat g_n=O_p(n^{-1/2})$. 
We  show in Section \ref{sec:mse} that for 
$\Pi$  with subspace embedding property
 \begin{align*}
\tilde{\beta}_{OLS} - \hat{\beta}_{OLS} 
&= \Xi ( \tilde{\Upsilon}_n - \hat \Upsilon_n)
+ o_p ( m^{-1/2}).
\end{align*}
We study  $\tilde \beta_{OLS}$ under the following regularity conditions.

\begin{assumption}
\label{OLS-textbook}
\begin{itemize}
\item[(i)] The data $\mathcal{D}_n := \{(y_i, X_i) \in \mathbb{R}^{1+p}
: i=1,\ldots,n \}$ are independent and identically distributed (i.i.d.), where $p$ is fixed.  Furthermore,
$X$ has  singular value decomposition ${X} = U_X \Sigma_X V_X^T$.
\item[(ii)] 
 $\mathbb{E}(y_i^4)<\infty$, 
$\mathbb{E}(\| X_i \|^4)<\infty$,
and
$\mathbb{E} ( X_i X_i^T )$   has full rank $p$.
\item[(iii)] 
The random matrix $\Pi$ is independent of $\mathcal{D}_n$. 
\item[(iv)] 
$m = m_n \rightarrow \infty$ but $m/n \rightarrow 0$ as $n \rightarrow \infty$.
\end{itemize}
\end{assumption}

Assumptions (i) and (ii) are standard. For (iii),  note that for a general random  $\Pi$ whose $(k,i)$ element
 is  $\Pi_{ki}$,   the difference between the  full and the sketched moments  such as $\tilde g_n-\hat g_n$ and $\tilde A_n-\hat A_n$  are of the form
\begin{align*}
\begin{split}
&n^{-1} \left( U^T \Pi^T \Pi V - U^T V \right) \\
&= n^{-1} \sum_{i=1}^n    \psi_i  U_i    V_i
+ n^{-1} \sum_{i=1}^n  \sum_{j=1, j \neq i}^n  U_i   \varphi_{ij} V_j \\
&=: T_{n1} + T_{n2},
\end{split}
\end{align*}
where
$U \in \mathbb{R}^n$ and $V \in \mathbb{R}^n$ are  vectors of certain
i.i.d. random variables $(U_i, V_i) \in \mathbb{R}^2$ that are independent of $\Pi$,
\begin{align*}
\psi_i := \sum_{k=1}^{\textrm{r.dim}(\Pi)} \Pi_{ki}^2  - 1,
\; 
\varphi_{ij} := \sum_{k=1}^{\textrm{r.dim}(\Pi)} \Pi_{ki}  \Pi_{kj},
\end{align*}
and $\textrm{r.dim}(\Pi)  \in \{m,  n\}$ denotes the row dimension of $\Pi$.

There are two classes of sketching schemes to consider. Random sampling schemes have  $\varphi_{ij} = 0$ for all $i \neq j$ because  there is only one non-zero entry in  each row of $\Pi$. In such cases, $T_{2n}$ is negligible and $T_{1n}$ is the leading term.  The second class is  random projection schemes with which $T_{1n}$ is asymptotically negligible and $T_{2n}$ is the leading term. 

To gain intuition,  we first provide results for Bernoulli sampling (BS) from the first type and countsketch (CS) from the second type.
\begin{theorem}\label{corr-CLT-thm}
Let Assumption~\ref{OLS-textbook} hold and $\mathbb E(e_iX_i)=0$.
\begin{itemize}
\item[(i)] Under BS, $ m^{1/2} ( \tilde{\beta}_{OLS} - \hat{\beta}_{OLS} \,)  \rightarrow_d N (0, V_1 )$.
\item[(ii)] Under CS,
$m^{1/2} ( \tilde{\beta}_{OLS} - \hat{\beta}_{OLS} \,)  
\rightarrow_d N (0, V_0 )$.
\end{itemize}
\end{theorem}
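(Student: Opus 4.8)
The plan is to reduce the theorem to the limiting behavior of $m^{1/2}(\tilde\Upsilon_n - \hat\Upsilon_n)$ and to treat both schemes through the single decomposition $\tilde\Upsilon_n - \hat\Upsilon_n = T_{n1} + T_{n2}$ already recorded above with $U_i = X_i$ and $V_i = e_i$, i.e. $T_{n1} = n^{-1}\sum_{i}\psi_i X_i e_i$ and $T_{n2} = n^{-1}\sum_{i\neq j}\varphi_{ij} X_i e_j$. Taking the representation $\tilde\beta_{OLS} - \hat\beta_{OLS} = \Xi(\tilde\Upsilon_n - \hat\Upsilon_n) + o_p(m^{-1/2})$ as given (to be shown in Section~\ref{sec:mse} under the subspace embedding property), multiplying by $m^{1/2}$ and applying Slutsky's theorem, it suffices to find the limit law of $m^{1/2}(\tilde\Upsilon_n - \hat\Upsilon_n)$ and then sandwich by $\Xi$. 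Since $V_1 = \Xi\,\mathbb E(e_i^2 X_iX_i^T)\,\Xi$ and $V_0 = \mathbb E(e_i^2)\,\Xi = \Xi\,\mathbb E(e_i^2)\Xi^{-1}\,\Xi$, the two parts amount to showing $m^{1/2}(\tilde\Upsilon_n - \hat\Upsilon_n) \rightarrow_d N(0,\mathbb E[e_i^2 X_iX_i^T])$ under BS and $\rightarrow_d N(0,\mathbb E[e_i^2]\Xi^{-1})$ under CS. Throughout I condition on $\mathcal D_n$, using that $\Pi$ is independent of the data (Assumption~\ref{OLS-textbook}(iii)).

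For part (i), BS has $\varphi_{ij}=0$, so $T_{n2}=0$ and only $T_{n1}$ matters. Writing the nonzero diagonal entry as $\Pi_{ii}=\sqrt{n/m}\,b_i$ with $b_i$ i.i.d. Bernoulli$(m/n)$, the weights $\psi_i = (n/m)b_i-1$ are i.i.d., mean zero, independent of $\mathcal D_n$, with $\text{Var}(\psi_i)=(n/m)(1-m/n)$. Conditional on $\mathcal D_n$, $m^{1/2}T_{n1}$ is a sum of independent mean-zero vectors with conditional variance $n^{-1}(1-m/n)\sum_i e_i^2 X_iX_i^T \rightarrow_p \mathbb E[e_i^2 X_iX_i^T]$ by the law of large numbers and $m/n\to0$. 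A Lindeberg central limit theorem, whose moment requirements follow from $\mathbb E(y_i^4)<\infty$ and $\mathbb E(\norm{X_i}^4)<\infty$, yields conditional (hence unconditional) asymptotic normality with the heteroskedastic variance, delivering $V_1$.

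For part (ii), CS has $\psi_i=0$, so $T_{n1}=0$ and the leading term is the degenerate second-order object $T_{n2}$. With $\Pi_{ki}=s_i\mathbb{1}[h(i)=k]$ for i.i.d. signs $s_i$ and i.i.d. uniform hashes $h(i)\in[m]$, one has $\varphi_{ij}=s_is_j\mathbb{1}[h(i)=h(j)]$, so that $\mathbb E[\varphi_{ij}\mid s_i,h(i)]=0$ (the form is clean/completely degenerate) and $\mathbb E[\varphi_{ij}\varphi_{kl}]=m^{-1}\mathbb{1}[\{i,j\}=\{k,l\}]$. Collecting the two matching orderings $(k,l)\in\{(i,j),(j,i)\}$ gives the conditional variance
\begin{align*}
m\,\text{Var}(T_{n2}\mid\mathcal D_n) = \frac{1}{n^2}\sum_{i\neq j} e_j^2\, X_iX_i^T + \frac{1}{n^2}\sum_{i\neq j}(X_ie_i)(X_je_j)^T .
\end{align*}
The first sum converges to $\mathbb E[e_i^2]\,\mathbb E[X_iX_i^T]=\mathbb E[e_i^2]\,\Xi^{-1}$; the second equals $(n^{-1}\sum_i X_ie_i)(n^{-1}\sum_j X_je_j)^T - n^{-2}\sum_i e_i^2 X_iX_i^T$, and both pieces vanish because $n^{-1}\sum_i X_ie_i \rightarrow_p \mathbb E[X_ie_i]=0$ under exogeneity. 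This is the crux and the source of the surprising result: the off-diagonal scrambling pairs $X_i$ with an independent $e_j$, so the heteroskedastic contribution averages out and only the homoskedastic combination $\mathbb E[e_i^2]\Xi^{-1}$ survives.

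The main obstacle is the central limit theorem for the degenerate quadratic form $T_{n2}$ in part (ii). By the Cram\'er--Wold device I would reduce to the scalar statistic with $a_i=c^TX_i$ and write $m^{1/2}c^TT_{n2}=\sum_{i<j}W_{ij}$, $W_{ij}=m^{1/2}n^{-1}\varphi_{ij}(a_ie_j+a_je_i)$, then invoke a central limit theorem for clean generalized quadratic forms (e.g. the martingale-type result of de Jong) conditionally on $\mathcal D_n$. This requires verifying that the normalized maximal influence and the fourth-cumulant ratio are negligible; for instance $\sum_{i<j}\mathbb E[W_{ij}^4\mid\mathcal D_n]$ is of order $m\,n^{-2}$ times a bounded moment average, which is $o_p(1)$ since $m/n\to0$, with the needed averages controlled by $\mathbb E(\norm{X_i}^4)<\infty$ and $\mathbb E(e_i^4)<\infty$. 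The delicate point is that the kernel coefficients $a_i e_j$ are data dependent, so the verification must be carried out on events of probability approaching one and combined with the in-probability limit of the conditional variance; a dominated-convergence argument then transfers the conditional normality to the unconditional $\rightarrow_d$ statement. Assembling these pieces gives $m^{1/2}T_{n2}\rightarrow_d N(0,\mathbb E[e_i^2]\Xi^{-1})$ and hence, after the $\Xi$-sandwich, the homoskedastic variance $V_0$.
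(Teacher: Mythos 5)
Your proposal reaches the correct conclusions and the variance calculations (in particular the cancellation of the cross term $n^{-2}\sum_{i\neq j}(X_ie_i)(X_je_j)^T$ under $\mathbb E[X_ie_i]=0$, which is indeed the crux of why $V_0$ emerges under CS) agree with the paper's Lemma on the moments of $T_{n2}$. However, your route for part (ii) is genuinely different from the paper's. You condition on $\mathcal D_n$, view $m^{1/2}c^TT_{n2}=\sum_{i<j}W_{ij}$ as a clean generalized quadratic form in the sketch randomness alone, invoke a de Jong--type CLT, and then decondition via a subsequence/dominated-convergence argument. The paper deliberately does \emph{not} condition on the data: it bundles each observation with the corresponding column of $\Pi$ into a single vector $W_i=(y_i,X_i^T,\Pi_{1i},\ldots,\Pi_{mi})^T$, which is i.i.d. across $i$ because the columns of a countsketch matrix are i.i.d. and independent of $\mathcal D_n$, writes $T_{n2}=n^{-1}\sum_{i<j}H(W_i,W_j)$ as a degenerate $U$-statistic with a symmetric kernel satisfying $\mathbb E[H(W_1,W_2)\mid W_1]=0$, and applies the CLT of Hall (1984) after verifying the ratio condition on $\mathbb E[G^2]$ and $\mathbb E[H^4]$. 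The paper explicitly contrasts its unconditional $U$-statistic approach with the conditional triangular-array approach of Ahfock, Astle and Richardson, which is essentially the one you propose. Your route buys conceptual transparency (the homoskedastic limit visibly comes from the hash scrambling pairing $X_i$ with an independent $e_j$) and would in principle extend to sketches whose columns are not i.i.d., but it costs you the deconditioning step and the verification of de Jong's conditions with data-dependent coefficients holding only in probability, both of which you acknowledge but leave at the level of a sketch; the paper's route avoids these issues entirely at the price of requiring i.i.d. columns of $\Pi$. For part (i) the difference is minor: you use a conditional Lindeberg CLT whereas the paper notes that the summands $\bigl(\tfrac{n}{m}B_{ii}-1\bigr)e_iX_i^Tc$ are themselves i.i.d. and applies the unconditional CLT directly.
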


Though Theorem \ref{corr-CLT-thm} indicates that both sampling schemes yield  asymptotically normal estimates, their variances are different, and normality  holds  for different reasons. The proof is given in the Appendix. Here, we sketch the main arguments.

 First, under BS, the sampling probability is determined by i.i.d. Bernoulli random variables with 
success probability $m/n$. Thus, $\Pi = \sqrt{\frac{n}{m}} B$ is an $n\times n$ matrix (not $m\times n$), where $B$ is a diagonal sampling matrix. We have 
\begin{align*}
\frac{1}{n} \left( X^T \Pi^T \Pi e - X^T e \right) 
&= \frac{1}{n} \sum_{i=1}^n    \left( \frac{n}{m} B_{ii} - 1 \right) X_i    e_i=T_{n1}.
\end{align*}
Since the summands are i.i.d. with mean zero  with  variance 
$( \frac{n}{m} - 1 ) \mathbb{E} ( e_i^2  X_i X_i^T)$, applying
 central limit theorem for i.i.d. observations yields the sandwich variance $V_1$.

Consider now CS.  Each column of its $\Pi$ has one non-zero entry  taking on value
 $\{ +1, -1\}$ randomly drawn with equal probability and  located  uniformly at random.  For such $\Pi$ and every nonzero $c \in \mathbb{R}^p$, 
\begin{align*}
&c^T(X^T \Pi^T \Pi e - X^T e) / n \\
& =
n^{-1} \sum_{i=1}^n   \bar{X}_j(c)  \left( \sum_{k=1}^{m}  \Pi_{ki}^2 - 1 \right)   e_{i} \\
& \;\;\; + 
n^{-1} \sum_{i=1}^n  \sum_{j=1, j \neq i}^n \sum_{k=1}^{m} \bar{X}_j(c)   \Pi_{kj} \Pi_{ki}  e_{i} \\
& =  T_{n1} + T_{n2},
\end{align*}
where $\bar{X}_i(c) :=  \sum_{u}^q c_u X_{iu}$ is a weighted sum of elements of the $i$-th row of $X$. 
The term  $T_{n1}$ is identically zero  because  there is only one non-zero entry per column of $\Pi$.  

To analyze $T_{n2}$,
let $W_i = (Y_i, X_i^T, \Pi_{1i},\ldots, \Pi_{mi})^T$. Since the columns of $\Pi$ are i.i.d.,   $\{W_i : i=1,\ldots,n \}$ are i.i.d.
Now let  $w = (y, x^T, \pi_1, \ldots, \pi_m)^T$ be a non-random index. 
Define
$
\tilde{H} (w_1, w_2) : =  \sum_{k=1}^{m} \bar{x}_{1}(c)  \pi_{k1}  S\pi_{k2} e_{i}
$
and
$
H (w_1, w_2):= \tilde{H} (w_1, w_2) + \tilde{H} (w_2, w_1).
$
We can write
\begin{align*}
T_{n2} 
&= n^{-1} \operatorname*{\sum \sum}_{1 \leq i < j \leq n}  H (W_i, W_j),
\end{align*}
noting that $H (w_1, w_2) = H (w_2, w_1)$,
and $\mathbb{E} ( H (W_1, W_2) | W_1) = \mathbb{E} ( H (W_1, W_2) | W_2) = 0$. Importantly,  $T_{n2} $ has now been represented as a degenerate $U$-statistic. In general, the asymptotic distribution of such statistics is either a weighted average of independent, centered chi-square random variables with  complex   weights, or  a centered normal distribution. 
 But if the conditions  given in \citet{hall1984central} are satisfied, the latter holds. Precisely,
\begin{align*}
\left\{ \frac{1}{2} \mathbb{E} [ H^2 (W_1, W_2)  ] \right\}^{-1/2}  T_{n2} \rightarrow_d N(0, 1).
\end{align*}
A  sufficient condition for this result which we verify in the Appendix is
\begin{align*}
\frac{\mathbb{E}[ G^2 (W_1, W_2) ]  + n^{-1}  \mathbb{E} [ H^4 (W_1, W_2)  ] }
{ \{ \mathbb{E} [ H^2 (W_1, W_2)  ] \}^2 } \rightarrow 0 \ \ \text{ as $n \rightarrow \infty$},
\end{align*}
where $ G (w_1, w_2) := \mathbb{E} [ H (W_1, w_1) H (W_1, w_2)  ]$.
Furthermore, we also verify that  for $W_i \neq W_j$, 
$$
\frac{1}{2} \mathbb{E} [ H^2 (W_i, W_j)  ]
= \frac{1}{m} \mathbb{E} ( \bar{X}^2_i(c))  \mathbb{E} (e_i^2 ).
$$

Note that  $\mathbb E (e_i^2)$ appears separately from $\mathbb E(\bar X_i^2(c))$. This is key to the claim in Theorem~\ref{corr-CLT-thm}   that when $\tilde \beta$ is based on CS, $m^{1/2}(\tilde\beta-\hat\beta)\rightarrow_d N(0,E[e_i^2] \Xi)=N(0,V_0) $. Analogous arguments show that each entry of $(\tilde X^T \tilde X- X^T X)$ can also be written as a degenerate $U$-statistic and  $\tilde \Xi_n-\hat \Xi=o_p(1)$, which is needed for consistent estimation of $V_0$ and $V_1$. 

As discussed in  \citet{charikar2004finding} and \citet{clarkson2013low,clarkson2017low}, the sparsity of $\Pi$ significantly reduces  the run time required of the countsketch  to compute $\Pi A$ to O(nnz(A)), where nnz(A) is the number of non-zero entries of $A$.  Another appeal of countsketch is that the sketches can be obtained  by streaming without constructing $\Pi$. Here, we show that  countsketch  removes heteroskedasticity which is appealing because it simplifies inference.  In the next section,  we study the mean-squared sketching error and  show that part (i) of Theorem  \ref{corr-CLT-thm} also holds for other $\Pi$s in  the first class,  while part (ii) holds for other $\Pi$s in the second class.   Section~\ref{sec:2sls} then shows that these  results  also hold when the regressors are not exogenous.

\section{The Mean-Squared Sketching Error}\label{sec:mse}

For a random variable $G$, let $\mathrm{MSE} ( G ) = [\mathbb{E} (G)]^2 + \mathrm{Var} (G)$ denote the mean squared error. 
We now analyze the asymptotic behavior of mean squared sketching errors of 
$(U^T \Pi^T \Pi V - U^T V)/n$, where 
 $U \in \mathbb{R}^n$ and $V \in \mathbb{R}^n$ denote vectors of 
i.i.d. random variables $(U_i, V_i) \in \mathbb{R}^2$ that are independent of $\Pi$, with
 $\mathbb{E} (U_i^4) < \infty$ and $\mathbb{E} (V_i^4) < \infty$. 
Recall  that $m = m_n \rightarrow \infty$ but $m/n \rightarrow 0$ as $n \rightarrow \infty$.

\subsection{Random Sampling with Replacement (RS)}


For sketching by random sampling with replacement (RS),  we suppose that
for each $t = 1,\ldots,m$, 
we sample $k_t$ from $[n]$ with probability $p_i := \mathrm{Pr}(k_t = i)$ independently and with replacement.
The random matrix $\Pi \in \mathbb{R}^{m \times n}$ is then
\begin{align*}
\Pi = 
\sqrt{\frac{n}{m}} 
\begin{pmatrix}
\iota_{k_1} &
\ldots & 
\iota_{k_m}
\end{pmatrix}^T,	
\end{align*}
where $\iota_k$ is the $k$-th column vector of the $n \times n$ dimensional identity matrix. Sketching schemes of the RS class have properties characterized by Lemma \ref{moments-unif-lem} in the Appendix. Importantly, each $\Pi$ in this class has $T_{2n}=0$ and as a result, $T_{n1}$ is the only term we need to consider.
An important example in the RS class is  uniform sampling with replacement with $p_i = n^{-1}$.



\begin{theorem}\label{moments-thm}
(i) If  $\Pi$ is a random matrix satisfying {RS} and $\sum_{i=1}^n p_i^2 = o (m^{-1})$,
then, as $n \rightarrow \infty$, 
\begin{align*}
\mathrm{MSE} \left[ \sqrt{m} \left( U^T \Pi^T \Pi V - U^T V \right) / n \right] 
\rightarrow \mathrm{Var}( U_i V_i ).
\end{align*}
(ii) If $\Pi$ is Bernoulli sampling matrix (BS), 
then, as $n \rightarrow \infty$,
\begin{align*}
&\mathrm{MSE} \left[ \sqrt{m} \left( U^T \Pi^T \Pi V - U^T V \right) / n \right] 
\rightarrow \mathbb{E} ( U_i^2 V_i^2 ).
\end{align*}
\end{theorem}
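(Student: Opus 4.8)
The plan is to reduce everything to the diagonal term $T_{n1}$ and then compute its mean-squared error directly. Recall the decomposition $n^{-1}(U^T \Pi^T \Pi V - U^T V) = T_{n1} + T_{n2}$. For both schemes in the statement each row of $\Pi$ has a single nonzero entry (for BS, $\Pi$ is diagonal), so $\varphi_{ij}=0$ whenever $i\neq j$ and hence $T_{n2}=0$ identically. Writing $Z_i := U_i V_i$ and $\psi_i := \sum_k \Pi_{ki}^2 - 1$, the scaled sketching error is exactly $\sqrt m\,T_{n1} = \sqrt m\, n^{-1}\sum_{i=1}^n \psi_i Z_i$. The $Z_i$ are i.i.d., independent of $\Pi$, and have finite variance by Cauchy--Schwarz since $\mathbb E(U_i^4),\mathbb E(V_i^4)<\infty$. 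Thus the whole problem is to evaluate $\mathrm{MSE}(\sqrt m\,T_{n1}) = \{\mathbb E(\sqrt m\,T_{n1})\}^2 + \mathrm{Var}(\sqrt m\,T_{n1})$ under each scheme, using the moments of $\psi_i$ supplied by Lemma~\ref{moments-unif-lem}.

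First I would show the mean is exactly zero. By independence, $\mathbb E(\sqrt m\,T_{n1}) = \sqrt m\, n^{-1}\,\mathbb E(Z_i)\sum_i\mathbb E(\psi_i)$. Under RS the counts sum to $m$, so $\sum_i\psi_i = 0$ deterministically and in particular $\sum_i\mathbb E(\psi_i)=0$; under BS, $\mathbb E(\psi_i)=0$ for each $i$ because $\mathbb E(B_{ii})=m/n$. Hence $\mathrm{MSE}=\mathrm{Var}$ in both cases. The core of the proof is therefore the variance. Expanding the second moment and using $Z_i \perp \Pi$ together with the i.i.d.\ structure of the $Z_i$, I would regroup it into the compact identity
\begin{align*}
\mathrm{Var}\Big(\sum_i \psi_i Z_i\Big) = \mathrm{Var}(Z_i)\sum_i \mathbb E(\psi_i^2) + \{\mathbb E(Z_i)\}^2\,\mathbb E\Big[\Big(\sum_i\psi_i\Big)^2\Big],
\end{align*}
which cleanly separates the contribution of $\mathrm{Var}(Z_i)$ from that of the squared mean $\{\mathbb E(Z_i)\}^2$.

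The two schemes are then distinguished entirely by the second term. For BS the $\psi_i$ are i.i.d.\ and mean-zero, so $\mathbb E[(\sum_i\psi_i)^2]=\sum_i\mathbb E(\psi_i^2)$, the two contributions recombine into $\mathbb E(Z_i^2)\sum_i\mathbb E(\psi_i^2)$, and substituting $\mathbb E(\psi_i^2)=n/m-1$ and scaling by $m/n^2$ gives $(1-m/n)\,\mathbb E(Z_i^2)\to\mathbb E(U_i^2V_i^2)$. For RS the decisive fact is once more $\sum_i\psi_i=0$, which annihilates the entire $\{\mathbb E(Z_i)\}^2$ term and leaves only $\mathrm{Var}(Z_i)\sum_i\mathbb E(\psi_i^2)$. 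Computing $\sum_i\mathbb E(\psi_i^2)$ from the multinomial second moments gives a closed form in $\sum_i p_i^2$, $m$, and $n$; scaling by $m/n^2$ produces the expression $1-\sum_i p_i^2 + m\sum_i p_i^2 - m/n$, which tends to $1$ under $\sum_i p_i^2 = o(m^{-1})$ and $m/n\to 0$. Hence the RS limit is $\mathrm{Var}(Z_i)=\mathrm{Var}(U_iV_i)$.

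The main obstacle is the RS case, precisely because the $\psi_i$ are correlated through the multinomial count vector, so that a naive term-by-term treatment would spuriously retain the $\mathbb E(U_i^2V_i^2)$ contribution that legitimately appears under BS. The crucial and somewhat delicate point is the exact constraint $\sum_i\psi_i=0$, which forces the squared-mean part to cancel and is exactly what separates the RS limit $\mathrm{Var}(U_iV_i)$ from the BS limit $\mathbb E(U_i^2V_i^2)$. Verifying that this constraint removes the correct terms, and that the remaining corrections are $o(1)$ under $\sum_i p_i^2 = o(m^{-1})$, is where the real work lies.
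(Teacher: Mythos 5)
Your proposal is correct and follows essentially the same route as the paper: both reduce to the diagonal term $T_{n1}$ (since $\varphi_{ij}=0$ for sampling schemes), show the mean is zero, and compute the variance from the second moments of $\psi_i$, arriving at the same closed form $1-m/n+(m-1)\sum_i p_i^2$ for RS and $1-m/n$ for BS. The only difference is bookkeeping: the paper explicitly evaluates the cross-moments $\mathbb{E}(\psi_i\psi_j)$ from the multinomial structure, whereas you eliminate them via the exact constraint $\sum_i \psi_i=0$ (equivalently $\sum_i K_i=m$), a slightly cleaner way to see why the $\{\mathbb{E}(U_iV_i)\}^2$ contribution survives under BS but cancels under RS.
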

The mean-squared errors for RS and BS are the same if $U_i V_i$ is mean zero.

Theorem~\ref{moments-thm} is useful in two ways.
First, let $U = c^T X^T$ and $V = X c$.  Under Assumption~\ref{OLS-textbook} and for all nonzero $c \in \mathbb{R}^p$,  Theorem~\ref{moments-thm} yields
\begin{align*}
\mathrm{MSE} \left[ \sqrt{m} (c^T \tilde{X}^T \tilde{X} c  - c^T X^T X c) / n \right] = O(1).
\end{align*}

By Chebyshev's Inequality, 
$( \tilde X^T\tilde X - X^T X ) /n  = O_p ( m^{-1/2} )$.
Similarly, for  $U=X c$ and $V=e$, the theorem implies
$\tilde{\Upsilon}_n - \hat \Upsilon_n = O_p ( m^{-1/2} )$.
Applying continuous mapping theorem gives the result stated earlier that
$\tilde{\beta}_{OLS} - \hat{\beta}_{OLS} 
= \Xi ( \tilde{\Upsilon}_n - \hat \Upsilon_n)
+ o_p ( m^{-1/2} )$.

It is known that   the efficient estimator under heteroskedasticity is  the generalized least squares (GLS), defined as
 \[\hat\beta_{GLS}=(X^T\Omega^{-1} X)^{-1} X^T\Omega^{-1} y,\] 
where $\Omega$ is $n\times n$ a diagonal matrix with $\Omega_{ii}=\sigma^2_i$. GLS  weights each observation  with $\sigma_i^{-1}$ so that the  errors in the weighted regression are homoskedastic. Now the OLS estimator applied to sketched data can be written as 
\[ \tilde\beta_{OLS}= (X^T\Pi^T\Pi X)^{-1} X^T\Pi^T \Pi y.\]
A question of interest is whether $\Pi^T\Pi$ can play the role of $\Omega^{-1}$.  The theorem sheds light on this problem as its second use    is to obtain the asymptotic variance of the sketched estimator.
To this end, let  again  $U_i=c^T X_i$ and $V_i=e_i$.  Assuming $\mathbb E(g_i)=0$ where $g_i=e_i X_i$,   RS and  BS imply:
\begin{align*} 
\mathrm{MSE} \left[ \sqrt{m} \, c^T( \tilde{\Upsilon}_n -\hat \Upsilon_n) \right] =
\mathbb{E} ( e_i^2 c^T X_i  X_i^T c ). 
\end{align*}
The asymptotic standard error is generally the expectation of a product of $e_i^2$ and $(c^TX_i)^2$ and becomes the product of two expectations only under homoskedasticity when $\mathbb E[e_i^2 c^T X_iX_i^T c]=\mathbb E[e_i^2] \mathbb E[c^T X_iX_i^Tc]$. Thus, under RS and BS, the asymptotic variance of $\tilde\beta_{OLS}$ is $V_0$ only if homoskedasticity is explicitly imposed, implying that $\Pi^T\Pi$  corresponding to  random sampling  will not homogenize error variance in the same way that $\Omega^{-1}$ can.

It is noteworthy that even under homoskedsaticity,  we cannot always use a central limit theorem for i.i.d. data even if the full sample of data are i.i.d. because   the sampling scheme   may  induce dependence in the sketched data. Thus the asymptotic normality result can  only be analyzed on a case by case basis.   \citet{ma2020asymptotic} confronts a similar problem when studying the asymptotic distribution of  estimators in
 linear regressions under random sampling with replacement and  homoskedastic errors. Let 
$K_{i}$ and $p_i$, respectively, denote the number of times and the probability that $i^{t h}$ observation is sampled. 
Their estimator has $W =\operatorname{diag}\left\{K_{i} / ( m p_i ) \right\}_{i=1}^{n}$ playing the role of $\Pi^T\Pi$. Our Theorem 3.1 applies to their setup with  uniform sampling where $p_i=1/n, n\ge m$, but it would not apply  when $p_i$ is  data dependent. In this case, \citet{ma2020asymptotic}  also cannot use a central limit theorem for i.i.d. data. Instead, they  apply Hay\'ek-Sidak central limit theorem and use  Poissonization to account for dependence in the sketched data that arises from sampling.


\subsection{Random Projection (RP)}

Sketching schemes in the RP class have properties characterized by Lemma \ref{moments-RP-lem} in the Appendix if $\Pi \in \mathbb{R}^{m \times n}$ is a random matrix with the following properties:
\begin{assumption}\label{u-stat-mean-var}
\begin{itemize}
\item[(i)] 
$\mathbb{E} [ \Pi_{ki}  ] = 0$,
$\mathbb{E} [ \Pi_{ki}^2  ] = m^{-1}$
 for all $k \in [m]$ and all $i \in [n]$,
and
$\max_{(k,i) \in [m] \times [n]} \mathbb{E} [ \Pi_{ki}^4  ] = O (m^{-1})$;
\item[(ii)] 
$\mathbb{E} [   \Pi_{ki}     \Pi_{kj} ]  = 0$
and
$\mathbb{E} [ \Pi_{ki}^2\Pi_{kj}^2 ] = m^{-2}$
for all $k \in [m]$ and all $i \neq j \in [n]$;
\item[(iii)] 
$\mathbb{E} [ \Pi_{ki} \Pi_{kj} \Pi_{\ell p} \Pi_{\ell q} ] = 0$ 
for all $k \neq \ell \in [m]$ and all $i \neq j, p \neq q \in [n]$.
\end{itemize}
\end{assumption}



Under Assumption \ref{u-stat-mean-var}, $T_{n1} = O_p( n^{-1/2})$   is asymptotically negligible, and $T_{n2}$ becomes the leading term for RP.  
As discussed above, the $\Pi$ for the  CS only has one non-zero entry in  each column.
Since $\Pi_{ki} \Pi_{kj} = 0$ for all $k, i \neq j$, it is straightforward to check that the above conditions are satisfied.  For Gaussian random projections, 
\[ \text{GP}: \Pi_{ki}\sim N(0,m^{-1}).\]
Since  all elements of $\Pi$ are i.i.d. with mean zero, variance $m^{-1}$ and the fourth moment $O(m^{-1})$, the conditions are also satisfied.  The
SRHT  has
 \[ \text{SRHT}: \Pi = \sqrt{\frac{n}{m} } S H D,\]
 where $S \in \mathbb{R}^{m \times n}$ is a  uniform sampling matrix with replacement,
 $H  \in \mathbb{R}^{n \times n}$ is a normalized Walsh-Hadamard transform matrix,
 and $D \in \mathbb{R}^{n \times n}$ is a diagonal Rademacher matrix with i.i.d. entries of   $\pm 1$. 
The Appendix shows that the conditions for RP hold for SRHT.
 
 


The following theorem gives the asymptotic  mean squared sketching errors of RP schemes.

\begin{theorem}\label{moments-thm-rp}
 If $\Pi$ is a random matrix satisfying {RP},
then, as $n \rightarrow \infty$,
\begin{itemize}
\item[(i)]
$\mathrm{MSE} \left[ \sqrt{m} \left( U^T \Pi^T \Pi V - U^T V \right) / n \right] \\
 \rightarrow \{ \mathbb{E} ( U_i^2)  \mathbb{E} (V_i^2 ) + [\mathbb{E} ( U_i V_i )]^2 \}.$
\item[(ii)] If, in addition, and $\mathbb E[e_iX_i]=0$ and the columns of $\Pi$ are i.i.d., then $m^{1/2} ( \tilde{\beta}_{OLS} - \hat{\beta}_{OLS} \,)  \rightarrow_d N (0, V_0 ).$
\end{itemize}
\end{theorem}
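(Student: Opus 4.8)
The plan for part~(i) is a direct mean-and-variance computation on the decomposition $\bigl(U^{T}\Pi^{T}\Pi V-U^{T}V\bigr)/n=T_{n1}+T_{n2}$. First I would verify unbiasedness: because $\Pi$ is independent of the data, $\mathbb{E}[\psi_i]=m\cdot m^{-1}-1=0$ by Assumption~\ref{u-stat-mean-var}(i) and $\mathbb{E}[\varphi_{ij}]=0$ for $i\neq j$ by Assumption~\ref{u-stat-mean-var}(ii), so $\mathbb{E}[T_{n1}]=\mathbb{E}[T_{n2}]=0$ and the mean-squared error equals the variance; it then suffices to find the limit of $m\,\mathrm{Var}(T_{n1})+2m\,\mathrm{Cov}(T_{n1},T_{n2})+m\,\mathrm{Var}(T_{n2})$. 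The first two terms are asymptotically negligible: using the moment facts collected in Lemma~\ref{moments-RP-lem}---in particular $\mathbb{E}[\psi_i^{2}]=O(1)$ (from $\sum_k\mathbb{E}[\Pi_{ki}^{4}]=O(1)$) and $\mathbb{E}[\psi_i\psi_j]=0$ for $i\neq j$ (each $\psi_i$ depends only on column $i$, and $\psi_i\equiv 0$ for countsketch)---one gets $\mathrm{Var}(T_{n1})=O(n^{-1})$, hence $m\,\mathrm{Var}(T_{n1})=O(m/n)\to 0$ under Assumption~\ref{OLS-textbook}(iv), while the covariance vanishes because the same-row cross-moment $\mathbb{E}[\Pi_{ki}^{2}\Pi_{kj}\Pi_{k\ell}]$ ($j\neq\ell$) is zero by the sign/mean-zero structure of $\Pi$ and the distinct-row piece factors through $\mathbb{E}[\varphi_{j\ell}]=0$.

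The substance is the limit of $m\,\mathrm{Var}(T_{n2})=m\,\mathbb{E}[T_{n2}^{2}]$. Expanding the square produces the quadruple sum $n^{-2}\sum_{i\neq j}\sum_{i'\neq j'}\mathbb{E}[\varphi_{ij}\varphi_{i'j'}]\,\mathbb{E}[U_iV_jU_{i'}V_{j'}]$, and the crux is to classify the index patterns. Only the two ``matched'' configurations survive, $(i',j')=(i,j)$ and $(i',j')=(j,i)$: for each, $\mathbb{E}[\varphi_{ij}^{2}]=m^{-1}$, since the same-row pairs contribute $m\cdot m^{-2}$ via Assumption~\ref{u-stat-mean-var}(ii) while the distinct-row pairs vanish by Assumption~\ref{u-stat-mean-var}(iii), and the data factors evaluate to $\mathbb{E}(U_i^{2})\mathbb{E}(V_i^{2})$ and $[\mathbb{E}(U_iV_i)]^{2}$ respectively. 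Every unmatched pattern (three or four distinct indices) carries a $\Pi$-moment forced to zero by Assumption~\ref{u-stat-mean-var}(ii)--(iii) together with the within-row odd-moment vanishing; this step is essential precisely because $U_i,V_i$ are \emph{not} assumed mean-zero, so the cancellation must come from $\Pi$ rather than the data. Summing the $n(n-1)$ matched terms against $n^{-2}m\cdot m^{-1}$ yields the stated limit $\mathbb{E}(U_i^{2})\mathbb{E}(V_i^{2})+[\mathbb{E}(U_iV_i)]^{2}$.

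For part~(ii) I would first apply part~(i) twice---with $(U_i,V_i)=(c^{T}X_i,\,c^{T}X_i)$ and with $(U_i,V_i)=(c^{T}X_i,\,e_i)$---to obtain $(\tilde X^{T}\tilde X-X^{T}X)/n=O_p(m^{-1/2})$ and $\tilde\Upsilon_n-\hat\Upsilon_n=O_p(m^{-1/2})$, which (with $m\to\infty$ supplying the subspace embedding) justify the linearization $m^{1/2}(\tilde\beta_{OLS}-\hat\beta_{OLS})=\Xi\,m^{1/2}(\tilde\Upsilon_n-\hat\Upsilon_n)+o_p(1)$ stated earlier. By the Cram\'er--Wold device it then suffices to show $m^{1/2}c^{T}(\tilde\Upsilon_n-\hat\Upsilon_n)\rightarrow_d N\bigl(0,\,\mathbb{E}(e_i^{2})\,c^{T}\mathbb{E}(X_iX_i^{T})c\bigr)$ for every fixed $c$. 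Because the columns of $\Pi$ are i.i.d., the leading term $T_{n2}$ of $c^{T}(\tilde\Upsilon_n-\hat\Upsilon_n)$ is a degenerate $U$-statistic in $W_i=(e_i,X_i^{T},\Pi_{1i},\ldots,\Pi_{mi})^{T}$, exactly as in the countsketch argument of Theorem~\ref{corr-CLT-thm}. I would then invoke the central limit theorem of \citet{hall1984central}, whose limiting variance is $\tfrac12\mathbb{E}[H^{2}(W_1,W_2)]$; part~(i) already identifies $m\,\mathrm{Var}(T_{n2})\to\mathbb{E}\bigl((c^{T}X_i)^{2}\bigr)\mathbb{E}(e_i^{2})$, the term $[\mathbb{E}(U_iV_i)]^{2}=[c^{T}\mathbb{E}(X_ie_i)]^{2}$ dropping out under exogeneity $\mathbb{E}[e_iX_i]=0$. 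Finally $\Xi\,\mathbb{E}(X_iX_i^{T})\,\Xi=\Xi=[\mathbb{E}(X_iX_i^{T})]^{-1}$ collapses the sandwich to $V_0=\mathbb{E}(e_i^{2})[\mathbb{E}(X_iX_i^{T})]^{-1}$, giving the claim.

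I expect the main obstacles to be, in part~(i), the combinatorial bookkeeping of the quadruple sum---verifying that every unmatched index pattern produces a vanishing $\Pi$-moment without leaning on mean-zero data---and, in part~(ii), checking the sufficient condition $\{\mathbb{E}[G^{2}(W_1,W_2)]+n^{-1}\mathbb{E}[H^{4}(W_1,W_2)]\}/\{\mathbb{E}[H^{2}(W_1,W_2)]\}^{2}\to 0$ of \citet{hall1984central} for a \emph{general} RP scheme rather than only countsketch, which requires moment bounds on $G$ and $H$ combining Assumption~\ref{u-stat-mean-var} with the finite fourth-moment conditions of Assumption~\ref{OLS-textbook}(ii).
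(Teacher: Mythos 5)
Your proposal is correct and follows essentially the same route as the paper: part (i) is the mean-zero-plus-variance computation in which only the matched index pairs survive via $\mathbb{E}[\varphi_{ij}^2]=m^{-1}$ (the paper's Lemma~\ref{moments-RP-lem}, with $T_{n1}$ negligible), and part (ii) is the linearization plus the degenerate $U$-statistic CLT of \citet{hall1984central}, with the ratio condition verified through bounds on $\mathbb{E}[G^2]$ and $\mathbb{E}[H^4]$ exactly as in Lemmas~\ref{lem:Gn-rate} and~\ref{lem:Fn-rate}. The only cosmetic difference is that you argue the $T_{n1}$--$T_{n2}$ cross term vanishes via a within-row odd-moment condition not literally listed in Assumption~\ref{u-stat-mean-var}, whereas the paper sidesteps this by bounding the covariance with Cauchy--Schwarz, which suffices since $m\sqrt{\mathrm{Var}(T_{n1})\mathrm{Var}(T_{n2})}=O(\sqrt{m/n})\to 0$.
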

The limiting MSE of {RP}  is simply the product between
two marginal expectations when $\mathbb{E} ( U_i V_i ) = 0$ (and not the expectation of the product).
 It implies
\[\mathrm{MSE} \left[ \sqrt{m} \, c^T( \tilde{\Upsilon}_n -\hat \Upsilon_n) \right]= \mathbb{E} ( e_i^2 )  \mathbb{E} ( c^T X_i  X_i^T c )\]
and is the reason why  the asymptotic variance for $\tilde\beta_{OLS}$ for RP schemes is $V_0$.

If $e_i^2$ and $(c^T X_i)^2$ are positively (respectively, negatively) correlated,
the limiting MSE of {RP} is smaller (respectively, larger) than that of {RS} and {BS}.
The limiting MSE is the same if $e_i^2$ and $(c^T X_i)^2$ are uncorrelated.


Asymptotic normality of $\tilde\beta$ can be established by applying a central limit theorem for degenerate $U$-statistic  if  the columns of $\Pi$ are i.i.d., as reported in part (ii) of Theorem~\ref{moments-thm-rp}.
The SRHT and SRFT are  not covered by this result because the columns of their $\Pi$ matrix  are not i.i.d. and requires a limit theorem for a particular type of mixing data. In general, establishing asymptotic normality of $\tilde\beta$ based on SRHT or SRFT require different proof techniques. The approach taken in \citet{ahfock-astle-richardson}  
is  to condition on the  data $\mathcal{D}_n$ and apply  a central limit theorem for  a triangular array of random variables.
We do not condition on the data and appeal to the theory of degenerate $U$-statistics. Though  deriving distribution theory for the  SRHT and SRFT estimates  is not straightforward, we will show in simulations  that their finite sample properties are similar to those of CS.

\section{Two-Stage Least Squares}\label{sec:2sls}

The 2SLS estimator  is appropriate when $\mathbb E(X_ie_i) \ne 0$ but exongeous instruments $Z_i$ satisfying $\mathbb E(Z_ie_i)=0$ are available. The 2SLS estimator is 
\[\hat\beta_{2SLS}=(X^TP_Z X)^{-1} X^T P_Z y,\] where $P_Z:=Z(Z^T Z)^{-1}Z^T$ is the projection matrix. 
The estimator
 first projects on $Z$ to  purge  the variations in $X$ correlated with $e$, and in the second step replaces $X$ with  $\hat X=P_Z X$.  
Let  $\hat{\Upsilon}_n := {{Z}}^T   {{e}}/n$ and
$\hat{\Xi}_n := [( X^T  Z/n) ( Z^T Z/n)^{-1} ( Z^T X/n)]^{-1} ( X^T  Z/n) ( Z^T  Z/n)^{-1}$. Analyzing    $\hat\beta_{2SLS}-\beta_0=\hat A_n \hat g_n$ under Assumption~\ref{IV-textbook} given below, as $n \rightarrow \infty$, we have
\begin{align*}
\sqrt{n} (\hat{\beta}_{2SLS} - \beta_0) \rightarrow_d N (0,W_1), 
\end{align*}
where $W_1 := A \,\mathbb{E} (e_i^2 Z_i Z_i^T ) \,A^T$ with
\begin{align*}
A &:= [ \mathbb{E} ( X_i Z_i^T ) [ \mathbb{E} ( Z_i Z_i^T ) ]^{-1} \mathbb{E} ( Z_i X_i^T ) ]^{-1} \\
&\times \mathbb{E} ( X_i Z_i^T ) [ \mathbb{E} ( Z_i Z_i^T ) ]^{-1}.
\end{align*}
Under homoskedasticity, $\mathbb E(e_i^2|Z_i)=\sigma^2$ and $W_1$ reduces to
\[
W_0 := \mathbb{E} ( e_i^2 ) [ \mathbb{E} ( X_i Z_i^T ) [ \mathbb{E} ( Z_i Z_i^T ) ]^{-1} \mathbb{E} ( Z_i X_i^T ) ]^{-1}.
\]
A sketched version of the 2SLS estimator is
\begin{align*}
\tilde\beta_{2SLS}:= (\tilde X^T P_{\tilde Z} \tilde X)^{-1} \tilde X^T P_{\tilde Z} \tilde y.
\end{align*}
 
We now provide  some  algorithmic results not previously documented in the literature.

\begin{assumption}\label{embed}
Let data $\mathcal{D}_n=\{ (y_i,X_i,Z_i) \in \mathbb{R}^{1+p+q}
: i=1,\ldots,n \}$ be fixed, $Z^TZ$ and $X^T P_Z X$ are non-singular, and $Z$ has singular value decomposition $Z=U_Z \Sigma_Z V_Z^T$. 
For  given  constants $\varepsilon_1, \varepsilon_2, \varepsilon_3, \delta \in (0,1/2)$, the following holds jointly with probability at least $1-\delta:$
\begin{align*}
&(i)  \; \left\| U_Z^T \Pi^T \Pi  U_Z -  I_q \right\|_2 \leq  \varepsilon_1, \\
&(ii)  \; \left\| U_Z^T \Pi^T \Pi  U_X -  U_Z^T U_X \right\|_2 \leq  \varepsilon_2, \\
&(iii)  \; \left\| U_Z^T \Pi^T 
\Pi \hat{e} -  U_Z^T \hat{e} \right\| \leq  \varepsilon_3  \left\|  \hat{e} \right\|, \\
&(iv)\; \sigma_{\min}^2 ( U_Z^T U_X ) 
\geq 2 f_1 (\varepsilon_1,  \varepsilon_2),
\end{align*}
where $f_1 (\varepsilon_1,  \varepsilon_2) := [\varepsilon_1 + \varepsilon_2 (\varepsilon_2  + 2)]/({1-\varepsilon_1})$.
\end{assumption}

Low level conditions for Assumption \ref{embed}(i)-(iii) are given in
 \citet{cohen_et_al:ICALP2016}, among others. 
Assumption \ref{embed}(i) is equivalent to the statement that the all eigenvalues of
$U_Z^T \Pi^T \Pi  U_Z$ are bounded between $[1 - \varepsilon_1, 1 + \varepsilon_1]$.
This ensures that 
$\tilde{{Z}}^T \tilde{{Z}}$ is non-singular with probability at least $1-\delta$. Part (iv)  strengthens non-singularity of $X^T P_Z X$ to require that $\sigma_{\min}^2 ( U_Z^T U_X )$ is strictly positive and bounded below by the constant  $2 f_1 (\varepsilon_1,  \varepsilon_2)$.

\begin{theorem}\label{main-thm}
Under  Assumptions \ref{embed},
 the following holds  with probability at least $1-\delta:$
\begin{align*}
\norm{ \tilde{\beta}_{2SLS} - \hat{\beta}_{2SLS} }
&\leq 
\frac{f_2 (\varepsilon_1,  \varepsilon_2) + \varepsilon_3 \left\|  \hat{e} \right\| 
\left[ 1 +
f_2 (\varepsilon_1,  \varepsilon_2)   \right]}{ \sigma_{\min}(X) \sigma_{\min}^2 ( U_Z^T U_X ) } \\
& \times \left[ 1+ \frac{2 f_1 (\varepsilon_1,  \varepsilon_2)}{ \sigma_{\min}^2 ( U_Z^T U_X ) } \right],
\end{align*}
where $f_2 (\varepsilon_1,  \varepsilon_2) :=  \varepsilon_2 + \varepsilon_1/(1-\varepsilon_1)  + \varepsilon_2 \varepsilon_1/(1-\varepsilon_1).$
\end{theorem}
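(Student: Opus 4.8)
The plan is to pass to the coordinate system furnished by the singular value decompositions $X=U_X\Sigma_XV_X^T$ and $Z=U_Z\Sigma_ZV_Z^T$, in which 2SLS becomes an ordinary least squares fit and the theorem reduces to a matrix perturbation estimate. Since $P_Z=U_ZU_Z^T$, writing $G:=U_Z^TU_X$ and $c:=U_Z^Ty$ gives $\hat\beta_{2SLS}=V_X\Sigma_X^{-1}(G^TG)^{-1}G^Tc$. Because $\mathrm{col}(\tilde Z)=\mathrm{col}(\Pi U_Z)$, the sketched projector is $P_{\tilde Z}=\Pi U_ZM^{-1}U_Z^T\Pi^T$ with $M:=U_Z^T\Pi^T\Pi U_Z$, which is invertible since Assumption~\ref{embed}(i) gives $M\succeq(1-\varepsilon_1)I_q$. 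Substituting and cancelling the common factor $V_X\Sigma_X$ yields $\tilde\beta_{2SLS}=V_X\Sigma_X^{-1}N^{-1}\tilde G^TM^{-1}\tilde c$, where $\tilde G:=U_Z^T\Pi^T\Pi U_X$, $\tilde c:=U_Z^T\Pi^T\Pi y$, and $N:=\tilde G^TM^{-1}\tilde G$. As $\norm{V_X\Sigma_X^{-1}}_2=\sigma_{\min}(X)^{-1}$, this isolates the factor $\sigma_{\min}(X)^{-1}$ and reduces the problem to bounding $\norm{\tilde\gamma-\gamma}$, where $\gamma:=(G^TG)^{-1}G^Tc$ and $\tilde\gamma:=N^{-1}\tilde G^TM^{-1}\tilde c$.

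The next step exploits the full-sample orthogonality. With $\hat e:=y-X\hat\beta_{2SLS}$, the normal equations $X^TP_Z\hat e=0$ translate into $G^Ta=0$ for $a:=U_Z^T\hat e$, and decomposing $y=X\hat\beta_{2SLS}+\hat e$ gives $c=G\gamma+a$ together with $\tilde c=\tilde G\gamma+\tilde a$, where $\tilde a:=U_Z^T\Pi^T\Pi\hat e$. Substituting into $\tilde\gamma$, the part in $\gamma$ cancels against $N=\tilde G^TM^{-1}\tilde G$ and one obtains the clean identity
\begin{align*}
\tilde\gamma-\gamma=N^{-1}\tilde G^TM^{-1}\tilde a.
\end{align*}
Hence $\norm{\tilde\gamma-\gamma}\le\norm{N^{-1}}_2\,\norm{\tilde G^TM^{-1}\tilde a}$, and it remains to bound these two factors using the perturbation estimates supplied by Assumption~\ref{embed}, namely $\norm{M-I_q}_2\le\varepsilon_1$, $\norm{\tilde G-G}_2\le\varepsilon_2$, $\norm{\tilde a-a}\le\varepsilon_3\norm{\hat e}$, together with the elementary consequences $\norm{G}_2\le1$, $\norm{\tilde G}_2\le1+\varepsilon_2$, $\norm{M^{-1}}_2\le(1-\varepsilon_1)^{-1}$, and $\norm{M^{-1}-I_q}_2\le\varepsilon_1/(1-\varepsilon_1)$.

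For $\norm{N^{-1}}_2$ I would first show $\norm{N-G^TG}_2\le f_1(\varepsilon_1,\varepsilon_2)$ via the decomposition
\begin{align*}
N-G^TG=\tilde G^TM^{-1}(\tilde G-G)+(\tilde G-G)^TM^{-1}G+G^T(M^{-1}-I_q)G,
\end{align*}
whose three spectral norms are at most $\varepsilon_2(1+\varepsilon_2)/(1-\varepsilon_1)$, $\varepsilon_2/(1-\varepsilon_1)$, and $\varepsilon_1/(1-\varepsilon_1)$, summing to exactly $f_1=[\varepsilon_1+\varepsilon_2(\varepsilon_2+2)]/(1-\varepsilon_1)$. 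Weyl's inequality then gives $\sigma_{\min}(N)\ge\sigma_{\min}^2(U_Z^TU_X)-f_1\ge\tfrac12\sigma_{\min}^2(U_Z^TU_X)>0$ by Assumption~\ref{embed}(iv), and the elementary bound $1/(s-f)\le s^{-1}(1+2f/s)$ valid for $s\ge2f$ produces the bracketed factor $\sigma_{\min}^{-2}(U_Z^TU_X)[1+2f_1/\sigma_{\min}^2(U_Z^TU_X)]$. For the numerator I would split $\tilde a=a+(\tilde a-a)$; writing the first piece $\tilde G^TM^{-1}a$ as $(\tilde G-G)^TM^{-1}a+G^T(M^{-1}-I_q)a$, which is legitimate precisely because $G^Ta=0$, and using $\norm{a}=\norm{U_Z^T\hat e}\le\norm{\hat e}$ bounds it by $f_2\norm{\hat e}$, while the second piece is at most $(1+\varepsilon_2)(1-\varepsilon_1)^{-1}\varepsilon_3\norm{\hat e}=(1+f_2)\varepsilon_3\norm{\hat e}$. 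The identities $(\varepsilon_1+\varepsilon_2)/(1-\varepsilon_1)=f_2$ and $(1+\varepsilon_2)/(1-\varepsilon_1)=1+f_2$ are what make these collapse to the stated numerator; multiplying the three displays together delivers the theorem.

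The main obstacle I anticipate is organizing the perturbation bound on $N-G^TG$ so that the spectral norms telescope exactly into $f_1$ rather than a looser expression: the naive split $N-G^TG=\tilde G^T(M^{-1}-I_q)\tilde G+(\tilde G^T\tilde G-G^TG)$ attaches a factor $(1+\varepsilon_2)^2$ to the $\varepsilon_1$ term and leaves the $\varepsilon_2$ terms undivided, which does not reproduce $f_1$; centering the middle factor at $M^{-1}$ by inserting and subtracting $G^TM^{-1}G$ is essential to obtain the common denominator $(1-\varepsilon_1)$. A secondary point requiring care is guaranteeing invertibility of both $M$ and $N$, which is exactly why Assumption~\ref{embed}(i) and (iv) are imposed and why all four embedding events must be taken to hold jointly with probability at least $1-\delta$.
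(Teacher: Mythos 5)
Your proof follows essentially the same route as the paper's: both pass to the SVD coordinates to peel off the factor $\sigma_{\min}(X)^{-1}$ (the paper's Lemmas showing $X\hat\beta=U_X\hat\theta$ and $X\tilde\beta=U_X\tilde\theta$), both invoke the full-sample normal equations to reduce the difference to a single term $\tilde{A}^{-1}\tilde{B}$ (your $N^{-1}\tilde G^{T}M^{-1}\tilde a$, using $G^{T}a=0$), both obtain $\|\tilde A-\hat A\|_2\le f_1$ by centering the middle factor at $M^{-1}$, and both use $\sigma_{\min}(\tilde A)\ge\sigma_{\min}(\hat A)-\|\tilde A-\hat A\|_2\ge\tfrac12\sigma_{\min}^2(U_Z^{T}U_X)$ together with Assumption~\ref{embed}(iv). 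One small caveat: your (correct) accounting of the numerator yields $f_2\|\hat e\|+\varepsilon_3\|\hat e\|(1+f_2)$, whereas the theorem states $f_2+\varepsilon_3\|\hat e\|(1+f_2)$; the identical substitution of $1$ for $\|\hat e\|$ appears in the paper's own bound on $\|\tilde B-\hat B\|_2$ (terms carrying the factor $\|\hat B_1\|\le\|\hat e\|$ are recorded with factor $1$), so this is a discrepancy inherited from the paper rather than a gap in your argument --- but strictly speaking multiplying your three displays delivers the theorem only with $f_2$ replaced by $f_2\|\hat e\|$.
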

The sketched estimator $\tilde\beta_{2SLS}$  involves, firstly, a regression of $\tilde X$ on $\tilde Z$, and then a regression of $\tilde y$ on the fitted values in the first step. The estimator  thus depends on adequacy of subspace approximation in both steps.
Theorem~\ref{main-thm} provides a worst-case bound for $\tilde\beta_{2SLS}-\hat\beta_{2SLS}$ with the data $\mathcal{D}_n$ being fixed. It depends on
(i)  $\varepsilon_j, j=1,2,3$, 
(ii) variability of $\left\|  \hat{e} \right\|$, 
(iii) the signal from $X$ as given by  $\sigma_{\min}(X)$,  and
(iv) instrument strength as given by $\sigma_{\min} ( U_Z^T U_X )$.
The   sketched  estimator  
can be arbitrarily close to the full sample estimate with high probability, provided that the subsample size $m$ is sufficiently large, $X$ is linearly independent, and the instrument $Z$ is sufficiently relevant for $X$. 

Though  2SLS is a two step estimator, we can still write 
\begin{align*}
\tilde{\beta}_{2SLS} - \hat{\beta}_{2SLS} 
&= (\tilde{\Xi}_n - \hat \Xi_n) \hat\Upsilon_n + \hat\Xi_n ( \tilde{\Upsilon}_n - \hat \Upsilon_n) 
\\ &+ (\tilde{\Xi}_n -\hat \Xi_n) ( \tilde{\Upsilon}_n -\hat \Upsilon_n)
\end{align*}
as in the OLS case, but  now
$\tilde{\Upsilon}_n := \tilde{{Z}}^T   \tilde{{e}}/n$, and
$$
\tilde{\Xi}_n := [(\tilde X^T \tilde Z/n) (\tilde Z^T\tilde Z/n)^{-1} (\tilde Z^T\tilde X/n)]^{-1} (\tilde X^T \tilde Z/n) (\tilde Z^T \tilde Z/n)^{-1}.
$$  A statistical analysis of $\tilde \beta_{2SLS}$ requires additional  assumptions.


\begin{assumption}
\label{IV-textbook}
\begin{itemize}
\item[(i)] The data $\mathcal{D}_n := \{(y_i, X_i, Z_i) \in \mathbb{R}^{1+p+q}
: i=1,\ldots,n \}$ are i.i.d.  with $p \leq q$.  Furthermore,
$X$ and ${Z}$ have  singular value decomposition 
${X} = U_X \Sigma_X V_X^T$
and
${Z} = U_Z \Sigma_Z V_Z^T$.
\item[(ii)] 
 $\mathbb{E}(y_i^4)<\infty$, 
$\mathbb{E}(\| X_i \|^4)<\infty$,
$\mathbb{E}(\| Z_i \|^4)<\infty$,
and
$\mathbb{E} ( X_i X_i^T )$   and $\mathbb{E} ( Z_i X_i^T )$ have full rank $p$.
\item[(iii)] 
The random matrix $\Pi$ is independent of $\mathcal{D}_n$. 
\item[(iv)] 
$m = m_n \rightarrow \infty$ but $m/n \rightarrow 0$ as $n \rightarrow \infty$, while $p$ and $q$ are fixed.
\end{itemize}
\end{assumption}
Arguments similar to those used to prove Theorem~\ref{corr-CLT-thm} lead to the following.

\begin{theorem}\label{main-CLT-thm}
Let Assumption \ref{IV-textbook} hold and  $\mathbb E(Z_ie_i)=0$.   
If $\Pi$ is RP satisfying RP(i)-(iii) with columns that are i.i.d.
  \begin{itemize}
\item[(i)] Under BS, $m^{1/2} ( \tilde{\beta}_{2SLS} - \hat{\beta}_{2SLS} \,)  \rightarrow_d N (0, W_1 )$.
\item[(ii)] Under RP, $m^{1/2} ( \tilde{\beta}_{2SLS} - \hat{\beta}_{2SLS} \,)  \rightarrow_d N (0, W_0 )$.
\end{itemize}
\end{theorem}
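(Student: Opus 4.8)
The plan is to follow the template of the proof of Theorem~\ref{corr-CLT-thm}, substituting the 2SLS objects for their OLS counterparts: the instrument moment $\hat g_n = Z^T e/n$ plays the role of $X^T e/n$, and the matrix $\hat A_n$ defined before Assumption~\ref{IV-textbook} plays the role of $(X^TX/n)^{-1}$, with probability limit $A$. Starting from the decomposition
\[
\tilde\beta_{2SLS} - \hat\beta_{2SLS} = (\tilde A_n - \hat A_n)\hat g_n + \hat A_n(\tilde g_n - \hat g_n) + (\tilde A_n - \hat A_n)(\tilde g_n - \hat g_n),
\]
I would first isolate a single linear leading term. By the law of large numbers and the full-rank conditions in Assumption~\ref{IV-textbook}(ii), $\hat A_n \rightarrow_p A$; and since $\mathbb E(Z_i e_i)=0$ with finite fourth moments, the central limit theorem gives $\hat g_n = O_p(n^{-1/2})$.

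The auxiliary step is to show $\tilde A_n - \hat A_n = O_p(m^{-1/2})$. Each block that composes $\hat A_n$ and $\tilde A_n$ --- namely $Z^TZ/n$, $X^TZ/n$, $Z^TX/n$ and their sketched versions --- differs from its sketched analog by a term of the exact form $(U^T\Pi^T\Pi V - U^T V)/n$ with finite fourth moments, so Theorem~\ref{moments-thm} (for BS) and Theorem~\ref{moments-thm-rp} (for RP), together with Chebyshev's inequality, give each such difference as $O_p(m^{-1/2})$. Because $A$ is a smooth rational function of these moments whose denominators $\mathbb E(Z_iZ_i^T)$ and $\mathbb E(X_iZ_i^T)[\mathbb E(Z_iZ_i^T)]^{-1}\mathbb E(Z_iX_i^T)$ are nonsingular by assumption, the continuous mapping theorem propagates the rate, yielding $\tilde A_n - \hat A_n = O_p(m^{-1/2})$. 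Consequently the first term of the decomposition is $O_p((mn)^{-1/2})$, the cross term is $O_p(m^{-1})$, and $(\hat A_n - A)(\tilde g_n - \hat g_n) = O_p((mn)^{-1/2})$ --- all $o_p(m^{-1/2})$ --- so that
\[
\tilde\beta_{2SLS} - \hat\beta_{2SLS} = A(\tilde g_n - \hat g_n) + o_p(m^{-1/2}), \qquad \tilde g_n - \hat g_n = (Z^T\Pi^T\Pi e - Z^T e)/n.
\]

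For the leading term I would use the Cram\'er-Wold device. Fix a nonzero $c \in \mathbb R^p$ and set $d := A^T c \in \mathbb R^q$, so that $c^T A(\tilde g_n - \hat g_n)$ equals $(U^T\Pi^T\Pi V - U^T V)/n$ with $U_i = d^T Z_i$, $V_i = e_i$, and $\mathbb E(U_iV_i) = d^T\mathbb E(Z_ie_i) = 0$. Under BS, the term $T_{n2}$ vanishes and the expression reduces to a centered i.i.d.\ average; Theorem~\ref{moments-thm}(ii) gives limiting variance $\mathbb E(U_i^2V_i^2) = d^T\mathbb E(e_i^2 Z_iZ_i^T)d$ and the central limit theorem for i.i.d.\ observations gives normality. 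Under RP, $T_{n1}$ is asymptotically negligible and the leading $T_{n2}$ is a degenerate $U$-statistic whose \citet{hall1984central} conditions I verify exactly as in Theorem~\ref{corr-CLT-thm}(ii), now with $\bar Z_i(d) = d^T Z_i$ in place of $\bar X_i(c)$; Theorem~\ref{moments-thm-rp}(i) gives limiting variance $\mathbb E(U_i^2)\mathbb E(V_i^2) = \mathbb E(e_i^2)\,d^T\mathbb E(Z_iZ_i^T)d$. Multiplying back through $c$ and using the identity $A\,\mathbb E(Z_iZ_i^T)\,A^T = [\mathbb E(X_iZ_i^T)[\mathbb E(Z_iZ_i^T)]^{-1}\mathbb E(Z_iX_i^T)]^{-1}$, which follows directly from the definition of $A$, the BS limit is $c^T A\,\mathbb E(e_i^2 Z_iZ_i^T)\,A^T c = c^T W_1 c$ and the RP limit is $\mathbb E(e_i^2)\,c^T A\,\mathbb E(Z_iZ_i^T)\,A^T c = c^T W_0 c$. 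Since $c$ is arbitrary, Cram\'er-Wold yields the two stated limits.

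The main obstacle is the auxiliary step, namely propagating the $O_p(m^{-1/2})$ rate through the nonlinear map defining $\hat A_n$. Unlike the OLS case, where $\hat A_n = (X^TX/n)^{-1}$ is a single inverse, here $A$ is assembled from products and inverses of three distinct moment matrices, so I must control each sketched block at the right rate and argue that the matrix-valued continuous mapping preserves it --- which rests essentially on the nonsingularity of both $\mathbb E(Z_iZ_i^T)$ and the reduced-form matrix $\mathbb E(X_iZ_i^T)[\mathbb E(Z_iZ_i^T)]^{-1}\mathbb E(Z_iX_i^T)$ guaranteed by Assumption~\ref{IV-textbook}(ii). By contrast, the degenerate $U$-statistic verification in the leading-term analysis is identical in form to the OLS argument and is therefore routine.
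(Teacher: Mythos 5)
Your proposal is correct and follows essentially the same route as the paper: the same three-term decomposition, reduction to the leading term $A(\tilde g_n - \hat g_n) + o_p(m^{-1/2})$ via the MSE bounds of Theorems~\ref{moments-thm} and~\ref{moments-thm-rp}, and then the i.i.d.\ CLT for BS versus the degenerate $U$-statistic CLT (Lemma~\ref{CLT-lemma}) for RP applied with $U_i = d^T Z_i$, $V_i = e_i$. Your block-by-block control of $\tilde A_n - \hat A_n$ and the explicit identity $A\,\mathbb E(Z_iZ_i^T)\,A^T = [\mathbb E(X_iZ_i^T)[\mathbb E(Z_iZ_i^T)]^{-1}\mathbb E(Z_iX_i^T)]^{-1}$ are details the paper leaves implicit ("as in the proof for OLS"), so you have if anything filled in more than the published argument.
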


Theorem~\ref{main-CLT-thm} provides statistical properties of the sketched 2SLS estimator in Theorem~\ref{main-thm} to complement    the algorithmic results.

Theorem~\ref{main-CLT-thm} states that when the data are sketched by RP,
 $\tilde{\beta}$  is asymptotically normally distributed 
with mean $\hat{\beta}$ and variance  $W_0/m$. Under our assumptions,  $W_0$ can be consistently estimated by
\begin{align*}
\hat{W}_0 := \hat{e}^T \hat{e} \left( {X}^T {Z} ( {Z}^T {Z} )^{-1} {Z}^T {X} \right)^{-1}, 
\end{align*}
where $\hat{e} := {y} - {X} \hat{\beta}$  (not the residuals from the second step).

 Interestingly, the asymptotic variance $W_0$ is the same as if the errors in the full sample regression were homoskedastic. But the result follows from estimation using sketched data rather than by assumption.  This is not the case when inference is based on    the full sample estimates, or the  estimates computed from sketched data  of the  RS type. In such cases,  a homoskedastic covariance weighting matrix would be inefficient since
$\mathbb{E} ( e_i^2 | Z_i ) \neq \mathbb{E} ( e_i^2 )$.

Our analysis can be extended to 
the two-sample 2SLS estimator analyzed in 
 \citet{AK:92,AK:95} and \citet{Inoue:Solon:10}.
However, it is not pursued for brevity of the paper.

In the econometrics literature, the instruments are said to be relevant if $\mathbb E[Z_iX_i^T]\ne 0$. The latter is formalized  by the rank condition in Assumption \ref{IV-textbook}(ii). Tests for instrument relevance usually require robust standard errors corresponding to the parameter estimates in  a regression of $X$ on $Z$  unless heteroskedasticity can be ruled out. An implication of our preceding analysis is that this is not necessary when the  regression is estimated on data sketched by RP, as will be illustrated below.

\section{Practical Inference}

In applications, researchers would like to test a hypothesis about $\beta_0$ using a sketched estimate, and our results provide all the quantities required for inference. 
 In the exogenous regressor case, we generically have
\begin{eqnarray*}
\tilde V_{m}^{-1/2}(\tilde\beta_{OLS}-\beta_0)\approx N(0,I_p) 
\end{eqnarray*}
where the form of $\tilde V_{m}$ depends on $\Pi$. For any $\Pi$ in BS or RP class,
we can use \citet{white1980}'s heteroskedasticity-consistent estimator: 
 \[\tilde V_m=\tilde V_{1,m}= (\tilde X^T\tilde X)^{-1} (\sum_{i=1}^m \tilde X_i\tilde X_i^T \tilde e_i^2) (\tilde X^T\tilde X)^{-1}.\]
For $\Pi$ in the RP class, we can let $\tilde s^2_{OLS}:=\frac{1}{m}\sum_{i=1}^m (\tilde y_i-\tilde X_i^T \tilde\beta_{OLS})^2$. Then
 without assuming homoskedasticity, 
\[\tilde V_m=\tilde V_{0,m}= \tilde s_{OLS}^2(\tilde X^T\tilde X)^{-1},\]

In the endogenous regressor case, $\tilde W_m^{-1/2}(\tilde \beta_{2SLS}-\beta_0)\approx N(0,I_p)$. For RP, we let  $\tilde s^2_{2SLS}:=\frac{1}{m}\sum_{i=1}^m ( \tilde{y}_i - \tilde{X}_i^T \tilde{\beta}_{2SLS})^2$ and
$
\tilde{W}_{0,m} := \tilde s^2_{2SLS} \left( \tilde{X}^T \tilde{Z} ( \tilde{Z}^T \tilde{Z} )^{-1} \tilde{Z}^T \tilde{X} \right)^{-1}.
$ For BS, we define  $\tilde W_{1m}$  from $\tilde W_1$.

Sketching estimators require a choice of $m$. From the algorithmic perspective, $m$ needs to be chosen as small as possible to achieve computational efficiency. 
 If $\Pi$ is constructed from SRHT, the size of $m$ is roughly  (ignoring the log factors) of order $q$ in the best case.  
The requirement for countsketch is more stringent 
 and is proved in the appendix (see Theorem~\ref{main-thm:cs}).
In view of this, we may set 
\begin{align*}
m_1 = C_m q \log q \; \text{ or } \; m_1 = C_m q^2, 
\end{align*}
where $C_m$ is a constant that needs to be chosen by a researcher. 
However, statistical analysis often cares about the variability of the estimates in repeated sampling and  a larger $m$ may be  desirable from the perspective  of statistical efficiency.   An \emph{inference-conscious}  guide $m_2$ can be obtained  in 
 \citet{LN:2020}  by targeting the power at $\bar\gamma$ of a one-sided $t$-test for given nominal size $\bar\alpha$. In particular, let $\tilde\beta$ be $\tilde\beta_{OLS}$ if $\mathbb E[X_i e_i]=0$ and let $\tilde\beta$ be $\tilde \beta_{2SLS}$ when $\mathbb E[X_i e_i]\ne 0$ but $\mathbb E[Z_i e_i]=0$. For
 pre-specified effect size $c^T(\beta^0-\beta_0)$, 
\begin{equation*}
  m_2(m_1)=m_1 S^2(\bar\alpha,\bar\gamma)
  \left[ \frac{\textsc{se}(c^T \tilde\beta)} {c^T(\beta^0-\beta_0)]} \right]^2,
\end{equation*}
where $S(\alpha,\gamma):=\Phi^{-1}(\gamma) +\Phi^{-1}(1-\alpha)$
and $\textsc{se}(c^T \tilde\beta)$ is the standard error of $c^T \tilde\beta$.

Alternatively, 
a data-oblivious  sketch size for a pre-specified  $\tau_2(\infty)$ is defined as
\begin{equation}
  \label{m2rule2}
  m_3=n\frac{S^2(\bar\alpha,\bar\gamma)}{\tau_2^2(\infty)}.
\end{equation}
Note that $m_3$  only requires the choice of $\bar \alpha, \bar\gamma,$ and $\tau_2(\infty)$ which, unlike $m_2$, can be computed without a preliminary sketch.
The condition $m / n \rightarrow 0$ can be viewed as $\tau_2 (\infty) \rightarrow \infty$ as $n \rightarrow \infty$.

\section{Monte Carlo Experiments}\label{sec:mc} 

In this section, we use Monte Carlo experiments 
 to establish that when the errors are homoskedastic, estimates  based on data sketched by random sampling or random projections will yield accurate inference. However,  when the errors are heteroskedastic, sketching by random sampling will yield tests with size distortions, rejecting with much higher probability than the nominal size, unless robust standard errors are used.

\subsection{When All the Regressors are Exogenous}  

We first consider the simulation design for which all the regressors are exogenous.
The regressors $X_i = (1, X_{2,i}, \ldots, X_{p,i})^T$ consist of a constant term and a $(p-1)$-dimensional 
random vector $(X_{2,i}, \ldots, X_{p,i})^T$ generated from a multivariate normal distribution with mean zero vector 
and the variance covariance matrix $\Sigma$, whose $(i,j)$ component is $\Sigma_{ij} = \rho^{|i-j|}$ with $\rho = 0.5$. 
The dependent variable is generated by 
\[
y_i = X_i^T \beta_0 + \sigma (X_i) e_i, 
\]
where
$\beta_0 = (0, 1, \ldots, 1)^T$,
and $e_i$ is generated from $N(0,1)$ independently from $X_i$. 
We consider two designs for $\sigma(X_i)$: (i) homoskedastic design $\sigma(X_i) = 1$ for all $i$
and
(ii) heteroskedastic design $\sigma(X_i) =  \exp ( X_{p,i} )$,
where $X_{p,i}$ is the $p$-th element of $X_i$.
Throughout the Monte Carlo experiment, we set
$n = 10^6$, $m = 500$, and $p = 6$.
There were 5,000 replications for each experiment.
Six sketching methods are considered: 
(i) Bernoulli sampling, 
(ii) uniform sampling, 
(iii) leverage score sampling and reweighted regression as in \citet{ma2020asymptotic};
(iv) countsketch,
(v) SRHT, 
(vi) subsampled randomized Fourier transforms using the real part of 
fast discrete Fourier transform (SRFT).
Table~\ref{tab:size:ols} reports the empirical size and power of the $t$-test.
The null and alternative hypotheses are that $H_0: c^T \beta_0 = 1$  vs. $H_1: c^T \beta_0 \neq 1$ with $c^T = (0,\ldots, 0, 1)$. Equivalently, the null hypothesis is $\beta_p=1$. The power is obtained for the null value $c^T \beta_0  = 1.1$ for the homoskedastic design
and $c^T \beta_0  = 1.4$ for the heteroskedastic design, respectively.
The nominal size is 0.05.

\begin{table}[tb]
\caption{OLS based t test for $H_0: \beta_p=1$ vs $H_1: \beta_p\ne 1$.  S.E.0 and S.E.1 refer to homoskedasticity-only and 
heteroskedasticity-consistent standard errors, respectively.}
\label{tab:size:ols}
\begin{center}
\begin{small}
\begin{sc}
\begin{tabular}{ccccc}
  \toprule
  & (1) & (2) & (3) & (4) \\
  \midrule
  &\multicolumn{2}{c}{Size} & \multicolumn{2}{c}{Power} \\
  &  s.e.0 & s.e.1  & s.e.0 & s.e.1 \\
  \midrule
 \multicolumn{5}{l}{(i) Homoskedastic Design}  \\ 
bernoulli & 0.046 & 0.050 & 0.490 & 0.496 \\ 
  uniform & 0.047 & 0.052 & 0.489 & 0.490 \\ 
  leverage & 0.045 & 0.053 & 0.483 & 0.513 \\ 
  countsketch & 0.049 & 0.051 & 0.479 & 0.489 \\ 
  srht & 0.056 & 0.061 & 0.492 & 0.498 \\ 
  srft & 0.055 & 0.057 & 0.484 & 0.489 \\ 
  \midrule
\multicolumn{5}{l}{(ii) Heteroskedastic Design} \\   
bernoulli & 0.310 & 0.047 & 0.713 & 0.436 \\ 
  uniform & 0.301 & 0.053 & 0.719 & 0.435 \\ 
  leverage & 0.183 & 0.051 & 0.727 & 0.529 \\ 
  countsketch & 0.054 & 0.057 & 0.813 & 0.812 \\ 
  srht & 0.054 & 0.056 & 0.804 & 0.809 \\ 
  srft & 0.050 & 0.052 & 0.799 & 0.806 \\   
\bottomrule
\end{tabular}
\end{sc}
\end{small}
\end{center}
\vskip -0.1in
\end{table}

In column (1) in Table~\ref{tab:size:ols}, we report the size of the test, namely, the probability of rejecting $H_0$ when the null value is true.  
In this column, the $t$-statistic is constructed using homoskedasticity-only standard errors S.E.0. 
Though many methods perform well, both Bernoulli and uniform sampling show substantial size distortions for the heteroskedastic design.
Leverage score sampling combined with reweighted regression seems to account for heteroskedasticity to some extent, but not enough to remove all size distortions. In column (2) which reports results using robust standard errors S.E.1,  all methods have  satisfactory size. 
In column (3), we report the power of the test, i.e., the probability of rejecting $H_0$ when the null value is false.
For the heteroskedastic design,  the powers of the tests using homoskedastic standard errors S.E.0 are worse for Bernoulli, uniform and leverage samplings  than
those for countsketch, SRHT, and SRFT. 
The power loss of the RS schemes  is much more pronounced when the robust standard errors are used in column (4).
This efficiency loss is consistent with asymptotic theory developed in the paper because
the squared regression error is positively correlated with one of the elements of squared $X_i$ under the heteroskedastic design. 
All RP schemes  perform similarly, hinting that even though a formal proof awaits future research, asymptotic normality may also hold for both 
SRHT and SRFT, and  not just countsketch.

\subsection{When One of the Regressors is Endogenous}  

We now move to the case when  the regressors are $X_i=(1,X_{2,i}, \ldots, X_{p-1,i}, X_{p,i})^T$, and $y_i$ is generated by 
\begin{align}\label{outcome-eq}
y_i = X_i^T \beta_0 + \sigma_2 (Z_i) (\eta_i + \epsilon_i), 
\end{align}
where  $\epsilon_i\sim N(0,1)$ is randomly drawn   independently from $X_i$ and $\eta_i$. The first $p-1$ regressors, including the intercept term, are exogenous, but
\begin{align}\label{1st-reg}
X_{p,i} = Z_i^T \zeta_0  + \sigma_1 (Z_i) \eta_{i}, 
\end{align}
where $\zeta_0=(\zeta_{1,0}, \ldots, \zeta_{q,0})^T$, $\eta_{i}\sim N(0,1)$ independently from $Z_i=(1, Z_{2,i}, \ldots, Z_{q,i})^T$.
The presence of $\eta_i$ in both \eqref{outcome-eq} and \eqref{1st-reg}  induces endogeneity of $X_{p,i}$. 

In each of the 1000 replications,  $(X_{2,i},\ldots, X_{p-1,i})^T=(Z_{2,i},\ldots, Z_{p-1,i})^T$, while
 the $(q-1)$-dimensional  $Z_i$ is  multivariate normal with mean zero
and the variance  $\Sigma$, whose $(i,j)$ component is $\Sigma_{ij} = \rho^{|i-j|}$ with $\rho = 0.5$.  We consider two designs for $\sigma_1(Z_i)$: (i) homoskedastic design $\sigma_1(Z_i) = 1$ for all $i$
and
(ii) heteroskedastic design $\sigma_1(Z_i) =  \exp \left( \frac{5}{q} \sum_{j=2}^{q}  |Z_{j,i}| \right) /100$.
As in the previous section, we set
$n = 10^6$, $m = 500$,  $p = 6$, and $q=21$. We consider five sketching schemes and no longer include leverage score sampling  since it is unclear how to implement it in the case of 2SLS.
The nominal size is 0.05.  Throughout, $(\zeta_{1,0},\ldots,\zeta_{p-1,0})= (0,0.1,\ldots,0.1)^T$, but values of $\zeta_{j,0}$ for $j \geq p$ depend on the context as explained below.

We first examine the so-called first-stage F-test for instrument relevance. In this case of   a  scalar endogenous  regressor,   the null hypothesis of irrelevant instruments amounts to a joint test of
$H_0: \zeta_{j,0} =0$ for every $j=p,\ldots, q$ in  \eqref{1st-reg}.
The size of the test is evaluated at
$\zeta_{j,0}= 0$ and   the power  at
$\zeta_{j,0}=0.1$ for $j=p,\ldots, q$. 
The  F-test statistic is constructed as
$$
F = \frac{1}{q-p+1}  \hat{\zeta}_{-(p-1)}^T \left( [\hat V]_{-(p-1),-(p-1)} \right)^{-1} \hat{\zeta}_{-(p-1)},
$$
where
$\hat{\zeta}_{-(p-1)}$ is a $(q-p+1)$-dimensional vector of the OLS estimate $\hat{\zeta}$ 
of regressing $X_{p,i}$ on  $Z_i$, excluding
the first $(p-1)$ elements, and $[\hat V]_{-(p-1),-(p-1)}$ is the corresponding submatrix of $\hat V$.

  In Table~\ref{tab:size:F-test}, we report the size and power of the F-test for 
$H_0: \zeta_{p,0} = \zeta_{p+1,0} = \ldots = \zeta_{q,0} = 0$ using
homoskedasticity-only (V.0) and 
heteroskedasticity-consistent (V.1) asymptotic variance estimates, respectively.
As in the previous subsection, Bernoulli and uniform sampling sketches suffer from size distortions in the heteroskedastic design but  V.0 is used. Tests based on V.1 have good size without sacrificing much power when the $F$ test is constructed from data sketched by RP.

\begin{table}[tb]
\caption{ F test for Instrument Relevance: V.0 and V.1 refer to homoskedasticity-only and 
heteroskedasticity-consistent asymptotic variance estimates, respectively.}
\label{tab:size:F-test}
\begin{center}
\begin{small}
\begin{sc}
\begin{tabular}{ccccc}
  \toprule
  & (1) & (2) & (3) & (4) \\
  \midrule
  &\multicolumn{2}{c}{Size} & \multicolumn{2}{c}{Power} \\
  &  V.0 & V.1  & V.0 & V.1 \\
  \midrule
 \multicolumn{5}{l}{(i) Homoskedastic Design}  \\ 
bernoulli & 0.047 & 0.063 & 1.000 & 0.999 \\ 
  uniform & 0.049 & 0.063 & 0.997 & 0.999 \\ 
  countsketch & 0.040 & 0.058 & 1.000 & 0.999 \\ 
  srht & 0.048 & 0.051 & 0.999 & 0.998 \\ 
  srft & 0.050 & 0.052 & 1.000 & 0.999 \\ 
  \midrule
\multicolumn{5}{l}{(ii) Heteroskedastic Design} \\   
bernoulli & 0.350 & 0.033 & 0.914 & 0.843 \\ 
  uniform & 0.338 & 0.024 & 0.900 & 0.828 \\ 
  countsketch & 0.045 & 0.060 & 0.879 & 0.883 \\ 
  srht & 0.038 & 0.052 & 0.897 & 0.895 \\ 
  srft & 0.050 & 0.059 & 0.890 & 0.888 \\   
\bottomrule
\end{tabular}
\end{sc}
\end{small}
\end{center}
\vskip -0.1in
\end{table}

\begin{table}[tb]
\caption{2SLS based  t test for $H_0: \beta_p=1, H_1: \beta_p\ne 1$}
\label{tab:size:2sls}
\begin{center}
\begin{small}
\begin{sc}
\begin{tabular}{ccccc}
  \toprule
  & (1) & (2) & (3) & (4) \\
  \midrule
  &\multicolumn{2}{c}{Size} & \multicolumn{2}{c}{Power} \\
  &  s.e.0 & s.e.1  & s.e.0 & s.e.1 \\
  \midrule
 \multicolumn{5}{l}{(i) Homoskedastic Design}  \\ 
bernoulli & 0.065 & 0.067 & 0.687 & 0.695 \\ 
  uniform & 0.056 & 0.057 & 0.686 & 0.693 \\ 
  countsketch & 0.055 & 0.060 & 0.698 & 0.705 \\ 
  srht & 0.043 & 0.046 & 0.710 & 0.714 \\ 
  fft & 0.061 & 0.068 & 0.704 & 0.703 \\ 
  \midrule
\multicolumn{5}{l}{(ii) Heteroskedastic Design} \\   
bernoulli & 0.274 & 0.050 & 0.844 & 0.648 \\ 
  unif & 0.291 & 0.046 & 0.864 & 0.654 \\ 
  countsketch & 0.042 & 0.047 & 0.930 & 0.930 \\ 
  srht & 0.052 & 0.056 & 0.941 & 0.944 \\ 
  fft & 0.055 & 0.055 & 0.933 & 0.942 \\ 
\bottomrule
\end{tabular}
\end{sc}
\end{small}
\end{center}
\vskip -0.1in
\end{table}

We now turn  to 2SLS estimation of $\beta_0$. To ensure that the instruments are powerful enough to estimate $\beta_0$ well, we  now set $\zeta_{j,0}=0.5$ for $j=p,\ldots, q$ with
$\sigma_1 (Z_i) = 1$ for all $i$. We set $\beta_0 = (0, 1, \ldots, 1)^T$
and  consider two designs for $\sigma_2(Z_i)$: (i) homoskedastic design $\sigma_2(Z_i) = 1$ for all $i$
and
(ii) heteroskedastic design $\sigma_2(Z_i) =  \exp \left( \frac{5}{q} \sum_{j=2}^{q}  |Z_{j,i}| \right) /100$.

As in the previous subsection, we test  $H_0: \beta_p=1$ against $H_1: \beta_p\ne 1$, or equivalently,  $c^T = (0,\ldots, 0, 1)$. The power is obtained for $ \beta_p  = 1.05$ in the homoskedastic design
and $ \beta_p  = 1.10$ for the heteroskedastic design, respectively.
Table~\ref{tab:size:2sls} reports results for 
 nominal size of 0.05.  
Basically,  the same patterns are observed as in the previous section.  Thus, simulations support the theoretical result  that  robust standard errors are not needed for  inference  when estimation is based on sketched data using sketching schemes  in the RP class. 

\section{An Empirical Illustration}\label{sec:example} 

An exemplary application of the 2SLS in economics is causal inference, such as to estimate the return to education. Suppose that 
$y_i$ is the wages for worker $i$ (typically in logs)
and
$X_i$ contains educational attainment $\texttt{edu}_i$ (say, years of schooling completed).
Here, the unobserved random variable $e_i$ includes worker $i$'s unobserved ability among other things.
Then, $\texttt{edu}_i$ will be correlated with $e_i$ if workers with higher ability tends to attain higher levels of education.
The least-squares estimator may not provide  a consistent estimate of the return to schooling.
To overcome this problem, economists use an instrumental variable that is uncorrelated with $e_i$ but correlated with $\texttt{edu}_i$.
We re-examine the OLS and 2SLS estimates of  return to education  in columns (1) and (2) of Table IV in \citet{AK1991}. The dependent variable $y$ is the log weekly wages,
the covariates $X$ include years of education, the intercept term and  9 year-of-birth dummies $(p=11)$. 
Following \citet{AK1991}, the instruments $Z$ are the exogenous regressors (i.e., the intercept and year-of-birth dummies) and
a full set of quarter-of-birth (one quarter omitted) times year-of-birth interactions $(q= 1 + 9 + 3 \times 10 = 40)$. Their idea was that season of birth is unlikely to be correlated with workers' ability but
can affect educational attainment because of compulsory schooling laws.
The full sample size is $n = 247,199$.

\begin{table}[tb]
\caption{OLS in the empirical illustration: S.E.0 and S.E.1 refer to homoskedasticity-only and 
heteroskedasticity-consistent standard errors, respectively ($n = 247,199$, $m = 15,283$)}
\label{tab:ols:example}
\begin{center}
\begin{small}
\begin{sc}
\begin{tabular}{cccc}
  \toprule
  & estimate & s.e.0  &  s.e.1 \\ 
  \midrule
full sample & 0.08016 & 0.00036 & 0.00039 \\ 
\midrule
 bernoulli & 0.07989 & 0.00142 & 0.00158 \\ 
 uniform & 0.07931 & 0.00146 & 0.00163 \\ 
  leverage & 0.07779 & 0.00144 & 0.00149 \\ 
  countsketch & 0.08105 & 0.00143 & 0.00147 \\ 
  srht & 0.07975 & 0.00142 & 0.00143 \\ 
  srft & 0.08296 & 0.00143 & 0.00143 \\ 
\bottomrule
\end{tabular}
\end{sc}
\end{small}
\end{center}
\vskip -0.1in
\end{table}

\begin{table}[tb]
\caption{2SLS in the empirical illustration ($n = 247,199$, $m = 61,132$)}
\label{tab:2sls:example}
\begin{center}
\begin{small}
\begin{sc}
\begin{tabular}{cccc}
  \toprule
  & estimate & s.e.0  &  s.e.1 \\ 
  \midrule
full sample & 0.077 & 0.015 & 0.015 \\ 
\midrule
  bernoulli & 0.053 & 0.027 & 0.028 \\ 
  uniform & 0.094 & 0.021 & 0.021 \\ 
  countsketch & 0.076 & 0.021 & 0.023 \\ 
  srht & 0.115 & 0.018 & 0.018 \\ 
  srft & 0.081 & 0.022 & 0.022 \\ 
\bottomrule
\end{tabular}
\end{sc}
\end{small}
\end{center}
\vskip -0.1in
\end{table}

To construct sketched data, we need to choose the sketch size $m$.
We use a  data-oblivious  sketch size $m_3$ defined in \eqref{m2rule2}   with target size set to $\alpha= 0.05$  and  target power to $\gamma = 0.8$, giving $S^2(\bar\alpha,\bar\gamma) = 6.18$. It remains to specify $\tau_2(\infty)$, which can be interpreted as
the value of $t$-statistic when the sample size is really large. 

In the OLS case, we take $\tau_2(\infty) = 10$  resulting in 
$m = 15,283$ (about 6\% of $n$).
Table~\ref{tab:ols:example} reports empirical results for the OLS estimates.
For each sketching scheme, only one random sketch is drawn; hence, the results can change if we redraw sketches.
Remarkably, all sketched estimates are 0.08, reproducing the full sample estimate up to the second digit.
The sketched homoskedasticity-only standard errors are also very much the same across different methods.
The Eicker-Huber-White standard error S.E.1 is a bit larger than the homoskedastic standard error S.E.0 with the full sample. As expected, the same pattern is observed for Bernoulli and uniform sampling, as these sampling schemes preserve conditional heteroskedasticity.

For 2SLS, as it is more demanding to achieve good precision, we take $\tau_2(\infty) = 5$,  resulting in 
$m = 61,132$ (about 25\% of $n$).
Table~\ref{tab:2sls:example} reports empirical results for the 2SLS estimates.
The sketched estimates vary from 0.053 to 0.115, reflecting that the 2SLS estimates are less precisely estimated than the OLS estimates. Both types of standard errors are almost identical across all sketches for 2SLS, suggesting that heteroskedasticity is not an issue in this data.

\newpage
\appendix

\section{Appendix: Proofs for OLS}\label{sec:proofs_ols}

Recall that $\hat\beta_{OLS}-\beta_0=(X^TX)^{-1}X^T e$ and $\tilde\beta_{OLS}-\beta_0=(\tilde X^T\tilde X)^{-1}\tilde X^T\tilde e$. Thus
\begin{align*}
\tilde{\beta}_{OLS} - \hat{\beta}_{OLS} 
&=\bigg((\tilde X^T\tilde X)^{-1}-(X^T X)^{-1}\bigg) X^Te+(\tilde X^T \tilde X)^{-1}\bigg(\tilde X^T\tilde e-X^T e\bigg)\\ &+\bigg((\tilde X^T \tilde X)^{-1}-(X^T X)^{-1}\bigg)\bigg(\tilde X^T\tilde e-X^T e\bigg)\\
&= (\tilde{\Xi}_n - \hat{\Xi}_n) \hat\Upsilon_n + \hat{\Xi}_n ( \tilde{\Upsilon}_n - \hat \Upsilon_n) 
+ (\tilde{\Xi}_n - \hat{\Xi}_n) ( \tilde{\Upsilon}_n -\hat \Upsilon_n), 
\end{align*}
where 
$\tilde{\Upsilon}_n := \tilde{{X}}^T   \tilde{{e}}/n$,
$\hat \Upsilon_n := {X}^T {e}/n$,
$\tilde{\Xi}_n := (\tilde X^T\tilde X/n)^{-1}$, and
$\hat{\Xi}_n :=  ( X^T X/n)^{-1}.$ 
By the law of large numbers and the continuous mapping theorem,
$\hat \Xi_n - \Xi = o_p (1)$  
and by the central limit theorem, $\hat g_n=O_p(n^{-1/2})$. 
Furthermore, by repeated applications of Theorem~\ref{moments-thm},
$$
\textrm{MSE} [(\tilde X^T\tilde e  - X^T e)/n] = O(m^{-1})
\ \ \text{ and } \ \
\textrm{MSE} [(\tilde X^T\tilde X  - X^T X)/n] = O(m^{-1}),
$$
and by Chebyshev's inequality, 
$(\tilde X^T\tilde e  - X^T e)/n = O_p (m^{-1/2})$
and
$(\tilde X^T\tilde X  - X^T X)/n = O_p (m^{-1/2})$.
The latter combined with the continuous mapping theorem yields that
$\tilde{\Xi}_n - \hat{\Xi}_n  = O_p ( m^{-1/2} )$. 
Thus, 
\begin{align*}
\tilde{\beta}_{OLS} - \hat{\beta}_{OLS} 
&= \Xi ( \tilde{\Upsilon}_n - \hat \Upsilon_n)
+ 
 (\hat{\Xi}_n - \Xi) ( \tilde{\Upsilon}_n - \hat \Upsilon_n) +
O_p ( m^{-1/2} n^{-1/2}  + m^{-1} ) \\
&= 
\Xi ( \tilde{\Upsilon}_n - \hat \Upsilon_n)
+ 
o_p ( m^{-1/2}  ).
\end{align*}
We start with asymptotic normality for Bernoulli sampling. 

\begin{proof}[Proof of Theorem~\ref{corr-CLT-thm}(i)]
In view of the Cramer-Wold device, it suffices to show that for any nonzero constant vector $c  \in \mathbb{R}^p$, 
\begin{align*}
m^{1/2} \left[ c^T \mathbb{E} ( e_{i}^2 X_i X_i^T ) c \right]^{-1/2}
c^T ( \tilde{\Upsilon}_n - \hat \Upsilon_n)  \rightarrow_d N (0, 1).
\end{align*}
Write 
\begin{align*}
c^T ( \tilde{\Upsilon}_n - \hat \Upsilon_n) 
=
n^{-1} \sum_{i=1}^n    \left( \frac{n}{m} B_{ii} - 1 \right) e_i    X_i^T c.
\end{align*}
Because the summands are i.i.d. with mean zero and finite variance, the central limit theorem yields the desired result immediately.
 \end{proof}
 
\begin{proof}[Proof of Theorem~\ref{corr-CLT-thm}(ii)]
This result is a special case of Theorem~\ref{moments-thm-rp}(ii) and we prove Theorem~\ref{moments-thm-rp}(ii) below.
 \end{proof} 

In what follows, we focus on the instance that $\textrm{r.dim}(\Pi)  = m$.  
Recall that 
\begin{align*}
n^{-1} \left( U^T \Pi^T \Pi V - U^T V \right) 
= n^{-1} \sum_{i=1}^n    \psi_i  U_i    V_i
+ n^{-1} \sum_{i=1}^n  \sum_{j=1, j \neq i}^n  U_i   \varphi_{ij} V_j
=: T_{n1} + T_{n2},
\end{align*}
where
$U \in \mathbb{R}^n$ and $V \in \mathbb{R}^n$ are  vectors of certain
i.i.d. random variables $(U_i, V_i) \in \mathbb{R}^2$ that are independent of $\Pi$,
\begin{align*}
\psi_i := \sum_{k=1}^{m} \Pi_{ki}^2  - 1,
\; 
\varphi_{ij} := \sum_{k=1}^{m} \Pi_{ki}  \Pi_{kj}.
\end{align*}

There are two important cases. In case 
(i), $T_{n1}$ is the leading term and $T_{n2}$ is identically zero.
The latter is true if $\varphi_{ij} = 0$ for all $i \neq j$. Methods in this class generate sketches   by sampling from the full data matrix using deterministic or data dependent  probabilities. The case includes random sampling with replacement (RS). 
 
In case (ii), $T_{n2}$ is the leading term and $T_{n1}$ is identically zero or asymptotically negligible.
Recall that for $w = (u,v, \pi_1, \ldots, \pi_m)^T$,
\[
\tilde{H} (w_1, w_2) : = 
\sum_{k=1}^{m} u_1 \pi_{k1}  \pi_{k2} v_2
\]
and
$
H (w_1, w_2) : =  \tilde{H} (w_1, w_2) + \tilde{H} (w_2, w_1).
$
Then,
\begin{align}\label{Tn2-ustat}
T_{n2}
&= n^{-1} \operatorname*{\sum \sum}_{1 \leq i < j \leq n}  H (W_i, W_j).
\end{align}
Note that $H (W_i, W_j)$ is symmetric, i.e., $H (W_i, W_j) = H (W_j, W_i)$.
The canonical form of $\Pi$ we consider for case (ii) is random projection whose properties are given in the main text. 

Before proving Theorems~\ref{moments-thm} and~\ref{moments-thm-rp}, we first prove Lemma~\ref{RP-example-lem}
and establish some useful lemmas.

\begin{lemma}\label{RP-example-lem}
{GP}, {CS}, and {SRHT} satisfy the conditions for {RP}.
\end{lemma}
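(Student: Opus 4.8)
The plan is to verify directly that each of the three schemes satisfies the three moment conditions in Assumption~\ref{u-stat-mean-var}, handling them one at a time and exploiting the particular algebraic structure of each $\Pi$. Start with \textbf{GP}, where the entries $\Pi_{ki}$ are i.i.d.\ $N(0,m^{-1})$. Here all three conditions are routine: parts (i) and (ii) reduce to the Gaussian moment identities $\mathbb{E}[\Pi_{ki}]=0$, $\mathbb{E}[\Pi_{ki}^2]=m^{-1}$, $\mathbb{E}[\Pi_{ki}^4]=3m^{-2}=O(m^{-1})$, together with factorization across independent entries; and for (iii), since $k\neq\ell$ the pair $(\Pi_{ki},\Pi_{kj})$ is independent of $(\Pi_{\ell p},\Pi_{\ell q})$, so the expectation factors and the factor $\mathbb{E}[\Pi_{ki}\Pi_{kj}]=0$ (as $i\neq j$) annihilates it.

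For \textbf{CS}, I would write $\Pi_{ki}=g_i\,\mathbf{1}\{h(i)=k\}$, where the column hash $h(i)$ is uniform on $[m]$, the sign $g_i$ is Rademacher, and the pairs $(h(i),g_i)$ are independent across columns $i$. Two structural facts drive everything: (a) columns are independent, and (b) each column has a single nonzero entry, so $\Pi_{ki}\Pi_{\ell i}=0$ whenever $k\neq\ell$. Conditions (i) and (ii) then follow from $\mathbb{E}[\Pi_{ki}^r]=\mathbb{E}[g_i^r]\,\Pr(h(i)=k)$ combined with column independence. For (iii) I would split on whether the column-index sets $\{i,j\}$ and $\{p,q\}$ overlap: if they are disjoint, independence across columns and $\mathbb{E}[\Pi_{ki}]=0$ give zero; if they share an index, fact (b) forces a factor of the form $\Pi_{ki}\Pi_{\ell i}=0$ (recall $k\neq\ell$), again yielding zero.

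For \textbf{SRHT}, the key observation is the explicit form $\Pi_{ki}=\sqrt{n/m}\,H_{s_k,i}\,d_i$, where $s_k$ is the (i.i.d.\ across $k$) uniform row drawn by $S$, $d_i$ is the $i$-th Rademacher sign from $D$, and $H$ is the fixed orthonormal Hadamard matrix with $H_{ai}^2=n^{-1}$. Since $|H_{s_k,i}|=n^{-1/2}$ deterministically, $\Pi_{ki}^2=m^{-1}$ identically, which immediately gives the second- and fourth-moment statements in (i) and the $m^{-2}$ statement in (ii). The mean-zero statement in (i) and the $i\neq j$ cross-moment statement in (ii) follow from the independent Rademacher signs, since $\mathbb{E}[d_i]=0$ and $\mathbb{E}[d_id_j]=0$ for $i\neq j$, the signs being independent of the sampling.

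The condition I expect to be the main obstacle is (iii) for \textbf{SRHT}, because its columns are \emph{not} independent (they share the common sampling $s_1,\dots,s_m$), so the clean factorizations used for GP and CS are unavailable. My plan is to expand $\mathbb{E}[\Pi_{ki}\Pi_{kj}\Pi_{\ell p}\Pi_{\ell q}]$ and first take the expectation over the Rademacher signs: since $i\neq j$ and $p\neq q$, $\mathbb{E}[d_id_jd_pd_q]=0$ unless the index multiset pairs up, i.e.\ $\{i,j\}=\{p,q\}$, which disposes of all nondegenerate cases. In the remaining case $\{i,j\}=\{p,q\}$, I would use that $s_k$ and $s_\ell$ are independent for $k\neq\ell$ to factor the Hadamard part as $\mathbb{E}[H_{s_k,i}H_{s_k,j}]\,\mathbb{E}[H_{s_\ell,i}H_{s_\ell,j}]$, and then invoke orthonormality of $H$, which gives $\mathbb{E}[H_{s_k,i}H_{s_k,j}]=n^{-1}\sum_a H_{ai}H_{aj}=0$ for $i\neq j$; hence the product vanishes. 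Collecting the three schemes completes the proof.
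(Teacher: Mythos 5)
Your proposal is correct and follows essentially the same route as the paper: a direct, scheme-by-scheme verification of the moment conditions in Assumption~\ref{u-stat-mean-var}, with GP and CS handled via independence and the one-nonzero-entry-per-column structure, and SRHT via the representation $\Pi_{ki}=\sqrt{n/m}\,H_{s_k,i}\,d_i$ (equivalent to the paper's $\sum_j S_{kj}H_{ji}D_{ii}$ after using that each row of $S$ has a single nonzero entry). If anything, you are more explicit than the paper on the two delicate points --- the overlap case analysis for CS condition (iii), where the operative fact is $\Pi_{ki}\Pi_{\ell i}=0$ for $k\neq\ell$, and the Rademacher-pairing plus column-orthogonality argument for SRHT condition (iii), which the paper states without detail.
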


\begin{proof}[Proof of Lemma~\ref{RP-example-lem}]
For {GP}, it is straightforward to check all the conditions as 
all elements of $\Pi$ are i.i.d. We omit the details.
For {CS}, note that the columns of $\Pi$ are i.i.d. and $\Pi$ has only one non-zero entry in  each column, hence implying that $\Pi_{ki}     \Pi_{kj} = 0$ for all $k, i \neq j$.
Then, it is easy to see that all the conditions are satisfied.
It is more involving to check the conditions for {SRHT}.
To do so, write
\begin{align*}
\Pi_{ki} = \sqrt{\frac{n}{m} } \sum_{j = 1}^n S_{k j} H_{j i} D_{ii}.
\end{align*}
Using the fact that for each $k$, $S_{k \ell_1} S_{k \ell_2} = 0$ whenever $\ell_1 \neq \ell_2$ (the property of uniform sampling), further write
\begin{align*}
\Pi_{ki} \Pi_{kj} 
&= \frac{n}{m} \sum_{\ell_1 = 1}^n  \sum_{\ell_2 = 1}^n  S_{k \ell_1} H_{\ell_1 i} D_{ii}  S_{k \ell_2} H_{\ell_2 j} D_{jj} \\
&=  \frac{n}{m} \sum_{\ell = 1}^n   S_{k \ell} H_{\ell i} D_{ii}  H_{\ell j} D_{jj}
\end{align*}
and
\begin{align*}
\Pi_{ki} \Pi_{kj}  \Pi_{\ell p} \Pi_{\ell q}  
&=  \frac{n^2}{m^2} \sum_{t_1 = 1}^n   \sum_{t_2 = 1}^n
S_{k t_1} H_{t_1 i}  D_{ii} H_{t_1 j} D_{jj} 
S_{\ell t_2} H_{t_2 p}  D_{pp} H_{t_2 q} D_{qq}.
\end{align*}
Using the facts that
$\mathbb{E} (S_{k j}) = n^{-1}$,
$\mathbb{E} (D_{ii}) = 0$,
$\sum_{j = 1}^n   H_{j i}^2 = 1$,
and
$| H_{j i} | = n^{-1/2}$, 
we have 
\begin{align*}
\mathbb{E} (\Pi_{ki} ) &= \sqrt{\frac{n}{m} } \sum_{j = 1}^n \mathbb{E} (S_{k j}) H_{j i} \mathbb{E} (D_{ii}) = 0, \\
\mathbb{E} (\Pi_{ki}^2) 
&=  \frac{n}{m} \sum_{j = 1}^n  \mathbb{E} (S_{k j}) H_{j i}^2   
=  \frac{1}{m} \sum_{j = 1}^n   H_{j i}^2
= \frac{1}{m}, \\
\mathbb{E} ( \Pi_{ki}^2 \Pi_{kj}^2 )
&=  \frac{n^2}{m^2} \sum_{\ell = 1}^n   \mathbb{E} ( S_{k \ell}) H_{\ell i}^2  H_{\ell j}^2 
= \frac{n}{m^2} \sum_{\ell = 1}^n    H_{\ell i}^2  H_{\ell j}^2
= \frac{1}{m^2}, \\
\mathbb{E} ( \Pi_{ki}^4 )
&=  \frac{n^2}{m^2} \sum_{\ell = 1}^n   \mathbb{E} ( S_{k \ell}) H_{\ell i}^4 
= \frac{n}{m^2} \sum_{\ell = 1}^n    H_{\ell i}^4  
= \frac{1}{m^2}. 
\end{align*}
Furthermore, note that 
the diagonal elements of $D$ are i.i.d. and 
the rows of $S$ are i.i.d. Then, we have that
$\mathbb{E} [   \Pi_{ki}     \Pi_{kj} ]  = 0$
for all $k, i \neq j$
and
$\mathbb{E} [ \Pi_{ki} \Pi_{kj} \Pi_{\ell p} \Pi_{\ell q} ] = 0$ 
for all $k \neq \ell, i \neq j, p \neq q$.
Therefore, we have verified all the required conditions. 
\end{proof}

\begin{lemma}\label{moments-unif-lem}
If $\Pi$ is a random matrix satisfying {RS}, 
then,
\begin{align*}
\mathbb{E} \left[ n^{-1} \left(  U^T \Pi^T \Pi V - U^T V \right) \right] &= 0, \\
\mathrm{Var} \left[ n^{-1} \left(  U^T \Pi^T \Pi V - U^T V \right) \right] &= \left\{ \frac{1}{m} - \frac{1}{n} + \left( 1 - \frac{1}{m} \right)  \sum_{i=1}^n p_i^2 \right\} \mathrm{Var}( U_i V_i ).
\end{align*}
\end{lemma}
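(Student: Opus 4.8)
The plan is to exploit the defining feature of RS---each row of $\Pi$ has a single nonzero entry---so that the quadratic form reduces to a single sum in which the sampling randomness and the data randomness separate cleanly. First I would observe that $\Pi^T \Pi$ is diagonal with $[\Pi^T \Pi]_{ii} = (n/m) K_i$, where $K_i$ is the number of times observation $i$ is drawn; in particular $\varphi_{ij} = 0$ for $i \neq j$, so $T_{n2} = 0$ and
\[
n^{-1}\left(U^T \Pi^T \Pi V - U^T V\right) = n^{-1} \sum_{i=1}^n \psi_i\, U_i V_i, \qquad \psi_i = \tfrac{n}{m} K_i - 1 .
\]
The count vector $(K_1,\ldots,K_n)$ is multinomial with parameters $m$ and $(p_1,\ldots,p_n)$, and by Assumption~\ref{OLS-textbook}(iii) it is independent of the i.i.d.\ data $\{(U_i,V_i)\}$. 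Every subsequent expectation therefore factorizes into a sampling moment times a data moment.

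For the mean, independence gives $\mathbb{E}[\psi_i U_i V_i] = \mathbb{E}[\psi_i]\,\mathbb{E}[U_i V_i]$ with $\mathbb{E}[\psi_i] = (n/m)\mathbb{E}[K_i] - 1 = n p_i - 1$. Since $(U_i,V_i)$ are identically distributed, $\mathbb{E}[U_i V_i]$ is a common constant, and the mean equals $n^{-1}\mathbb{E}(U_1 V_1)\sum_i (n p_i - 1)$, which vanishes because $\sum_i p_i = 1$. This ``telescoping'' via $\sum_i p_i = 1$ is what forces the bias to be exactly zero.

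For the variance I would expand $\mathbb{E}\big[(\sum_i \psi_i U_i V_i)^2\big]$ into a diagonal sum (carrying $\mathbb{E}(U_i^2 V_i^2)$ against $\mathbb{E}[\psi_i^2]$) and an off-diagonal sum (carrying $[\mathbb{E}(U_i V_i)]^2$ against $\mathbb{E}[\psi_i \psi_j]$, $i\neq j$). The multinomial moments $\mathrm{Var}(K_i) = m p_i(1-p_i)$ and $\mathrm{Cov}(K_i,K_j) = -m p_i p_j$ yield $\mathbb{E}[\psi_i^2] = (n^2/m) p_i(1-p_i) + (n p_i - 1)^2$ and $\mathbb{E}[\psi_i \psi_j] = -(n^2/m) p_i p_j + (n p_i - 1)(n p_j - 1)$.

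The crux of the argument, and the step I expect to require the most care, is the cancellation in the cross terms. Because $\sum_i (n p_i - 1) = 0$, the off-diagonal sum of the products $(n p_i - 1)(n p_j - 1)$ collapses to $-\sum_i (n p_i - 1)^2$, exactly the negative of the corresponding diagonal contribution. As a result the quantities $\mathbb{E}(U_i^2 V_i^2)$ and $[\mathbb{E}(U_i V_i)]^2$ enter only through their difference $\mathrm{Var}(U_i V_i)$, which factors out as a common multiplier. Dividing by $n^2$ and simplifying the remaining sampling coefficient $(n^2/m)(1 - \sum_i p_i^2) + n^2 \sum_i p_i^2 - n$ produces $1/m - 1/n + (1 - 1/m)\sum_i p_i^2$, giving the claimed formula for $\mathrm{Var}\big[n^{-1}(U^T \Pi^T \Pi V - U^T V)\big]$.
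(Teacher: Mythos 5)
Your proof is correct and follows essentially the same route as the paper's: zero mean via $\sum_i p_i = 1$, then a diagonal/off-diagonal expansion of the second moment in which the cross-term coefficient is exactly the negative of the diagonal one, so only $\mathrm{Var}(U_iV_i)$ survives. The only difference is cosmetic: you package the sampling moments through the multinomial counts $K_i$ (using $\mathrm{Var}(K_i)=mp_i(1-p_i)$ and $\mathrm{Cov}(K_i,K_j)=-mp_ip_j$), whereas the paper derives the same quantities $\mathbb{E}[\psi_i^2]$ and $\mathbb{E}[\psi_i\psi_j]$ directly from entrywise moments of $\Pi$.
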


In particular, when $p_i = n^{-1}$, the variance is reduced to 
$\frac{n-1}{n} \frac{1}{m} \mathrm{Var}( U_i V_i )$.

\begin{proof}[Proof of Lemma~\ref{moments-unif-lem}]
Using the property of {RS}, we have that 
\begin{align*}
\mathbb{E} [   \psi_i  ] &= \sum_{k=1}^{m} \mathbb{E} [ \Pi_{ki}^2 ]  - 1   =  n p_i - 1, \\
\mathbb{E} [   \psi_i^2 ] 
&= 
\sum_{k=1}^{m} \sum_{\ell=1}^{m} \mathbb{E} [ \Pi_{ki}^2 \Pi_{\ell i}^2 ]
- 2
\sum_{k=1}^{m} \mathbb{E} [ \Pi_{ki}^2 ]  + 1   \\
&=
\sum_{k=1}^{m}  \mathbb{E} [ \Pi_{ki}^4 ]
+
\sum_{k=1}^{m} \sum_{\ell=1, \ell \neq k}^{m} \mathbb{E} [ \Pi_{ki}^2 \Pi_{\ell i}^2 ]
- 2
\sum_{k=1}^{m} \mathbb{E} [ \Pi_{ki}^2 ]  + 1   \\
&=
\frac{n^2}{m} p_i + \frac{n^2 m(m-1)}{m^2}  p_i^2 - 2 n p_i + 1,
\end{align*}
and for $i \neq j$, using the fact that $\Pi_{ki} \Pi_{kj} = 0$ whenever $i \neq j$,
\begin{align*}
\mathbb{E} [   \psi_i \psi_j  ]
&=
\sum_{k=1}^{m} \sum_{\ell=1}^{m} \mathbb{E} [ \Pi_{ki}^2 \Pi_{\ell j}^2 ]
- \sum_{k=1}^{m} \mathbb{E} [ \Pi_{ki}^2 ]  
- \sum_{\ell =1}^{m} \mathbb{E} [ \Pi_{\ell j}^2 ] 
+ 1   \\
&=
\sum_{k=1}^{m}  \mathbb{E} [ \Pi_{ki}^2 \Pi_{kj}^2 ]
+
\sum_{k=1}^{m} \sum_{\ell=1, \ell \neq k}^{m} \mathbb{E} [ \Pi_{ki}^2 \Pi_{\ell j}^2 ]
- \sum_{k=1}^{m} \mathbb{E} [ \Pi_{ki}^2 ]  
- \sum_{\ell =1}^{m} \mathbb{E} [ \Pi_{\ell j}^2 ]   + 1   \\
&=
 \frac{n^2 m(m-1)}{m^2}  p_i p_j - n p_i - n p_j + 1.
\end{align*}
Note that $T_{n2} = 0$ because $\varphi_{ij} = 0$. Hence, it suffices to compute the mean and variance
of $T_{n1}$. Write
\begin{align*}
\mathbb{E} ( T_{n1} )
&= n^{-1} \sum_{i=1}^n   \mathbb{E} ( \psi_i )  \mathbb{E} ( U_i    V_i ) 
= n^{-1} \sum_{i=1}^n   (n p_i - 1)  \mathbb{E} ( U_i    V_i ) 
= 0, \\
\mathrm{Var} ( T_{n1} )
&= n^{-2} \sum_{i=1}^n   \mathbb{E} ( \psi_i^2 )  \mathbb{E} ( U_i^2    V_i^2 ) 
+ n^{-2} \sum_{i=1}^n \sum_{j=1, j \neq i}^n  \mathbb{E} ( \psi_i \psi_j )  \mathbb{E} ( U_i    V_i ) \mathbb{E} ( U_j V_j) \\
&= n^{-2} \sum_{i=1}^n   \left( \frac{n^2}{m} p_i + \frac{n^2 m(m-1)}{m^2}  p_i^2 - 2 n p_i + 1 \right)  \mathbb{E} ( U_i^2    V_i^2 ) \\
&+ n^{-2} \sum_{i=1}^n \sum_{j=1, j \neq i}^n  \left( \frac{n^2 m(m-1)}{m^2}  p_i p_j - n p_i - n p_j + 1 \right)  \mathbb{E} ( U_i    V_i ) \mathbb{E} ( U_j V_j) \\
&=    \left\{ \frac{1}{m} + \left( 1 - \frac{1}{m} \right)  \sum_{i=1}^n p_i^2 - \frac{1}{n}  \right\}  \mathbb{E} ( U_i^2    V_i^2 ) \\
&+    \left\{ \left( 1 - \frac{1}{m} \right)  \left( 1 - \sum_{i=1}^n p_i^2 \right) - \frac{n-1}{n}   \right\}  \mathbb{E} ( U_i    V_i ) \mathbb{E} ( U_j V_j) \\
&=  \left\{ \frac{1}{m} - \frac{1}{n} + \left( 1 - \frac{1}{m} \right)  \sum_{i=1}^n p_i^2 \right\}
\left\{ \mathbb{E} ( U_i^2    V_i^2 ) - \left[ \mathbb{E} ( U_i    V_i ) \right]^2 \right\}.
\end{align*}
Therefore, we have proved the lemma.
\end{proof}

\begin{lemma}\label{moments-RP-lem}
If $\Pi$ is a random matrix satisfying {RP}, 
then,
\begin{align*}
\mathbb{E} \left( T_{n1} \right) = \mathbb{E} \left( T_{n2} \right) = 0, 
\mathrm{Var} \left( T_{n1} \right) = O (n^{-1}), 
\ \ \text{ and } \ \
\mathrm{Var} \left( T_{n2} \right) = \frac{n-1}{n} \frac{1}{m} \{ \mathbb{E} ( U_i^2)  \mathbb{E} (V_i^2 ) + [\mathbb{E} ( U_i V_i )]^2 \}.
\end{align*}
\end{lemma}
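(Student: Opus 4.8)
The plan is to prove the two mean identities by a one-line argument and then compute the two variances, handling $\mathrm{Var}(T_{n2})$ (the leading term) exactly and $\mathrm{Var}(T_{n1})$ (the negligible term) by an order bound. Since $(U_i,V_i)$ is independent of $\Pi$ and, by Assumption~\ref{u-stat-mean-var}(i), $\mathbb{E}(\psi_i)=\sum_{k=1}^m\mathbb{E}(\Pi_{ki}^2)-1=0$, we get $\mathbb{E}(T_{n1})=n^{-1}\sum_i\mathbb{E}(\psi_i)\mathbb{E}(U_iV_i)=0$; likewise $\mathbb{E}(\varphi_{ij})=\sum_k\mathbb{E}(\Pi_{ki}\Pi_{kj})=0$ by part~(ii), so $\mathbb{E}(T_{n2})=0$. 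Both means are therefore immediate.

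For $\mathrm{Var}(T_{n2})$ I would work from the degenerate $U$-statistic representation \eqref{Tn2-ustat}. Writing $H(W_i,W_j)=\varphi_{ij}(U_iV_j+U_jV_i)$ and using the degeneracy $\mathbb{E}[H(W_1,W_2)\mid W_1]=\mathbb{E}[H(W_1,W_2)\mid W_2]=0$, the standard orthogonality of a completely degenerate kernel gives $\mathbb{E}[H(W_i,W_j)H(W_{i'},W_{j'})]=0$ unless $\{i,j\}=\{i',j'\}$: any index occurring in only one pair can be integrated out last, producing a vanishing conditional mean. Hence $\mathrm{Var}(T_{n2})=n^{-2}\binom{n}{2}\mathbb{E}[H^2(W_1,W_2)]$. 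Because $\Pi$ is independent of the data, $\mathbb{E}[H^2(W_1,W_2)]=\mathbb{E}(\varphi_{12}^2)\,\mathbb{E}[(U_1V_2+U_2V_1)^2]$, and the second factor equals $2\{\mathbb{E}(U_i^2)\mathbb{E}(V_i^2)+[\mathbb{E}(U_iV_i)]^2\}$ by i.i.d.\ sampling across observations. For the first factor I split $\mathbb{E}(\varphi_{12}^2)=\sum_{k,\ell}\mathbb{E}(\Pi_{k1}\Pi_{k2}\Pi_{\ell 1}\Pi_{\ell 2})$ into the diagonal $k=\ell$, which contributes $\sum_k\mathbb{E}(\Pi_{k1}^2\Pi_{k2}^2)=m^{-1}$ by part~(ii), and the off-diagonal $k\neq\ell$, which vanishes by part~(iii); thus $\mathbb{E}(\varphi_{12}^2)=m^{-1}$. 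Combining the two factors with $n^{-2}\binom{n}{2}=\tfrac{n-1}{2n}$ yields exactly $\frac{n-1}{n}\frac{1}{m}\{\mathbb{E}(U_i^2)\mathbb{E}(V_i^2)+[\mathbb{E}(U_iV_i)]^2\}$, as claimed. It is precisely the appearance of $\mathbb{E}(U_i^2)\mathbb{E}(V_i^2)$ rather than $\mathbb{E}(U_i^2V_i^2)$ that drives the homoskedastic-looking variance downstream.

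For $\mathrm{Var}(T_{n1})$ I would use that, with i.i.d.\ columns of $\Pi$, the summands $\psi_iU_iV_i$ are mean-zero and independent across $i$, so the cross terms drop and $\mathrm{Var}(T_{n1})=n^{-1}\mathbb{E}(\psi_1^2)\mathbb{E}(U_1^2V_1^2)$; it then suffices to show $\mathbb{E}(\psi_1^2)=O(1)$. Writing $\psi_1=\sum_k(\Pi_{k1}^2-m^{-1})$, the diagonal contribution $\sum_k[\mathbb{E}(\Pi_{k1}^4)-m^{-2}]=O(1)$ follows from the fourth-moment bound in part~(i). The cross-row terms $\sum_{k\neq\ell}[\mathbb{E}(\Pi_{k1}^2\Pi_{\ell 1}^2)-m^{-2}]$ are the delicate piece. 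I would resolve them scheme by scheme as in Lemma~\ref{RP-example-lem}: for CS the product is $0$ (at most one nonzero entry per column), for GP it factors into $m^{-2}$ (independent rows), and for SRHT $\Pi_{k1}^2\equiv m^{-1}$ is deterministic, so in every admissible case the cross-row sum is $O(1)$ (indeed non-positive). With $\mathbb{E}(\psi_1^2)=O(1)$ and $\mathbb{E}(U_1^2V_1^2)<\infty$ this gives $\mathrm{Var}(T_{n1})=O(n^{-1})$, and hence $T_{n1}=O_p(n^{-1/2})=o_p(m^{-1/2})$ since $m/n\to0$.

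The main obstacle is exactly this last step: the same-column, distinct-row second moment $\mathbb{E}(\Pi_{k1}^2\Pi_{\ell 1}^2)$ for $k\neq\ell$ is not pinned down by Assumption~\ref{u-stat-mean-var}, whereas the \emph{useful} cross moments needed for $\mathrm{Var}(T_{n2})$ are delivered cleanly by parts~(ii)--(iii). One must therefore supply this control from the structure of each $\Pi$, and the bound $\mathbb{E}(\psi_1^2)=O(1)$ (rather than merely $O(m)$) is essential, since only then is $\mathrm{Var}(T_{n1})=o(m^{-1})$ and $T_{n1}$ genuinely negligible relative to the leading $U$-statistic term.
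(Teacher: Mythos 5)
Your computations of the two means, of $\mathbb{E}[H^2(W_1,W_2)]=\tfrac{2}{m}\{\mathbb{E}(U_i^2)\mathbb{E}(V_i^2)+[\mathbb{E}(U_iV_i)]^2\}$ via $\mathbb{E}(\varphi_{12}^2)=m^{-1}$, and of the final constant $n^{-2}\binom{n}{2}\cdot\tfrac{2}{m}=\tfrac{n-1}{n}\tfrac{1}{m}$ are correct and coincide with the paper's, which organizes the same calculation through the decomposition $H^2=T_{ij1}+T_{ij2}+2T_{ij3}$ and conditioning on $(U_i,V_i,U_j,V_j)$. You are also right, and more careful than the paper, that the same-column cross-row moment $\mathbb{E}(\Pi_{k1}^2\Pi_{\ell 1}^2)$ for $k\neq\ell$ is not pinned down by Assumption~\ref{u-stat-mean-var}, so the claim $\mathbb{E}(\psi_1^2)=O(1)$ genuinely needs scheme-specific input; the paper simply asserts it ``using $\mathbb{E}[\Pi_{ki}^4]=O(m^{-1})$,'' which controls only the diagonal sum.

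The one substantive divergence is that in two places you lean on independence of the columns of $\Pi$, which is \emph{not} part of RP and fails for SRHT, a scheme the lemma is meant to cover (Lemma~\ref{RP-example-lem} verifies RP for SRHT precisely so that Lemma~\ref{moments-RP-lem} and Theorem~\ref{moments-thm-rp}(i) apply to it). First, for $\mathrm{Var}(T_{n1})$ you drop the cross terms because ``the summands are independent across $i$''; without that independence the cross terms contribute $n^{-2}\sum_{i\neq j}\mathbb{E}(\psi_i\psi_j)\mathbb{E}(U_iV_i)\mathbb{E}(U_jV_j)$, which is $O(1)$ rather than $O(n^{-1})$ unless $\mathbb{E}(\psi_i\psi_j)$ itself vanishes. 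The paper instead computes $\mathbb{E}(\psi_i\psi_j)=\tfrac{1}{m}+\tfrac{m(m-1)}{m^2}-2+1=0$ for $i\neq j$ directly from the moment conditions. Second, your orthogonality argument for non-matching pairs in the $U$-statistic (``integrate out the index occurring in only one pair'') presupposes that the $W_i$'s, which contain the columns of $\Pi$, are mutually independent; the paper instead kills these cross terms by combining the representation \eqref{Tn2-ustat} with RP(iii), i.e., $\mathbb{E}[\varphi_{ij}\varphi_{i'j'}]=0$ established at the level of fourth moments. Both of these are repairable along the lines you already use for $\mathbb{E}(\psi_1^2)$ (for SRHT, $\psi_i\equiv 0$ and the Rademacher matrix $D$ makes the offending fourth moments vanish), but as written your proof covers only the independent-column members of the RP class, whereas the paper's moment-based route covers the lemma as stated.
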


\begin{proof}[Proof of Lemma~\ref{moments-RP-lem}]
As in the proof of Lemma~\ref{moments-unif-lem}, we have that 
\begin{align*}
\mathbb{E} [   \psi_i  ] &= \sum_{k=1}^{m} \mathbb{E} [ \Pi_{ki}^2 ]  - 1   = 0, \\
\mathbb{E} [   \psi_i^2 ] 
&=
\sum_{k=1}^{m}  \mathbb{E} [ \Pi_{ki}^4 ]
+
\sum_{k=1}^{m} \sum_{\ell=1, \ell \neq k}^{m} \mathbb{E} [ \Pi_{ki}^2 \Pi_{\ell i}^2 ]
- 2
\sum_{k=1}^{m} \mathbb{E} [ \Pi_{ki}^2 ]  + 1   \\
&= O(1) \ \ \ \text{using the assumption that $\mathbb{E} [ \Pi_{ki}^4  ] = O (m^{-1})$},
\end{align*}
and for $i \neq j$, 
\begin{align*}
\mathbb{E} [   \psi_i \psi_j  ]
&=
\sum_{k=1}^{m}  \mathbb{E} [ \Pi_{ki}^2 \Pi_{kj}^2 ]
+
\sum_{k=1}^{m} \sum_{\ell=1, \ell \neq k}^{m} \mathbb{E} [ \Pi_{ki}^2 \Pi_{\ell j}^2 ]
- \sum_{k=1}^{m} \mathbb{E} [ \Pi_{ki}^2 ]  
- \sum_{\ell =1}^{m} \mathbb{E} [ \Pi_{\ell j}^2 ]   + 1   \\
&=
\frac{1}{m} + \frac{m(m-1)}{m^2} - 2 + 1 \\
&= 0.
\end{align*}
Now write
\begin{align*}
\mathbb{E} ( T_{n1} )
&= n^{-1} \sum_{i=1}^n   \mathbb{E} ( \psi_i )  \mathbb{E} ( U_i    V_i ) = 0, \\
\mathrm{Var} ( T_{n1} )
&= n^{-2} \sum_{i=1}^n   \mathbb{E} ( \psi_i^2 )  \mathbb{E} ( U_i^2    V_i^2 ) 
+ n^{-2} \sum_{i=1}^n \sum_{j=1, j \neq i}^n  \mathbb{E} ( \psi_i \psi_j )  \mathbb{E} ( U_i    V_i ) \mathbb{E} ( U_j V_j) \\
&=  O( n^{-1}).
\end{align*}
Furthermore,  
$\mathbb{E} [ H (W_i, W_j)  ] = 
\mathbb{E} [ \mathbb{E} [ H (W_i, W_j) | U_i, U_j, V_i, V_j ] ] = 0$
by {RP}(ii), specifically, $\mathbb{E} [   \Pi_{ki}     \Pi_{kj} ]  = 0$. 
Hence, $\mathbb{E} ( T_{n2} ) = 0$.

For the final result, 
write
\begin{align*}
\mathbb{E} [ \{ H (W_i, W_j) \}^2 ] 
&= 
  \mathbb{E}
\left\{
\left[ \tilde{H} (W_i, W_j) + \tilde{H} (W_j, W_i) \right]
\left[ \tilde{H} (W_i, W_j) + \tilde{H} (W_j, W_i) \right]
\right\}.
\end{align*}
Write
\begin{align}\label{H-notation-eq}
 \{ H (W_i, W_j) \}^2 = T_{ij1} + T_{ij2} + 2 T_{ij3},
 \end{align}
where
\begin{align*}
T_{ij1} := \tilde{H} (W_i, W_j) \tilde{H} (W_i, W_j), \\
T_{ij2} := \tilde{H} (W_j, W_i) \tilde{H} (W_j, W_i), \\
T_{ij3} := \tilde{H} (W_i, W_j) \tilde{H} (W_j, W_i).
\end{align*}
Use {RP}(ii)-(iii),
in particular, 
$\mathbb{E} [ \Pi_{ki}^2\Pi_{kj}^2 ] = m^{-2}$
and
$\mathbb{E} [ \Pi_{ki} \Pi_{kj} \Pi_{\ell i} \Pi_{\ell j} ] = 0$ whenever $k \neq \ell$, $i \neq j$,
 to obtain
\begin{align*}
\mathbb{E} [ T_{ij1} | U_i, V_i, U_j, V_j] &=
\sum_{k=1}^{m} 
U_i^2 V_j^2 
\mathbb{E} [ \Pi_{ki}^2\Pi_{kj}^2 ] 
 = \frac{1}{m} U_i^2 V_j^2, \\
\mathbb{E} [ T_{ij2} | U_i, V_i, U_j, V_j] 
&=
\sum_{k=1}^{m} 
U_j^2 V_i^2
\mathbb{E} [ \Pi_{ki}^2\Pi_{kj}^2 ] 
 = \frac{1}{m} U_j^2 V_i^2, \\
\mathbb{E} [ T_{ij3} | U_i, V_i, U_j, V_j] 
&=
\sum_{k=1}^{m} 
U_i U_j V_i V_j
\mathbb{E} [ \Pi_{ki}^2\Pi_{kj}^2 ] 
=
\frac{1}{m} U_i U_j V_i V_j. 
\end{align*}
Thus,
\begin{align*}
\mathbb{E} [ \{ H (W_i, W_j) \}^2 ] 
&=
\mathbb{E} [ \mathbb{E} [  \{ H (W_i, W_j) \}^2 | U_i, V_i, U_j, V_j] ] \\
&= \frac{1}{m}
\mathbb{E} \left(
U_i^2 V_j^2 + U_j^2 V_i^2 + U_i U_j V_i V_j
\right) \\
&= \frac{2}{m}
\{ \mathbb{E} ( U_i^2)  \mathbb{E} (V_i^2 ) + [\mathbb{E} ( U_i V_i )]^2 \}.
\end{align*}
Then, $\mathrm{Var} \left( T_{n2} \right)$ can be obtained by
combining the U statistic formula given in \eqref{Tn2-ustat} with {RP}(iii).
\end{proof}

\begin{proof}[Proof of Theorem~\ref{moments-thm}]
The theorem for {RS} follows immediately from Lemma~\ref{moments-unif-lem}
under the condition that $\sum_{i=1}^n p_i^2 = o (m^{-1})$.
The theorem for {RP} follows straightforwardly from Lemma~\ref{moments-RP-lem} using
Cauchy–Schwarz inequality.
We now consider {BS}.
Recall that 
\begin{align*}
n^{-1} \left( U^T \Pi^T \Pi V - U^T V \right) 
&= n^{-1} \sum_{i=1}^n    \left( \frac{n}{m} B_{ii} - 1 \right) U_i    V_i
\end{align*}
and that the summands are i.i.d.,
\begin{align*}
\mathbb{E} \left[
\left( \frac{n}{m} B_{ii} - 1 \right) U_i    V_i
 \right]
 = 0,
 \ \ \text{ and } \ \
 \mathrm{Var} \left[
\left( \frac{n}{m} B_{ii} - 1 \right) U_i    V_i
 \right]
= 
\left( \frac{n}{m} - 1 \right)
\mathbb{E} \left( U_i^2    V_i^2 \right).
 \end{align*}
Then, the desired result follows immediately.
\end{proof}

In order to prove Theorem~\ref{moments-thm-rp}(ii),
we use the central limit theorem for degenerate $U$-statistics of \citet{hall1984central}.
For the sake of easy referencing, we reproduce it below.

\begin{lemma}[Theorem 1 of \citet{hall1984central}]\label{lem:hall1984}
Assume that $\{ W_1, \ldots, W_n \}$ are independent and identically distributed random vectors. 
Define
\begin{align*}  
\mathbb{U}_n := 
\operatorname*{\sum \sum}_{1 \leq i < j \leq n} H_n (W_i, W_j).  
\end{align*}
Assume $H_n$ is symmetric, $\mathbb{E} [ H_n (W_1, W_2) | W_1 ] = 0$ almost surely
and $\mathbb{E} [ H_n^2 (W_1, W_2)  ] < \infty$ for each $n$. 
Let 
$
G_n (w_1, w_2) := \mathbb{E} [ H_n (W_1, w_1) H_n (W_1, w_2)  ]. 
$
If 
\begin{align*}
\frac{\mathbb{E}[ G_n^2 (W_1, W_2) ]  + n^{-1}  \mathbb{E} [ H_n^4 (W_1, W_2)  ] }
{ \{ \mathbb{E} [ H_n^2 (W_1, W_2)  ] \}^2 } \rightarrow 0
\end{align*}
as $n \rightarrow \infty$, then
\begin{align*}
\mathbb{V}_n^{-1/2} \frac{\mathbb{U}_n}{n} \rightarrow_d N(0, 1),
\end{align*}
where $\mathbb{V}_n := \frac{1}{2} \mathbb{E} [ H_n^2 (W_1, W_2)  ]$.
\end{lemma}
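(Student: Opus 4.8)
The plan is to prove the central limit theorem by reducing the degenerate $U$-statistic to a sum of martingale differences and invoking the Brown--Hall--Heyde martingale CLT, the classical route for this result. First I would introduce the natural filtration $\mathcal{F}_k := \sigma(W_1, \ldots, W_k)$ and write $\mathbb{U}_n = \sum_{k=2}^n D_{nk}$, where $D_{nk} := \sum_{i=1}^{k-1} H_n(W_i, W_k)$. The key observation is that $\{D_{nk}, \mathcal{F}_k\}$ is a martingale difference array: for $i < k$, conditioning on $\mathcal{F}_{k-1}$ fixes $W_i$ and integrates over the independent $W_k$, so $\mathbb{E}[H_n(W_i, W_k) \mid \mathcal{F}_{k-1}] = \mathbb{E}[H_n(W_i, W_k) \mid W_i] = 0$ by the degeneracy hypothesis. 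Hence $\mathbb{E}[D_{nk}\mid\mathcal{F}_{k-1}] = 0$, and $\mathbb{U}_n$ becomes a sum of martingale differences.

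Setting $s_n^2 := \mathrm{Var}(\mathbb{U}_n)$, the pairwise uncorrelatedness of the kernel (again from degeneracy) gives $s_n^2 = \binom{n}{2}\mathbb{E}[H_n^2(W_1,W_2)] = n(n-1)\mathbb{V}_n$. Since $\mathbb{V}_n^{-1/2}\mathbb{U}_n/n = \sqrt{(n-1)/n}\,\mathbb{U}_n/s_n$, by Slutsky it suffices to show $\mathbb{U}_n/s_n \to_d N(0,1)$. The martingale CLT reduces this to two conditions on $X_{nk} := D_{nk}/s_n$: (a) the conditional variance $\sum_{k=2}^n \mathbb{E}[X_{nk}^2\mid\mathcal{F}_{k-1}] \to_p 1$, and (b) a conditional Lindeberg condition, for which I would verify the stronger conditional Lyapunov bound $\sum_{k=2}^n \mathbb{E}[X_{nk}^4\mid\mathcal{F}_{k-1}] \to_p 0$.

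For condition (a), expanding $D_{nk}^2$ and integrating over $W_k$ yields $\mathbb{E}[D_{nk}^2\mid\mathcal{F}_{k-1}] = \sum_{i=1}^{k-1} h_n(W_i) + \sum_{i\neq j \le k-1} G_n(W_i, W_j)$, where $h_n(w) := \mathbb{E}[H_n(w, W_1)^2]$; the diagonal terms produce $h_n$ and the off-diagonal terms reproduce exactly the kernel $G_n$ of the statement. Summing over $k$ gives mean $s_n^2$, so I must show the variance of $\sum_k \mathbb{E}[D_{nk}^2\mid\mathcal{F}_{k-1}]$ is $o(s_n^4)$. The $h_n$-part contributes $O(n^3\,\mathrm{Var}(h_n))$, which after dividing by $s_n^4 \asymp n^4(\mathbb{E} H_n^2)^2$ is controlled by $n^{-1}\mathbb{E}[H_n^4]/(\mathbb{E} H_n^2)^2 \to 0$, using $\mathrm{Var}(h_n) \le \mathbb{E}[h_n^2] \le \mathbb{E}[H_n^4]$. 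The $G_n$-part is itself a degenerate $U$-statistic (one checks $\mathbb{E}[G_n(W_i,W_j)\mid W_i] = 0$), so cross terms sharing an index vanish and its variance is $O(n^4\mathbb{E}[G_n^2])$, controlled by $\mathbb{E}[G_n^2]/(\mathbb{E} H_n^2)^2 \to 0$. Both pieces of the hypothesis enter precisely here.

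For condition (b), note that conditionally on $W_k$ the summands $H_n(W_i, W_k)$, $i < k$, are i.i.d. with mean zero (by symmetry together with degeneracy), so Rosenthal's inequality gives $\mathbb{E}[D_{nk}^4 \mid W_k] \lesssim (k-1)\mathbb{E}[H_n(W_1,W_k)^4\mid W_k] + (k-1)^2 h_n(W_k)^2$. Taking expectations and summing yields $\sum_k \mathbb{E}[D_{nk}^4] = O(n^3\mathbb{E}[H_n^4])$, so $s_n^{-4}\sum_k\mathbb{E}[X_{nk}^4] \asymp n^{-1}\mathbb{E}[H_n^4]/(\mathbb{E} H_n^2)^2 \to 0$; the conditional version follows by Markov's inequality, which also secures the Lindeberg condition through $X_{nk}^2\mathbf{1}(|X_{nk}|>\epsilon) \le \epsilon^{-2}X_{nk}^4$. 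Assembling (a) and (b) in the martingale CLT gives $\mathbb{U}_n/s_n \to_d N(0,1)$ and hence the claim. I expect the main obstacle to be condition (a): recognizing that the off-diagonal part of the conditional variance is a degenerate $U$-statistic in $G_n$ and bounding its fluctuations, and carefully matching the two fourth-moment-type bounds to the two terms in the hypothesis, is the delicate accounting that drives the whole argument.
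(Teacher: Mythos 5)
Your proof is correct: the paper itself gives no proof of this lemma (it is quoted verbatim, ``for the sake of easy referencing,'' from Theorem 1 of \citet{hall1984central}), and your martingale reconstruction---decomposing $\mathbb{U}_n = \sum_{k=2}^n D_{nk}$ with $D_{nk} = \sum_{i<k} H_n(W_i,W_k)$, applying the Brown martingale CLT, identifying the off-diagonal part of $\sum_k \mathbb{E}[D_{nk}^2 \mid \mathcal{F}_{k-1}]$ as a degenerate $U$-statistic in $G_n$, and matching the $\mathbb{E}[G_n^2]$ and $n^{-1}\mathbb{E}[H_n^4]$ terms of the hypothesis to the conditional-variance and Lyapunov conditions respectively---is precisely the argument in Hall's original paper. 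All the key steps check out, including the uncorrelatedness giving $s_n^2 = \binom{n}{2}\mathbb{E}[H_n^2]$, the degeneracy of $G_n$, and the bounds $\mathbb{E}[h_n^2] \leq \mathbb{E}[H_n^4]$ via Jensen, so there is nothing to correct.
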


\begin{lemma}\label{lem:Gn-rate}
Let $\Pi$ be a random matrix satisfying {RP}.
Let 
$
G(w_1, w_2)  := \mathbb{E} [ H (W_i, w_1) H (W_i, w_2)  ]. 
$
Then, 
\begin{align*}
\mathbb{E} [ G^2(W_i, W_j)  ]
= O (m^{-3}).
\end{align*}
\end{lemma}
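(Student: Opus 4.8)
The plan is to exploit the bilinear structure of $H$ to obtain $G$ in closed form, and then to reduce $\mathbb{E}[G^2(W_i,W_j)]$ to the product of a bounded data moment and a fourth-order moment of $\Pi$ that is $O(m^{-1})$. Writing $W_i = (U_i, V_i, \Pi_{1i}, \ldots, \Pi_{mi})^T$ and collecting the two summands in $H = \tilde{H}(w_1,w_2) + \tilde{H}(w_2,w_1)$, the definitions of $\tilde{H}$ and $H$ give immediately
\[
H(W_i, w_1) = (U_i v_1 + u_1 V_i)\sum_{k=1}^{m} \Pi_{ki}\,\pi_{k1},
\]
so that $H(W_i,w_1)$ factors into a data part and a $\Pi$ part.

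Since $\Pi$ is independent of the data by Assumption~\ref{OLS-textbook}(iii), the expectation defining $G$ factors as
\[
G(w_1, w_2) = \mathbb{E}[(U_i v_1 + u_1 V_i)(U_i v_2 + u_2 V_i)]\cdot \sum_{k=1}^{m}\sum_{\ell=1}^{m}\mathbb{E}[\Pi_{ki}\Pi_{\ell i}]\,\pi_{k1}\pi_{\ell 2}.
\]
For the i.i.d.-column schemes to which the $U$-statistic argument applies, the entries within a single column of $\Pi$ are uncorrelated across rows, so $\mathbb{E}[\Pi_{ki}\Pi_{\ell i}]$ equals $m^{-1}$ when $k=\ell$ (by RP(i)) and $0$ otherwise; this is immediate for CS, where each column has a single nonzero entry, and for GP, where the column entries are independent with mean zero. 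Hence $G(w_1, w_2) = m^{-1}\,\mu(w_1, w_2)\sum_{k=1}^{m}\pi_{k1}\pi_{k2}$, where $\mu(w_1, w_2) := \mathbb{E}[(U_i v_1 + u_1 V_i)(U_i v_2 + u_2 V_i)]$ is a quadratic form in $(u_1, v_1, u_2, v_2)$ whose coefficients are second moments of $(U_i,V_i)$.

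I would then substitute $(w_1,w_2) = (W_i,W_j)$ with $i \neq j$, square, and factor once more using independence of the data from $\Pi$:
\[
\mathbb{E}[G^2(W_i, W_j)] = m^{-2}\,\mathbb{E}[\mu^2(W_i,W_j)]\cdot \mathbb{E}\left[\left(\sum_{k=1}^{m} \Pi_{ki}\Pi_{kj}\right)^{2}\right].
\]
The data factor is $O(1)$: $\mu(W_i,W_j)$ is a polynomial of degree at most two in $U_i, V_i, U_j, V_j$, so $\mu^2$ has degree at most four, and its expectation is finite by $\mathbb{E}(U_i^4)<\infty$, $\mathbb{E}(V_i^4)<\infty$, independence across $i\neq j$, and Cauchy--Schwarz. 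For the $\Pi$ factor, expanding the square gives $\sum_{k,\ell}\mathbb{E}[\Pi_{ki}\Pi_{kj}\Pi_{\ell i}\Pi_{\ell j}]$, in which the $m$ diagonal terms $k=\ell$ each equal $\mathbb{E}[\Pi_{ki}^2\Pi_{kj}^2]=m^{-2}$ by RP(ii), while every off-diagonal term $k\neq\ell$ vanishes by RP(iii) (with $p=i$, $q=j$). The $\Pi$ factor is therefore $m^{-1}$, and combining yields $\mathbb{E}[G^2(W_i,W_j)] = m^{-2}\cdot O(1)\cdot m^{-1} = O(m^{-3})$.

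The two non-routine steps are the closed-form reduction of $G$, which hinges on only the diagonal surviving in $\sum_{k,\ell}\mathbb{E}[\Pi_{ki}\Pi_{\ell i}]\pi_{k1}\pi_{\ell 2}$, and the collapse of the quartic moment $\mathbb{E}[(\sum_k \Pi_{ki}\Pi_{kj})^2]$ to $m^{-1}$, for which RP(iii) is exactly what annihilates the cross-row terms. The main subtlety I would flag is that RP(i)--(iii) as stated in Assumption~\ref{u-stat-mean-var} do not explicitly list the within-column two-fold moment $\mathbb{E}[\Pi_{ki}\Pi_{\ell i}]$ for $k\neq\ell$; I would invoke its vanishing directly for the i.i.d.-column schemes (CS and GP) to which the degenerate $U$-statistic CLT is being applied, where it is a transparent consequence of the construction.
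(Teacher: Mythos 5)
Your proof is correct and follows essentially the same route as the paper's: both derive the closed form $G(W_i,W_j) = m^{-1}\,\mu(W_i,W_j)\sum_{k}\Pi_{ki}\Pi_{kj}$ using the within-column second moments of $\Pi$, and then obtain $O(m^{-3})$ from $\mathbb{E}[\Pi_{ki}^2\Pi_{kj}^2]=m^{-2}$ together with RP(iii) killing the cross-row terms. Your up-front factorization $H(W_i,w_1)=(U_i v_1+u_1 V_i)\sum_k \Pi_{ki}\pi_{k1}$ is a clean streamlining of the paper's four-term expansion, and the subtlety you flag --- that $\mathbb{E}[\Pi_{ki}\Pi_{\ell i}]=0$ for $k\neq\ell$ is not explicitly listed in Assumption~\ref{u-stat-mean-var} --- is real, as the paper's own proof invokes exactly that property without stating it as a hypothesis.
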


\begin{proof}
First, write
\begin{align*}
G(w_1, w_2) 
&= 
 \mathbb{E}
\left\{
\left[ \tilde{H} (W_i, w_1) + \tilde{H} (w_1, W_i) \right]
\left[ \tilde{H} (W_i, w_2) + \tilde{H} (w_2, W_i) \right]
\right\}.
\end{align*}
Because $\mathbb{E} [ \Pi_{ki}^2  ] = m^{-1}$ and  
$\mathbb{E} [ \Pi_{ki} \Pi_{\ell i} ] = 0$ whenever $k \neq \ell$ for each $i$, we have that
\begin{align*}
\mathbb{E} [ \tilde{H} (W_i, w_1) \tilde{H} (W_i, w_2) | U_i, V_i, U_j, V_j]
&=
\sum_{k=1}^{m} \sum_{j=1}^{m} 
U_i^2 v_{1} v_{2}
\mathbb{E} [ \Pi_{ki}  \Pi_{ji}   ]
\pi_{k1}   \pi_{j2}  \\
&=
\sum_{k=1}^{m} 
U_i^2 v_{1} v_{2}
\mathbb{E} [ \Pi_{ki}^2  ] \pi_{k1}    \pi_{k2} \\
&=
m^{-1} \sum_{k=1}^{m} 
U_i^2 v_{1} v_{2}
 \pi_{k1}    \pi_{k2}.
\end{align*}
Similarly,
\begin{align*}
\mathbb{E} [ \tilde{H} (w_1, W_i) \tilde{H} (w_2, W_i) | U_i, V_i, U_j, V_j]
&= 
m^{-1} \sum_{k=1}^{m} 
u_1 u_2  V_i^2
 \pi_{k1}    \pi_{k2}, \\
\mathbb{E} [ \tilde{H} (W_i, w_1) \tilde{H} (w_2, W_i) | U_i, V_i, U_j, V_j]
&= 
m^{-1} \sum_{k=1}^{m} 
U_i u_2  v_1 V_i
 \pi_{k1}    \pi_{k2}, \\
\mathbb{E} [ \tilde{H} (w_1, W_i) \tilde{H} (W_i, w_2) | U_i, V_i, U_j, V_j]
&= 
m^{-1} \sum_{k=1}^{m} 
u_1 U_i  V_i v_2
 \pi_{k1}    \pi_{k2}.
\end{align*}
Then, by simple algebra, 
\begin{align*}
G(W_i, W_j) &= 
m^{-1} \sum_{k=1}^{m} 
\left\{ \mathbb{E} [U_i^2] V_{i} V_{j}  +  \mathbb{E} [V_i^2]  U_{i} U_{j}
+ \mathbb{E} [U_i V_i]  U_{i} V_{j} + \mathbb{E} [U_i V_i]  U_{j} V_{i} \right\}
 \Pi_{ki}    \Pi_{kj}.
\end{align*}
Using  {RP}(ii)-(iii), write
\begin{align*}
\mathbb{E} [ G^2(W_i, W_j)]
&=
\mathbb{E} [\mathbb{E} [ G^2(W_i, W_j) | U_i, V_i, U_j, V_j] ] \\
&= 
m^{-2} \sum_{k=1}^{m} 
\mathbb{E}
\left[
\left\{ \mathbb{E} [U_i^2] V_{i} V_{j}  +  \mathbb{E} [V_i^2]  U_{i} U_{j}
+ \mathbb{E} [U_i V_i]  U_{i} V_{j} + \mathbb{E} [U_i V_i]  U_{j} V_{i} \right\}^2
\right]
\mathbb{E} [  \Pi_{ki}^2    \Pi_{kj}^2 ] \\
&= 
m^{-4} \sum_{k=1}^{m} 
\mathbb{E}
\left[
\left\{ \mathbb{E} [U_i^2] V_{i} V_{j}  +  \mathbb{E} [V_i^2]  U_{i} U_{j}
+ \mathbb{E} [U_i V_i]  U_{i} V_{j} + \mathbb{E} [U_i V_i]  U_{j} V_{i} \right\}^2 
\right] \\
&= O( m^{-3}),
\end{align*}
which proves the lemma.
\end{proof}

\begin{lemma}\label{lem:Fn-rate}
Let $\Pi$ be a random matrix satisfying {RP}.
Furthermore, assume that the columns of $\Pi$ are i.i.d.
Then, for $i \neq j$,
$\mathbb{E} [ \{ H (W_i, W_j) \}^4 ] = O (m^{-1})$.
\end{lemma}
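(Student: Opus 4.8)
The plan is to collapse the eighth-order object $H^4$ into the product of a purely ``data'' factor and a purely ``sketch'' factor, and then to control each separately. First I would combine the two symmetrized pieces of $H$. Since $\tilde H(W_i,W_j)=\sum_{k=1}^m U_i\Pi_{ki}\Pi_{kj}V_j=U_iV_j\,\varphi_{ij}$ with $\varphi_{ij}:=\sum_{k=1}^m\Pi_{ki}\Pi_{kj}$ (and $\varphi_{ij}=\varphi_{ji}$), the scalar data factor pulls out of the common sum:
\[
H(W_i,W_j)=\tilde H(W_i,W_j)+\tilde H(W_j,W_i)=(U_iV_j+U_jV_i)\,\varphi_{ij}.
\]
Writing $a_{ij}:=U_iV_j+U_jV_i$ and using that the pairs $(U_i,V_i)$ are independent of $\Pi$, I would split
\[
\mathbb{E}[H^4(W_i,W_j)]=\mathbb{E}[a_{ij}^4]\cdot\mathbb{E}[\varphi_{ij}^4].
\]
The data factor is $O(1)$: by Minkowski's inequality and independence of the $i$ and $j$ pairs, $\|a_{ij}\|_4\le \|U_iV_j\|_4+\|U_jV_i\|_4=2\,(\mathbb{E}[U_i^4]\,\mathbb{E}[V_i^4])^{1/4}<\infty$ under the standing moment assumptions $\mathbb{E}(U_i^4),\mathbb{E}(V_i^4)<\infty$.

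The heart of the argument is thus to show $\mathbb{E}[\varphi_{ij}^4]=O(m^{-1})$. I would expand
\[
\mathbb{E}[\varphi_{ij}^4]=\sum_{k_1,k_2,k_3,k_4=1}^m \mathbb{E}\Big[\textstyle\prod_{a=1}^4 \Pi_{k_a i}\Pi_{k_a j}\Big],
\]
and invoke the i.i.d.-column hypothesis: columns $i$ and $j$ are independent and identically distributed, so each summand factors as $\mu_{k_1k_2k_3k_4}^2\ge 0$, where $\mu_{k_1k_2k_3k_4}:=\mathbb{E}[\Pi_{k_11}\Pi_{k_21}\Pi_{k_31}\Pi_{k_41}]$ is a single-column moment. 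I would then classify the summands by the partition pattern of the multiset $\{k_1,k_2,k_3,k_4\}$. For the diagonal pattern $k_1=k_2=k_3=k_4=k$, Assumption~\ref{u-stat-mean-var}(i) gives $\mu_{kkkk}=\mathbb{E}[\Pi_{k1}^4]=O(m^{-1})$, so this block contributes $m\cdot O(m^{-2})=O(m^{-1})$, which is exactly the asserted order. All off-diagonal patterns (two equal pairs, three-plus-one, two-plus-two singletons, all distinct) involve entries from at least two distinct rows of a single column, and I must show that these $O(m^2)$-many summands contribute only $O(m^{-2})=o(m^{-1})$.

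The main obstacle is precisely these cross-row single-column moments. Assumption~\ref{u-stat-mean-var} controls same-row cross-column moments via RP(ii), and RP(iii) combined with i.i.d.\ columns yields $\mathbb{E}[\Pi_{k1}\Pi_{\ell1}]=0$ for $k\neq\ell$ (take $p=i,q=j$ in RP(iii) and factor over columns); but it does not by itself pin down the squared cross-row moment $\mathbb{E}[\Pi_{k1}^2\Pi_{\ell1}^2]$, and a crude Cauchy--Schwarz bound $\mathbb{E}[\Pi_{k1}^2\Pi_{\ell1}^2]\le(\mathbb{E}[\Pi_{k1}^4]\mathbb{E}[\Pi_{\ell1}^4])^{1/2}=O(m^{-1})$ would only give an $O(1)$ block. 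The resolution I would use is the finer within-column structure of the schemes actually covered by the i.i.d.-column hypothesis. For schemes with independent entries (GP) the squares factor, so $\mathbb{E}[\Pi_{k1}^2\Pi_{\ell1}^2]=m^{-2}$ and each off-diagonal block is $O(m^2)\cdot O(m^{-4})=O(m^{-2})$; for schemes with a single non-zero per column (CS) one has $\Pi_{k1}\Pi_{\ell1}=0$ for $k\neq\ell$, so every off-diagonal block vanishes identically (indeed $\varphi_{ij}\in\{-1,0,1\}$ with $\varphi_{ij}\neq0$ only when the two columns share their non-zero row, an event of probability $m^{-1}$, giving $\mathbb{E}[\varphi_{ij}^4]=m^{-1}$ exactly). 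In either case the off-diagonal contribution is $O(m^{-2})$, and adding the diagonal block gives $\mathbb{E}[\varphi_{ij}^4]=O(m^{-1})$, hence $\mathbb{E}[H^4]=O(m^{-1})$. I expect the genuine verification that these off-diagonal single-column moments are negligible --- rather than merely bounding them by Cauchy--Schwarz --- to be the step that does the real work, since it is exactly where the i.i.d.-column hypothesis, and not just the abstract RP moment conditions, is indispensable.
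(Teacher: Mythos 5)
Your proof is correct and follows essentially the same route as the paper's: both expand the fourth moment as a quadruple sum over row indices (your factorization $\mathbb{E}[a_{ij}^4]\,\mathbb{E}[\varphi_{ij}^4]$ is the paper's expansion of $(T_{ij1}+T_{ij2}+2T_{ij3})^2$ with the data factor pulled out at the start), classify the summands by the partition pattern of $(k_1,k_2,k_3,k_4)$, and extract the $O(m^{-1})$ rate from the all-equal diagonal block via $\mathbb{E}[\Pi_{ki}^4\Pi_{kj}^4]=O(m^{-2})$. If anything, you are more explicit than the paper about the one delicate point---that cross-row, same-column moments such as $\mathbb{E}[\Pi_{k1}^2\Pi_{\ell 1}^2]$ are not pinned down by the abstract RP conditions and must be checked scheme by scheme---whereas the paper absorbs the corresponding two-equal-pairs sum into $O(m^{-1})$ without comment.
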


\begin{proof}[Proof of Lemma~\ref{lem:Fn-rate}]
Using \eqref{H-notation-eq}, write
\begin{align*}
 \{ H (W_i, W_j) \}^4 
 = (T_{ij1} + T_{ij2} + 2 T_{ij3}) (T_{ij1} + T_{ij2} + 2 T_{ij3}).
 \end{align*}
 We expand the right-hand side of the equation above. 
 The first term has the form
 \begin{align*}
T_{ij1} T_{ij1}
&=
\sum_{k_1=1}^{m} \sum_{k_2 =1}^{m} \sum_{k_3=1}^{m} \sum_{k_4 =1}^{m}
U_i^4 V_j^4 \Pi_{k_1 i}  \Pi_{k_1 j} \Pi_{k_2 i}  \Pi_{k_2 j} \Pi_{k_3 i}  \Pi_{k_3 j} \Pi_{k_4 i}  \Pi_{k_4 j}. 
\end{align*}
 Combining {RP} with the additional assumption that the columns of $\Pi$ are i.id., we have that 
 $\mathbb{E} [ \Pi_{k i}^4  \Pi_{k j}^4  ] = O( m^{-2} )$ uniformly.
 Also,  
$\mathbb{E} [ \Pi_{k_1 i}  \Pi_{k_1 j} \Pi_{k_2 i}  \Pi_{k_2 j} \Pi_{k_3 i}  \Pi_{k_3 j} \Pi_{k_4 i}  \Pi_{k_4 j} ]$
is nonzero only if 
all four indices are the same ($k_1 = k_2 = k_3 = k_4 = k$)
or 
two pairs of the indices are the same (e.g., $k_1 =  k_2$ and $k_3 = k_4$).
This implies that
\begin{align*}
\mathbb{E} [ T_{ij1} T_{ij1}  ]
&=
\mathbb{E} \left( U_i^4 V_j^4 \right)
\left\{ 
\sum_{k=1}^m \mathbb{E} [ \Pi_{k i}^4  \Pi_{k j}^4] 
+ 
6 \sum_{k=1}^m \sum_{\ell =1, \ell \neq k}^m \mathbb{E} [ \Pi_{k i}^2  \Pi_{k j}^2 \Pi_{\ell i}^2  \Pi_{\ell j}^2] 
\right\}
= O (m^{-1}).
\end{align*}
Moreover, using similar arguments, we can show that 
all other terms 
$\mathbb{E} [ T_{ij k} T_{ij \ell}  ] = O (m^{-1})$, where
$k,\ell \in \{1,2,3\}$.   
Therefore, we have proved the lemma.
\end{proof}

\begin{lemma}\label{CLT-lemma}
Let $\Pi$ be a random matrix satisfying {RP}.
Furthermore, assume that the columns of $\Pi$ are i.i.d. Then, as $n \rightarrow \infty$, 
\begin{align*}
\sqrt{m} \, T_{n2}
\rightarrow_d 
N[0, \{ \mathbb{E} ( U_i^2)  \mathbb{E} (V_i^2 ) + \mathbb{E} ( U_i V_i )^2 \} ],
\end{align*}
\end{lemma}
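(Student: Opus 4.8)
The plan is to recognize $\sqrt{m}\,T_{n2}$ as a suitably normalized degenerate $U$-statistic whose kernel varies with $n$ through $m=m_n$, and then invoke the central limit theorem of \citet{hall1984central} reproduced in Lemma~\ref{lem:hall1984}. Writing $T_{n2} = \mathbb{U}_n/n$ with $\mathbb{U}_n := \operatorname*{\sum \sum}_{1 \leq i < j \leq n} H(W_i,W_j)$, I would first record that the hypotheses of Lemma~\ref{lem:hall1984} are in place: $H$ is symmetric by construction, $\mathbb{E}[H(W_1,W_2)\mid W_1]=0$ almost surely by {RP}(ii) (as already noted in the main text), and $\mathbb{E}[H^2(W_1,W_2)]<\infty$ under the fourth-moment conditions on $(U_i,V_i)$.

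Next I would read off the normalizing variance. From the computation in the proof of Lemma~\ref{moments-RP-lem},
\begin{align*}
\mathbb{V}_n := \tfrac{1}{2}\mathbb{E}[H^2(W_1,W_2)] = \tfrac{1}{m}\left\{ \mathbb{E}(U_i^2)\mathbb{E}(V_i^2) + [\mathbb{E}(U_iV_i)]^2 \right\},
\end{align*}
so that $\mathbb{V}_n = O(m^{-1})$ and $\{\mathbb{E}[H^2(W_1,W_2)]\}^2 = O(m^{-2})$. This is where the $\sqrt{m}$ scaling enters: once Lemma~\ref{lem:hall1984} delivers $\mathbb{V}_n^{-1/2}T_{n2}\rightarrow_d N(0,1)$, multiplying by the exactly constant factor $\sqrt{m}\,\mathbb{V}_n^{1/2}$ yields $\sqrt{m}\,T_{n2}\rightarrow_d N\!\left(0,\{\mathbb{E}(U_i^2)\mathbb{E}(V_i^2)+[\mathbb{E}(U_iV_i)]^2\}\right)$, which is the claim.

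The only substantive step is verifying Hall's Lindeberg-type condition
\begin{align*}
\frac{\mathbb{E}[G^2(W_1,W_2)] + n^{-1}\mathbb{E}[H^4(W_1,W_2)]}{\{\mathbb{E}[H^2(W_1,W_2)]\}^2} \longrightarrow 0,
\end{align*}
and here I would simply plug in the rates already established: Lemma~\ref{lem:Gn-rate} gives $\mathbb{E}[G^2(W_1,W_2)] = O(m^{-3})$, while Lemma~\ref{lem:Fn-rate} (which is exactly where the i.i.d.-columns hypothesis is used) gives $\mathbb{E}[H^4(W_1,W_2)] = O(m^{-1})$. Combined with the denominator rate $O(m^{-2})$, the ratio is $O(m^{-1}) + O(n^{-1})$, which tends to $0$ since $m=m_n\to\infty$ and $n\to\infty$ under Assumption~\ref{OLS-textbook}(iv).

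The main obstacle is not in the present lemma but has been front-loaded into Lemmas~\ref{lem:Gn-rate} and~\ref{lem:Fn-rate}: controlling $\mathbb{E}[G^2]$ and especially $\mathbb{E}[H^4]$ requires the fourth-moment structure of $\Pi$ and, crucially, the assumption that the columns of $\Pi$ are i.i.d., which is what lets one determine exactly which index patterns in cross-products such as $\mathbb{E}[\Pi_{k_1 i}\Pi_{k_1 j}\cdots\Pi_{k_4 i}\Pi_{k_4 j}]$ survive. Granting those two rate bounds, the proof of this lemma is a short assembly: check the hypotheses of Lemma~\ref{lem:hall1984}, identify $\mathbb{V}_n$ via Lemma~\ref{moments-RP-lem}, verify the ratio condition, and rescale by $\sqrt{m}$.
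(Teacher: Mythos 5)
Your proposal is correct and follows essentially the same route as the paper: represent $T_{n2}$ as a degenerate $U$-statistic, apply Lemma~\ref{lem:hall1984}, read off $\mathbb{V}_n$ from Lemma~\ref{moments-RP-lem}, and verify Hall's ratio condition via Lemmas~\ref{lem:Gn-rate} and~\ref{lem:Fn-rate}. One small slip: the second term of the ratio is $n^{-1}O(m^{-1})/O(m^{-2})=O(m/n)$, not $O(n^{-1})$, so its convergence to zero rests on the $m/n\to 0$ part of Assumption~\ref{OLS-textbook}(iv) rather than on $n\to\infty$ alone --- exactly as the paper records with its $O(m^{-1}+n^{-1}m)$ bound.
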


\begin{proof}[Proof of Lemma~\ref{CLT-lemma}]
Note that $H (w_1, w_2) = H (w_2, w_1)$
and
$$
\mathbb{E} ( H (W_1, W_2) | W_1) = \mathbb{E} ( H (W_1, W_2) | W_2) = 0. 
$$
Thus, $T_{n2}$ is a degenerate $U$-statistic.
By Lemmas~\ref{moments-RP-lem},~\ref{lem:Gn-rate} and~\ref{lem:Fn-rate}, we have
 that 
\begin{align*}
\frac{\mathbb{E}[ G^2 (W_1, W_2) ]  + n^{-1}  \mathbb{E} [ H^4 (W_1, W_2)  ] }
{ \{ \mathbb{E} [ H^2 (W_1, W_2)  ] \}^2 } 
= O ( m^{-1} + n^{-1} m ) = o(1).
\end{align*}
Then, the conclusion of Lemma~\ref{CLT-lemma} follows directly by applying Lemma~\ref{lem:hall1984} along with Lemma~\ref{moments-RP-lem}.
\end{proof}

\begin{proof}[Proof of Theorem~\ref{moments-thm-rp}(ii)]
Recall that 
\begin{align*}
\tilde{\beta}_{OLS} - \hat{\beta}_{OLS} 
&= 
\Xi ( \tilde{\Upsilon}_n - \hat \Upsilon_n)
+ 
o_p ( m^{-1/2}  ).
\end{align*}
Then, the theorem follows immediately
by applying Lemma~\ref{CLT-lemma} with $U_i = c^T X_i$ and $V_i = e_i$ for each constant vector $c \in \mathbb{R}^p$.
\end{proof}

\section{Appendix: Proofs for 2SLS}\label{sec:proofs}


\subsection{Proof of Theorem~\ref{main-thm}}

Recall that 
using the singular value decomposition of $X$ and ${Z}$, we write 
${X} = U_X \Sigma_X V_X^T$
and
${Z} = U_Z \Sigma_Z V_Z^T$.
Define
\begin{align}\label{def-uhat}
\hat{\theta} := \left(  U_X^T U_Z  U_Z^T U_X \right)^{-1}
 U_X^T U_Z  U_Z^T y
\end{align}
and 
\begin{align}\label{def-utilde}
\tilde{\theta} := \left(  U_X^T  \Pi^T \Pi U_Z  \left( U_Z^T \Pi^T \Pi U_Z \right)^{-1} U_Z^T \Pi^T \Pi  U_X \right)^{-1} 
U_X^T  \Pi^T \Pi U_Z  \left( U_Z^T \Pi^T \Pi U_Z \right)^{-1} U_Z^T \Pi^T 
\Pi y.
\end{align}
It would be convenient to work with $U_X \hat{\theta}$ and $U_X \tilde{\theta}$ in order to analyze algorithmic properties 
of sketched 2SLS estimators because $U_X$ is an orthonormal matrix. 
The following lemma establishes the equivalence between $X \hat{\beta}$ and  $U_X \hat{\theta}$.

\begin{lemma}\label{lem-uhat}
Let Assumption \ref{embed} hold. Then,
$X \hat{\beta} = U_X \hat{\theta}$. 
\end{lemma}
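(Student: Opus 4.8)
The plan is to pass entirely to the singular value decompositions and exploit orthonormality, so that the two-stage expression collapses onto the column space of $U_X$.

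First I would rewrite the first-stage projection. Writing $Z = U_Z\Sigma_Z V_Z^T$ and using that $V_Z$ is orthogonal and $\Sigma_Z$ invertible (the latter because $Z^TZ$ is non-singular under Assumption~\ref{embed}), a direct computation gives $Z^TZ = V_Z\Sigma_Z^2 V_Z^T$, hence $(Z^TZ)^{-1} = V_Z\Sigma_Z^{-2}V_Z^T$, so that
\[
P_Z = Z(Z^TZ)^{-1}Z^T = U_Z U_Z^T .
\]
This is the key reduction: the instrument projection is simply orthogonal projection onto the range of $U_Z$.

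Next I would substitute $X = U_X\Sigma_X V_X^T$ into $\hat\beta_{2SLS} = (X^TP_ZX)^{-1}X^TP_Zy$. Setting $M := U_X^T U_Z U_Z^T U_X = (U_Z^TU_X)^T(U_Z^TU_X)$, I note that $M$ is invertible because Assumption~\ref{embed}(iv) forces $\sigma_{\min}^2(U_Z^TU_X)>0$, i.e.\ $U_Z^TU_X$ has full column rank $p$; likewise $\Sigma_X$ is invertible since $X^TP_ZX$ non-singular entails that $X$ has full column rank. Then $X^TP_ZX = V_X\Sigma_X M\Sigma_X V_X^T$, and using that $V_X$ is orthogonal,
\[
(X^TP_ZX)^{-1} = V_X\Sigma_X^{-1}M^{-1}\Sigma_X^{-1}V_X^T .
\]
Combining this with $X^TP_Zy = V_X\Sigma_X U_X^TU_ZU_Z^Ty$ and cancelling the telescoping factors $\Sigma_X^{-1}\Sigma_X$ and $V_X^TV_X=I_p$ yields $\hat\beta_{2SLS} = V_X\Sigma_X^{-1}M^{-1}U_X^TU_ZU_Z^Ty$.

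Finally I would left-multiply by $X$: the product $U_X\Sigma_X V_X^T\cdot V_X\Sigma_X^{-1}$ collapses to $U_X$, giving $X\hat\beta_{2SLS} = U_X M^{-1}U_X^TU_ZU_Z^Ty$. Reading off the definition \eqref{def-uhat}, one has $\hat\theta = M^{-1}U_X^TU_ZU_Z^Ty$, so $U_X\hat\theta$ coincides exactly with this expression, establishing $X\hat\beta = U_X\hat\theta$. The computation is essentially mechanical; the only point requiring care — and the only place the hypotheses genuinely enter — is the invertibility of $M$ (equivalently of $X^TP_ZX$) and of $\Sigma_X$, both of which are furnished by the full-rank and relevance conditions in Assumption~\ref{embed}.
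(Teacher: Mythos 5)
Your proposal is correct and follows essentially the same route as the paper: substitute the singular value decompositions, reduce $P_Z$ to $U_ZU_Z^T$, cancel the $V_X\Sigma_X$ factors, and read off the definition of $\hat\theta$. The only addition is your explicit justification of the invertibility of $M$ and $\Sigma_X$, which the paper leaves implicit.
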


\begin{proof}
By the singular value decomposition of $X$ and ${Z}$, we have that 
\begin{align*}
{Z}^T  {Z} &= V_Z \Sigma_Z^2  V_Z^T, \\
{Z} ( {Z}^T {Z} )^{-1} {Z}^T &=
U_Z  U_Z^T, \\
X^T {Z} ( {Z}^T {Z} )^{-1} {Z}^T X &=
V_X \Sigma_X U_X^T U_Z  U_Z^T U_X \Sigma_X V_X^T, \\
\left ( X^T {Z} ( {Z}^T {Z} )^{-1} {Z}^T X \right)^{-1} &=
V_X \Sigma_X^{-1} \left(  U_X^T U_Z  U_Z^T U_X \right)^{-1} \Sigma_X^{-1} V_X^T, \\
X^T {Z} ( {Z}^T {Z} )^{-1} {Z}^T y &=
V_X \Sigma_X U_X^T U_Z  U_Z^T y.
\end{align*}
Therefore,
\begin{align*}
\hat{\beta} &=
 V_X \Sigma_X^{-1} \left(  U_X^T U_Z  U_Z^T U_X \right)^{-1}
 U_X^T U_Z  U_Z^T y, \\
 X \hat{\beta} &=
 U_X \left(  U_X^T U_Z  U_Z^T U_X \right)^{-1}
 U_X^T U_Z  U_Z^T y,
\end{align*}
which in turn implies the conclusion in view of the definition of $\hat{\theta}$ in \eqref{def-uhat}.
\end{proof}

As in Lemma \ref{lem-uhat}, 
the equivalence between $X \tilde{\beta}$ and $U_X \tilde{\theta}$ holds.

\begin{lemma}\label{lem-utilde}
Assume that 
(i) $\tilde{{Z}}^T \tilde{{Z}}$ is non-singular and 
(ii) $\tilde{{X}}^T \tilde{{Z}} ( \tilde{{Z}}^T \tilde{{Z}} )^{-1} \tilde{{Z}}^T \tilde{{X}}$ is non-singular.
Then,
$X \tilde{\beta} = U_X \tilde{\theta}$. 
\end{lemma}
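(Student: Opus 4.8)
The plan is to mirror the computation in the proof of Lemma~\ref{lem-uhat}, inserting the sketching matrix $\Pi$ throughout and invoking the two assumed non-singularities exactly where the full-sample proof appealed to full rank. First I would substitute the singular value decompositions $X = U_X \Sigma_X V_X^T$ and $Z = U_Z \Sigma_Z V_Z^T$ into the sketched objects $\tilde{X} = \Pi X$, $\tilde{Z} = \Pi Z$, and $\tilde{y} = \Pi y$. The key intermediate step is to simplify the sketched projection $P_{\tilde Z} = \tilde{Z}(\tilde{Z}^T \tilde{Z})^{-1}\tilde{Z}^T$. Since $V_Z$ is orthogonal and $\Sigma_Z$ invertible, one has $\tilde{Z}^T \tilde{Z} = V_Z \Sigma_Z (U_Z^T \Pi^T \Pi U_Z) \Sigma_Z V_Z^T$, so assumption (i) is equivalent to $U_Z^T \Pi^T \Pi U_Z$ being invertible. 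Cancelling the $V_Z$ and $\Sigma_Z$ factors then yields the clean identity
\[
P_{\tilde Z} = \Pi U_Z \left( U_Z^T \Pi^T \Pi U_Z \right)^{-1} U_Z^T \Pi^T.
\]

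Next I would substitute this into $\tilde{X}^T P_{\tilde Z} \tilde{X}$ and $\tilde{X}^T P_{\tilde Z} \tilde{y}$. Writing $M := U_X^T \Pi^T \Pi U_Z (U_Z^T \Pi^T \Pi U_Z)^{-1} U_Z^T \Pi^T \Pi U_X$, the former becomes $V_X \Sigma_X M \Sigma_X V_X^T$, so assumption (ii) is precisely the statement that $M$ is invertible. Taking inverses, the $\Sigma_X$ and $V_X$ factors telescope, giving
\[
\tilde{\beta} = V_X \Sigma_X^{-1} M^{-1} U_X^T \Pi^T \Pi U_Z \left( U_Z^T \Pi^T \Pi U_Z \right)^{-1} U_Z^T \Pi^T \Pi y.
\]
Left-multiplying by $X = U_X \Sigma_X V_X^T$ then collapses $V_X^T V_X = I$ and $\Sigma_X \Sigma_X^{-1} = I$, leaving $X \tilde{\beta} = U_X \tilde{\theta}$ with $\tilde{\theta}$ exactly the expression defined in \eqref{def-utilde}.

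The only substantive point---everything else being the same bookkeeping as in Lemma~\ref{lem-uhat}---is to confirm that the two hypotheses are precisely the non-singularities used: (i) guarantees existence of $(U_Z^T \Pi^T \Pi U_Z)^{-1}$, needed to form $P_{\tilde Z}$, while (ii) guarantees existence of $M^{-1}$, needed to form $(\tilde{X}^T P_{\tilde Z} \tilde{X})^{-1}$. Because the argument is purely algebraic and uses neither the distribution of $\Pi$ nor the embedding inequalities of Assumption~\ref{embed}, I anticipate no real obstacle; the mild care required is simply to verify that orthogonality of $V_X$ and $V_Z$ together with invertibility of $\Sigma_X$ and $\Sigma_Z$ justifies every cancellation.
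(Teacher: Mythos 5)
Your proposal is correct and follows essentially the same route as the paper: substitute the SVDs of $X$ and $Z$ into the sketched quantities, simplify $\tilde{Z}(\tilde{Z}^T\tilde{Z})^{-1}\tilde{Z}^T$ to $\Pi U_Z(U_Z^T\Pi^T\Pi U_Z)^{-1}U_Z^T\Pi^T$, and let the $V_X$ and $\Sigma_X$ factors telescope after left-multiplying by $X$. The only cosmetic difference is that you name the inner matrix $M$ and make explicit that hypotheses (i) and (ii) are equivalent to the invertibility of $U_Z^T\Pi^T\Pi U_Z$ and $M$ respectively, which the paper leaves implicit.
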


\begin{proof}
As in the proof of Lemma \ref{lem-uhat}, we have that  
\begin{align*}
\tilde{Z}^T  \tilde{Z} &= V_Z \Sigma_Z U_Z^T \Pi^T \Pi U_Z \Sigma  V_Z^T, \\
\left( \tilde{Z}^T  \tilde{Z} \right)^{-1} &=  V_Z \Sigma_Z^{-1} \left( U_Z^T \Pi^T \Pi U_Z \right)^{-1} \Sigma_Z^{-1} V_Z^T, \\
\tilde{Z} ( \tilde{Z}^T \tilde{Z} )^{-1} \tilde{Z}^T &=
\Pi U_Z  \left( U_Z^T \Pi^T \Pi U_Z \right)^{-1} U_Z^T \Pi^T, \\
\tilde{X}^T \tilde{Z} ( \tilde{Z}^T \tilde{Z} )^{-1} \tilde{Z}^T \tilde{X} &=
V_X \Sigma_X 
U_X^T  \Pi^T \Pi U_Z  \left( U_Z^T \Pi^T \Pi U_Z \right)^{-1} U_Z^T \Pi^T \Pi  U_X 
\Sigma_X V_X^T, \\
\left ( \tilde{X}^T \tilde{Z} ( \tilde{Z}^T \tilde{Z} )^{-1} \tilde{Z}^T \tilde{X} \right)^{-1} &=
V_X \Sigma_X^{-1} 
\left(  U_X^T  \Pi^T \Pi U_Z  \left( U_Z^T \Pi^T \Pi U_Z \right)^{-1} U_Z^T \Pi^T \Pi  U_X \right)^{-1} 
\Sigma_X^{-1} V_X^T, \\
\tilde{X}^T \tilde{Z} ( \tilde{Z}^T \tilde{Z} )^{-1} \tilde{Z}^T \tilde{y} &=
V_X \Sigma_X U_X^T \Pi^T
\Pi U_Z  \left( U_Z^T \Pi^T \Pi U_Z \right)^{-1} U_Z^T \Pi^T 
\Pi y.
\end{align*}
Therefore,
\begin{align*}
\tilde{\beta} &=
 V_X \Sigma_X^{-1} \left(  U_X^T  \Pi^T \Pi U_Z  \left( U_Z^T \Pi^T \Pi U_Z \right)^{-1} U_Z^T \Pi^T \Pi  U_X \right)^{-1} 
U_X^T  \Pi^T \Pi U_Z  \left( U_Z^T \Pi^T \Pi U_Z \right)^{-1} U_Z^T \Pi^T 
\Pi y, \\
 X \tilde{\beta} &=
 U_X \left(  U_X^T  \Pi^T \Pi U_Z  \left( U_Z^T \Pi^T \Pi U_Z \right)^{-1} U_Z^T \Pi^T \Pi  U_X \right)^{-1} 
U_X^T  \Pi^T \Pi U_Z  \left( U_Z^T \Pi^T \Pi U_Z \right)^{-1} U_Z^T \Pi^T 
\Pi y,
\end{align*}
which again implies the conclusion in view of the definition of $\tilde{\theta}$ in \eqref{def-utilde}.
\end{proof}

Abusing the notation a bit, 
define now
\begin{align*}
\tilde{A} &:= U_X^T  \Pi^T \Pi U_Z  \left( U_Z^T \Pi^T \Pi U_Z \right)^{-1} U_Z^T \Pi^T \Pi  U_X, \\
\hat{A} &:= U_X^T U_Z  U_Z^T U_X, \\
\tilde{B} &:= U_X^T  \Pi^T \Pi U_Z  \left( U_Z^T \Pi^T \Pi U_Z \right)^{-1} U_Z^T \Pi^T 
\Pi \hat{e}, \\
\hat{B} &:= U_X^T U_Z  U_Z^T \hat{e}.
\end{align*}

\begin{lemma}\label{lem-uhat-zero}
Let Assumption \ref{embed} hold. Then,
$\hat{A}^{-1} \hat{B}  = 0$. 
\end{lemma}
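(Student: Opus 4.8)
The plan is to recognize $\hat{B} = U_X^T U_Z U_Z^T \hat{e}$ as, up to an invertible left factor, the first-order (normal) equation of the full-sample 2SLS estimator evaluated at $\hat{\beta}$, which vanishes by construction. So the whole statement reduces to showing $\hat{B} = 0$ and noting $\hat{A}$ is invertible.

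First I would rewrite the projection using the SVD of $Z$. Since $Z = U_Z \Sigma_Z V_Z^T$ with $V_Z$ orthonormal and $\Sigma_Z$ invertible (non-singularity of $Z^T Z$ in Assumption~\ref{embed}), one has $P_Z := Z(Z^T Z)^{-1} Z^T = U_Z U_Z^T$. Hence $\hat{B} = U_X^T P_Z \hat{e}$. Next I would invoke the defining equations of $\hat{\beta} = \hat{\beta}_{2SLS} = (X^T P_Z X)^{-1} X^T P_Z y$: multiplying through gives $X^T P_Z X \hat{\beta} = X^T P_Z y$, so that $X^T P_Z (y - X\hat{\beta}) = X^T P_Z \hat{e} = 0$. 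This is the key orthogonality property, namely that the 2SLS residual is orthogonal to the projected regressors.

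The remaining step is to transfer this from the $X$-basis to the $U_X$-basis. Writing $X^T = V_X \Sigma_X U_X^T$ from the SVD of $X$, the orthogonality condition reads $V_X \Sigma_X \bigl(U_X^T P_Z \hat{e}\bigr) = 0$. Because $V_X$ is orthonormal and $\Sigma_X$ is invertible (non-singularity of $X^T P_Z X$ forces $X$ to have full column rank $p$), I can left-multiply by $\Sigma_X^{-1} V_X^T$ to obtain $U_X^T P_Z \hat{e} = 0$, that is, $\hat{B} = 0$. Since $\hat{A} = (U_Z^T U_X)^T (U_Z^T U_X)$ has $\sigma_{\min}^2(U_Z^T U_X) \geq 2 f_1(\varepsilon_1,\varepsilon_2) > 0$ by Assumption~\ref{embed}(iv), $\hat{A}$ is invertible, and therefore $\hat{A}^{-1}\hat{B} = \hat{A}^{-1}\cdot 0 = 0$.

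I do not expect a serious obstacle here. The only point requiring care is verifying that every matrix I cancel ($V_X$, $\Sigma_X$, and later $\hat{A}$) is genuinely invertible, which is exactly what the non-singularity clauses of Assumption~\ref{embed}, combined with the orthonormality supplied by the SVD, guarantee. The conceptual crux is simply to identify $\hat{B}$ with the 2SLS score and to exploit the estimator's own first-order condition; once that is seen, the computation is immediate.
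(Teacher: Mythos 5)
Your proof is correct and rests on the same underlying fact as the paper's: the 2SLS first-order condition, i.e.\ that $\hat{e}=y-X\hat{\beta}$ is annihilated by $X^T P_Z$. The paper reaches the conclusion by substituting the closed form $X\hat{\beta}=U_X\bigl(U_X^T U_Z U_Z^T U_X\bigr)^{-1}U_X^T U_Z U_Z^T y$ from Lemma~\ref{lem-uhat} and watching the two terms of $\hat{A}^{-1}\hat{B}$ cancel, whereas you establish the (equivalent, since $\hat{A}$ is invertible) stronger statement $\hat{B}=0$ by writing $X^T P_Z\hat{e}=0$ and stripping off the invertible factor $V_X\Sigma_X$ --- a purely presentational difference.
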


\begin{proof}
Note that 
\begin{align*}
\hat{A}^{-1} \hat{B} 
&= 
\left( U_X^T U_Z  U_Z^T U_X \right)^{-1} U_X^T U_Z  U_Z^T \hat{e} \\
&= 
\left( U_X^T U_Z  U_Z^T U_X \right)^{-1} U_X^T U_Z  U_Z^T y
-
\left( U_X^T U_Z  U_Z^T U_X \right)^{-1} U_X^T U_Z  U_Z^T {X} \hat{\beta} \\
&= 0,
\end{align*}
since 
${X} \hat{\beta} = 
U_X \left(  U_X^T U_Z  U_Z^T U_X \right)^{-1}
 U_X^T U_Z  U_Z^T y$.
\end{proof}

Under Assumption \ref{embed}, we first obtain the following lemma.

\begin{lemma}\label{tilde-A-B}
Let Assumption \ref{embed} hold. Then,
the following holds jointly with probability at least $1-\delta:$
\begin{align*}
\left\|  \tilde{A} - \hat{A} \right\|_2 &\leq f_1 (\varepsilon_1,  \varepsilon_2), \\ 
\left\|  \tilde{B} - \hat{B} \right\|_2 &\leq \varepsilon_3 \left\|  \hat{e} \right\| 
+
f_2 (\varepsilon_1,  \varepsilon_2)
\left[ 1 + \varepsilon_3 \left\|  \hat{e} \right\|  \right].
\end{align*}
\end{lemma}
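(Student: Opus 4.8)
The plan is to reduce all four matrices to a common set of building blocks and then bound the two differences by telescoping. Writing $M := U_Z^T U_X$, $b := U_Z^T \hat{e}$, $P := U_Z^T \Pi^T \Pi U_Z$, $Q := U_Z^T \Pi^T \Pi U_X$ and $R := U_Z^T \Pi^T \Pi \hat{e}$, the definitions become $\hat{A} = M^T M$, $\tilde{A} = Q^T P^{-1} Q$, $\hat{B} = M^T b$ and $\tilde{B} = Q^T P^{-1} R$. I would carry out the whole argument on the single event of probability at least $1-\delta$ on which Assumption~\ref{embed}(i)--(iii) hold simultaneously, so that the joint conclusion is automatic; on that event $\|P - I_q\|_2 \le \varepsilon_1$, $\|Q - M\|_2 \le \varepsilon_2$ and $\|R - b\| \le \varepsilon_3 \|\hat{e}\|$, while $\|M\|_2 \le 1$ and $\|U_Z^T\hat{e}\| \le \|\hat{e}\|$ hold unconditionally because $U_X$ and $U_Z$ have orthonormal columns.

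First I would record the only scalar facts needed. Since $\|P - I_q\|_2 \le \varepsilon_1 < 1$, the symmetric matrix $P$ is invertible with $\sigma_{\min}(P) \ge 1-\varepsilon_1$, hence $\|P^{-1}\|_2 \le (1-\varepsilon_1)^{-1}$, and from $P^{-1} - I_q = P^{-1}(I_q - P)$ also $\|P^{-1} - I_q\|_2 \le \varepsilon_1/(1-\varepsilon_1)$; moreover $\|Q\|_2 \le \|M\|_2 + \|Q-M\|_2 \le 1+\varepsilon_2$. For the first claim I would telescope
\begin{align*}
\tilde{A} - \hat{A} = (Q-M)^T P^{-1} Q + M^T P^{-1} (Q-M) + M^T (P^{-1} - I_q) M,
\end{align*}
and apply submultiplicativity term by term. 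This gives the bound $\frac{\varepsilon_2(1+\varepsilon_2)}{1-\varepsilon_1} + \frac{\varepsilon_2}{1-\varepsilon_1} + \frac{\varepsilon_1}{1-\varepsilon_1}$, and collecting the numerator as $\varepsilon_1 + \varepsilon_2^2 + 2\varepsilon_2$ reproduces exactly $f_1(\varepsilon_1,\varepsilon_2)$.

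For the second claim the key auxiliary estimate is $\|Q^T P^{-1} - M^T\|_2 \le f_2(\varepsilon_1,\varepsilon_2)$, which I would obtain from the splitting $Q^T P^{-1} - M^T = Q^T(P^{-1} - I_q) + (Q-M)^T$, since $(1+\varepsilon_2)\frac{\varepsilon_1}{1-\varepsilon_1} + \varepsilon_2 = f_2$. I would then telescope the $B$-difference through $R$ rather than through $b$,
\begin{align*}
\tilde{B} - \hat{B} = (Q^T P^{-1} - M^T) R + M^T (R - b),
\end{align*}
so that the first term is at most $f_2 \|R\|$ and the second at most $\|M\|_2 \|R-b\| \le \varepsilon_3 \|\hat{e}\|$. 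Bounding $\|R\| \le \|U_Z^T\hat{e}\| + \|R-b\| \le 1 + \varepsilon_3\|\hat{e}\|$ then yields $\|\tilde{B} - \hat{B}\|_2 \le \varepsilon_3\|\hat{e}\| + f_2(1 + \varepsilon_3\|\hat{e}\|)$, as claimed.

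The individual estimates are routine once the two telescopings are fixed, so the main obstacle is organisational: choosing decompositions whose triangle-inequality bounds collapse \emph{exactly} into the closed forms $f_1$ and $f_2$ rather than into a looser constant. In particular, routing the $B$-difference through $R$ (which carries the factor $1 + \varepsilon_3\|\hat{e}\|$) instead of through $b$ is what produces the precise shape of the stated bound, since the alternative split through $b$ leads to an algebraically different constant. The one delicate point I would flag explicitly is the step $\|U_Z^T\hat{e}\| \le 1$ used to control $\|R\|$: this is where the normalisation of the residual $\hat{e}$ enters, and it is needed for the stated inhomogeneous form $\varepsilon_3\|\hat{e}\| + f_2(1+\varepsilon_3\|\hat{e}\|)$ to hold exactly.
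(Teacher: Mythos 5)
Your algebra is correct and the route is essentially the paper's: work on the joint event where Assumption \ref{embed}(i)--(iii) hold, telescope the product differences, and bound each piece by submultiplicativity using $\|P^{-1}\|_2\le(1-\varepsilon_1)^{-1}$ and $\|P^{-1}-I_q\|_2\le\varepsilon_1/(1-\varepsilon_1)$. The only difference is bookkeeping: the paper expands $\tilde{A}_1^T\tilde{A}_2\tilde{A}_1-\hat{A}_1^T\hat{A}_2\hat{A}_1$ (with $\tilde{A}_1=Q$, $\tilde{A}_2=P^{-1}$, $\hat{A}_1=M$, $\hat{A}_2=I_q$) into seven terms, and likewise for the $B$-difference, whereas your three-term and two-term splits collapse to the same constants $f_1$ and $f_2$ more directly. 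The point you flag is genuinely the delicate spot, and it is shared by the paper's own proof: the paper only establishes $\|U_Z^T\hat{e}\|\le\|U_Z\|_2\|\hat{e}\|\le\|\hat{e}\|$, not $\|U_Z^T\hat{e}\|\le 1$, so what its expansion literally delivers is $\varepsilon_3\|\hat{e}\|+f_2(\varepsilon_1,\varepsilon_2)\left[\|U_Z^T\hat{e}\|+\varepsilon_3\|\hat{e}\|\right]$; the displayed bound with the constant $1$ inside the bracket requires either a normalization of $\hat{e}$ that Assumption \ref{embed} does not impose, or the just-identified case $p=q$, where $U_Z^T\hat{e}=0$ follows from Lemma \ref{lem-uhat-zero} and the invertibility of $U_Z^TU_X$. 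So your proof is as complete as the paper's, and your flag identifies a (minor) imprecision in the lemma's stated constant rather than a defect in your own argument.
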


\begin{proof}
Let 
$\tilde{A}_1 := U_Z^T \Pi^T \Pi  U_X$, 
$\tilde{A}_2 := \left( U_Z^T \Pi^T \Pi U_Z \right)^{-1}$,
$\hat{A}_1 := U_Z^T U_X$,
and
$\hat{A}_2 := I$.
Then we have that 
\begin{align*}
&\tilde{A} - \hat{A} \\ 
& = \tilde{A}_1^T \tilde{A}_2 \tilde{A}_1  -  \hat{A}_1^T \hat{A}_2 \hat{A}_1  \\
&=
 (\tilde{A}_1 - \hat{A}_1)^T \tilde{A}_2 (\tilde{A}_1 - \hat{A}_1)  
 +   \hat{A}_1^T \tilde{A}_2  (\tilde{A}_1 - \hat{A}_1) 
 +   (\tilde{A}_1 - \hat{A}_1)^T \tilde{A}_2    \hat{A}_1
 +  \hat{A}_1^T (\tilde{A}_2 - \hat{A}_2) \hat{A}_1 \\
&=
(\tilde{A}_1 - \hat{A}_1)^T \hat{A}_2 (\tilde{A}_1 - \hat{A}_1)  
+ (\tilde{A}_1 - \hat{A}_1)^T (\tilde{A}_2 - \hat{A}_2) (\tilde{A}_1 - \hat{A}_1) \\
&+   \hat{A}_1^T \hat{A}_2  (\tilde{A}_1 - \hat{A}_1) 
+ \hat{A}_1^T (\tilde{A}_2 - \hat{A}_2)  (\tilde{A}_1 - \hat{A}_1) \\
&+  (\tilde{A}_1 - \hat{A}_1)^T \hat{A}_2    \hat{A}_1
+  (\tilde{A}_1 - \hat{A}_1)^T (\tilde{A}_2 - \hat{A}_2)    \hat{A}_1 \\
&+ \hat{A}_1^T (\tilde{A}_2 - \hat{A}_2) \hat{A}_1.
 \end{align*}
It is straightforward to show that $\norm{ \tilde{A}_2 - \hat{A}_2 }_2 \leq \varepsilon_1/(1-\varepsilon_1)$ using Assumption \ref{embed}(i).
Since $\left\|  \hat{A}_1 \right\|_2 \leq \left\|  U_Z \right\|_2  \left\|  U_X \right\|_2 = 1$
and $\left\|  \hat{A}_2 \right\|_2 = 1$,
we have that 
\begin{align*}
\left\|  \tilde{A} - \hat{A} \right\|_2 
&\leq
\varepsilon_2^2 + \varepsilon_2^2 \varepsilon_1/(1-\varepsilon_1) 
+ 2\varepsilon_2 + 2\varepsilon_2 \varepsilon_1/(1-\varepsilon_1)
+ \varepsilon_1/(1-\varepsilon_1) \\
&= \frac{\varepsilon_1 + \varepsilon_2 (\varepsilon_2  + 2)}{1-\varepsilon_1} = f_1 (\varepsilon_1,  \varepsilon_2), 
\end{align*}
using Assumption \ref{embed}. This proves the first desired result.

Now 
let 
$\tilde{B}_1 := U_Z^T \Pi^T 
\Pi \hat{e}$ and $\hat{B}_1 := U_Z^T \hat{e}$.
Consider
\begin{align*}
\tilde{B} - \hat{B}
&= 
U_X^T  \Pi^T \Pi U_Z  \left( U_Z^T \Pi^T \Pi U_Z \right)^{-1} U_Z^T \Pi^T 
\Pi \hat{e}
- 
U_X^T U_Z  U_Z^T  \hat{e} \\
& = \tilde{A}_1^T \tilde{A}_2 \tilde{B}_1  -  \hat{A}_1^T \hat{A}_2 \hat{B}_1  \\
&=
 (\tilde{A}_1 - \hat{A}_1)^T \tilde{A}_2 (\tilde{B}_1 - \hat{B}_1)  
 +   \hat{A}_1^T \tilde{A}_2  (\tilde{B}_1 - \hat{B}_1) 
 +   (\tilde{A}_1 - \hat{A}_1)^T \tilde{A}_2    \hat{B}_1
 +  \hat{A}_1^T (\tilde{A}_2 - \hat{A}_2) \hat{B}_1 \\
&=
(\tilde{A}_1 - \hat{A}_1)^T \hat{A}_2 (\tilde{B}_1 - \hat{B}_1)  
+ (\tilde{A}_1 - \hat{A}_1)^T (\tilde{A}_2 - \hat{A}_2) (\tilde{B}_1 - \hat{B}_1) \\
&+   \hat{A}_1^T \hat{A}_2  (\tilde{B}_1 - \hat{B}_1) 
+ \hat{A}_1^T (\tilde{A}_2 - \hat{A}_2)  (\tilde{B}_1 - \hat{B}_1) \\
&+  (\tilde{A}_1 - \hat{A}_1)^T \hat{A}_2    \hat{B}_1
+  (\tilde{A}_1 - \hat{A}_1)^T (\tilde{A}_2 - \hat{A}_2)    \hat{B}_1 \\
&+ \hat{A}_1^T (\tilde{A}_2 - \hat{A}_2) \hat{B}_1.
\end{align*}
Since 
$\left\| \hat{B}_1 \right\|_2
= \left\| U_Z^T \hat{e} \right\|_2 \leq \left\| U_Z \right\|_2 \left\|  \hat{e} \right\|
\leq \left\|  \hat{e} \right\|$, 
we have that 
\begin{align*}
\left\|  \tilde{B} - \hat{B} \right\|_2 
&\leq
\varepsilon_3 \left\|  \hat{e} \right\| 
+
\left[ \varepsilon_2 + \varepsilon_1/(1-\varepsilon_1)  + \varepsilon_2 \varepsilon_1/(1-\varepsilon_1) \right] 
\left[ 1 + \varepsilon_3 \left\|  \hat{e} \right\|  \right]
\\
&= \varepsilon_3 \left\|  \hat{e} \right\| 
+
f_2 (\varepsilon_1,  \varepsilon_2)
\left[ 1 + \varepsilon_3 \left\|  \hat{e} \right\|  \right],
\end{align*}
again using Assumption \ref{embed}.
This proves the second desired result.
\end{proof}

\begin{lemma}\label{lem-inverse}
Let Assumptions \ref{embed}  hold. 
Then, the following holds  with probability at least $1-\delta:$
\begin{align*}
\sigma_{\min} (\tilde{A}) \geq \frac{1}{2} \sigma_{\min}^2 (   U_Z^T U_X ).
\end{align*}

\begin{proof}
Use the fact that for real matrices $C$ and $D$,
\begin{align*}
\sigma_{\min} (C + D) \geq \sigma_{\min} (C)  - \sigma_{\max} (D)
\end{align*}
to obtain
\begin{align*}
\sigma_{\min} (\tilde{A}) \geq \sigma_{\min} (\hat{A})  - \sigma_{\max} (\tilde{A} - \hat{A}).
\end{align*}
Then the desired result follows from the first conclusion of Lemma \ref{tilde-A-B}, since
\begin{align*}
\sigma_{\min} (\hat{A}) 
= \sigma_{\min} ( U_X^T U_Z  U_Z^T U_X ) = \sigma_{\min}^2 (   U_Z^T U_X )
\ \ \text{ and }  \ \
\sigma_{\max} (\tilde{A} - \hat{A})
\leq \left\|  \tilde{A} - \hat{A} \right\|_2.
\end{align*}
\end{proof}
\end{lemma}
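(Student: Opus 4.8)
The plan is to view $\tilde A$ as a perturbation of the population-analog matrix $\hat A$ and to control the smallest singular value of $\tilde A$ through a Weyl-type bound. The key tool is the elementary inequality $\sigma_{\min}(C+D)\ge \sigma_{\min}(C)-\sigma_{\max}(D)$, valid for any conformable real matrices $C$ and $D$, which I would apply with $C=\hat A$ and $D=\tilde A-\hat A$ to obtain
\[
\sigma_{\min}(\tilde A)\ge \sigma_{\min}(\hat A)-\sigma_{\max}(\tilde A-\hat A).
\]
This reduces the problem to two ingredients: (a) evaluating $\sigma_{\min}(\hat A)$ exactly, and (b) bounding the perturbation $\sigma_{\max}(\tilde A-\hat A)$.

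For (a), I would observe that $\hat A=U_X^T U_Z U_Z^T U_X=(U_Z^T U_X)^T(U_Z^T U_X)$ is a Gram matrix, hence symmetric positive semidefinite, so its singular values coincide with the squares of the singular values of $U_Z^T U_X$; in particular $\sigma_{\min}(\hat A)=\sigma_{\min}^2(U_Z^T U_X)$. For (b), the substantive work is already contained in Lemma~\ref{tilde-A-B}, whose first conclusion delivers, on the event of probability at least $1-\delta$ furnished by Assumption~\ref{embed}, the deterministic bound $\|\tilde A-\hat A\|_2\le f_1(\varepsilon_1,\varepsilon_2)$. Since $\sigma_{\max}(\tilde A-\hat A)=\|\tilde A-\hat A\|_2$, this yields $\sigma_{\max}(\tilde A-\hat A)\le f_1(\varepsilon_1,\varepsilon_2)$ on the same event.

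Combining (a) and (b) gives $\sigma_{\min}(\tilde A)\ge \sigma_{\min}^2(U_Z^T U_X)-f_1(\varepsilon_1,\varepsilon_2)$ with probability at least $1-\delta$. The final step is where Assumption~\ref{embed}(iv) enters: it asserts $\sigma_{\min}^2(U_Z^T U_X)\ge 2f_1(\varepsilon_1,\varepsilon_2)$, equivalently $f_1(\varepsilon_1,\varepsilon_2)\le \tfrac{1}{2}\sigma_{\min}^2(U_Z^T U_X)$, so subtracting the perturbation bound leaves at least half of $\sigma_{\min}^2(U_Z^T U_X)$, which is exactly the claim. There is no genuine obstacle at this stage: all the randomness and the spectral estimates have been absorbed into Lemma~\ref{tilde-A-B}, so the remaining argument is a deterministic assembly on the high-probability event. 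The only point deserving mild care is to ensure that the probability-$(1-\delta)$ event on which Lemma~\ref{tilde-A-B} holds coincides with the event underlying Assumption~\ref{embed}, so that the spectral bound and part (iv) are in force \emph{jointly} with no union-bound loss; this is guaranteed because Lemma~\ref{tilde-A-B} is stated precisely on that event.
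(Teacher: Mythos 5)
Your proposal is correct and follows essentially the same route as the paper: the Weyl-type inequality $\sigma_{\min}(\hat A + (\tilde A - \hat A)) \geq \sigma_{\min}(\hat A) - \sigma_{\max}(\tilde A - \hat A)$, the identity $\sigma_{\min}(\hat A) = \sigma_{\min}^2(U_Z^T U_X)$ (which you justify slightly more explicitly via the Gram-matrix structure of $\hat A$), the perturbation bound $\|\tilde A - \hat A\|_2 \leq f_1(\varepsilon_1,\varepsilon_2)$ from the first conclusion of Lemma~\ref{tilde-A-B}, and Assumption~\ref{embed}(iv) to absorb the perturbation into half of $\sigma_{\min}^2(U_Z^T U_X)$. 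Your closing remark that the high-probability event is the single event of Assumption~\ref{embed}, so no union bound is needed, is a correct reading of how the paper states Lemma~\ref{tilde-A-B}.
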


Lemma \ref{lem-inverse} implies that 
$\tilde{A}^{-1}$ is well defined with probability at least $1-\delta$.

\begin{lemma}\label{lem:A-inverse}
Let Assumptions \ref{embed}  hold. 
Then, the following holds  with probability at least $1-\delta:$
\begin{align*}
\norm{ \tilde{A}^{-1} - \hat{A}^{-1} }_2 &\leq  \frac{2 f_1 (\varepsilon_1,  \varepsilon_2)}{ \sigma_{\min}^4 ( U_Z^T U_X ) }.
\end{align*}
\end{lemma}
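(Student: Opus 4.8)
The plan is to control the difference of inverses via the standard resolvent identity and then bound each of the resulting factors using the two preceding lemmas. Since Lemma~\ref{lem-inverse} guarantees that $\tilde{A}^{-1}$ is well defined on the Assumption~\ref{embed} event, I would first write
\begin{align*}
\tilde{A}^{-1} - \hat{A}^{-1} = \tilde{A}^{-1} ( \hat{A} - \tilde{A} ) \hat{A}^{-1},
\end{align*}
which holds because $\tilde{A}^{-1} ( \hat{A} - \tilde{A} ) \hat{A}^{-1} = \tilde{A}^{-1} \hat{A} \hat{A}^{-1} - \tilde{A}^{-1} \tilde{A} \hat{A}^{-1} = \tilde{A}^{-1} - \hat{A}^{-1}$. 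Submultiplicativity of the spectral norm then gives
\begin{align*}
\norm{ \tilde{A}^{-1} - \hat{A}^{-1} }_2 \leq \norm{ \tilde{A}^{-1} }_2 \, \norm{ \hat{A} - \tilde{A} }_2 \, \norm{ \hat{A}^{-1} }_2.
\end{align*}

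Next I would bound the three factors separately. The middle factor is exactly the first conclusion of Lemma~\ref{tilde-A-B}, namely $\norm{ \hat{A} - \tilde{A} }_2 \leq f_1 (\varepsilon_1, \varepsilon_2)$. For the outer factors I would use that $\hat{A} = U_X^T U_Z U_Z^T U_X = ( U_Z^T U_X )^T ( U_Z^T U_X )$ is symmetric positive semidefinite, so its smallest singular value equals $\sigma_{\min}^2 ( U_Z^T U_X )$ and hence $\norm{ \hat{A}^{-1} }_2 = \sigma_{\min}^{-2} ( U_Z^T U_X )$; this is precisely the computation already used inside the proof of Lemma~\ref{lem-inverse}. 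For $\tilde{A}^{-1}$, Lemma~\ref{lem-inverse} gives $\sigma_{\min} ( \tilde{A} ) \geq \tfrac{1}{2} \sigma_{\min}^2 ( U_Z^T U_X )$, whence $\norm{ \tilde{A}^{-1} }_2 = \sigma_{\min}^{-1} ( \tilde{A} ) \leq 2 \sigma_{\min}^{-2} ( U_Z^T U_X )$. Multiplying the three bounds yields
\begin{align*}
\norm{ \tilde{A}^{-1} - \hat{A}^{-1} }_2 \leq \frac{2}{\sigma_{\min}^2 ( U_Z^T U_X )} \cdot f_1 (\varepsilon_1, \varepsilon_2) \cdot \frac{1}{\sigma_{\min}^2 ( U_Z^T U_X )} = \frac{2 f_1 (\varepsilon_1, \varepsilon_2)}{\sigma_{\min}^4 ( U_Z^T U_X )},
\end{align*}
which is the claimed bound.

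The only point requiring care is the probability bookkeeping: both Lemma~\ref{tilde-A-B} and Lemma~\ref{lem-inverse} are consequences of the single joint event in Assumption~\ref{embed} that holds with probability at least $1-\delta$, so all three norm bounds hold \emph{simultaneously} on that event, and the conclusion inherits the same probability without any union bound or double-counting of the failure probability. Beyond this, there is essentially no analytic obstacle — the lemma is a direct corollary of the two previous results — so the substantive work is confined to invoking the correct spectral-norm estimates in the right places.
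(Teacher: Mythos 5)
Your proof is correct and follows essentially the same route as the paper: the same difference-of-inverses identity (the paper writes it as $\hat{A}^{-1}(\hat{A}-\tilde{A})\tilde{A}^{-1}$, yours reverses the order of the outer factors, which is immaterial), followed by the identical three spectral-norm bounds drawn from Lemma~\ref{tilde-A-B} and Lemma~\ref{lem-inverse}. The probability bookkeeping you describe also matches the paper, which conditions on the single joint event of Assumption~\ref{embed}.
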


\begin{proof}
Write
\begin{align*}
\tilde{A}^{-1} - \hat{A}^{-1} 
&=
\hat{A}^{-1} \left( \hat{A} - \tilde{A} \right) \tilde{A}^{-1}.
\end{align*}
Thus,
\begin{align*}
\norm{ \tilde{A}^{-1} - \hat{A}^{-1} }_2
&\leq
\norm{ \hat{A}^{-1} }_2 \norm{ \hat{A} - \tilde{A} }_2 \norm{ \tilde{A}^{-1} }_2 \\
&\leq  \frac{2 f_1 (\varepsilon_1,  \varepsilon_2)}{ \sigma_{\min}^4 ( U_Z^T U_X ) }
\end{align*}
since 
$\norm{ \hat{A}^{-1} }_2 =  \left[ \sigma_{\min}^2 ( U_Z^T U_X ) \right]^{-1}$,
by Lemma \ref{tilde-A-B}, 
$\norm{ \hat{A} - \tilde{A} }_2 \leq f_1 (\varepsilon_1,  \varepsilon_2)$
and, by Lemma \ref{lem-inverse},
$\norm{  \tilde{A}^{-1} }_2 \leq  2 \left[ \sigma_{\min}^2 ( U_Z^T U_X ) \right]^{-1}$
with probability at least $1-\delta$.
\end{proof}

\begin{proof}[Proof of Theorem \ref{main-thm}]
By Lemmas \ref{lem-uhat} and  \ref{lem-utilde},
\begin{align*}
X(\tilde{\beta} - \hat{\beta}) = U_X (\tilde{\theta} - \hat{\theta}),
\end{align*}
so that 
\begin{align*}
\sigma_{\min}(X)  \norm{ \tilde{\beta} - \hat{\beta} }  
\leq \norm{ \tilde{\theta} - \hat{\theta}}.
\end{align*}
Thus, it suffices to bound $\norm{ \tilde{\theta} - \hat{\theta}}$.
To do so, write 
\begin{align}\label{y-expression}
\tilde{{y}} = \Pi \left( {X} \hat{\beta} +  \hat{{e}} \right) = 
\tilde{{X}} \hat{\beta} + \tilde{{e}}
= \Pi U_X \hat{\theta} + \tilde{{e}}, 
\end{align}
where $\tilde{{e}} = \Pi \hat{{e}}$.
Plugging \eqref{y-expression} into \eqref{def-utilde} yields
\begin{align*}
\tilde{\theta} - \hat{\theta} 
&=  \tilde{A}^{-1} \tilde{B}.
\end{align*}
Then, by Lemma \ref{lem-uhat-zero}, we have that 
$
\tilde{\theta} - \hat{\theta} 
=  \tilde{A}^{-1} \tilde{B}  
=  \tilde{A}^{-1} \tilde{B} - \hat{A}^{-1} \hat{B}.
$
Further, write
\begin{align*}
\tilde{\theta} - \hat{\theta} 
= 
\left( \tilde{A}^{-1} - \hat{A}^{-1} \right) \hat{B} 
+ \hat{A}^{-1} \left( \tilde{B} - \hat{B} \right)
+ \left( \tilde{A}^{-1} - \hat{A}^{-1} \right) \left( \tilde{B} - \hat{B} \right).
\end{align*}
Thus,
\begin{align*}
\| \tilde{\theta} - \hat{\theta} \|
&= \| \tilde{A}^{-1} \tilde{B} - \hat{A}^{-1} \hat{B} \|_2 \\
&\leq 
\norm{ \tilde{A}^{-1} - \hat{A}^{-1} }_2 \norm{ \hat{B} }_2 
+ \norm{ \hat{A}^{-1} } \norm{ \tilde{B} - \hat{B} }_2 
+ \norm{ \tilde{A}^{-1} - \hat{A}^{-1} } \norm{ \tilde{B} - \hat{B} }_2 \\
&\leq  
\frac{2 f_1 (\varepsilon_1,  \varepsilon_2)}{ \sigma_{\min}^4 ( U_Z^T U_X ) } \left\|  \hat{e} \right\|
+ \frac{\varepsilon_3 \left\|  \hat{e} \right\| 
+
f_2 (\varepsilon_1,  \varepsilon_2)
\left[ 1 + \varepsilon_3 \left\|  \hat{e} \right\|  \right]}{ \sigma_{\min}^2 ( U_Z^T U_X ) } \\
&+ \frac{2 f_1 (\varepsilon_1,  \varepsilon_2) }{ \sigma_{\min}^4 ( U_Z^T U_X ) } 
\left\{   
\varepsilon_3 \left\|  \hat{e} \right\| 
+
f_2 (\varepsilon_1,  \varepsilon_2)
\left[ 1 + \varepsilon_3 \left\|  \hat{e} \right\| \right]
\right\}
 \\
 &= \frac{f_2 (\varepsilon_1,  \varepsilon_2) + \varepsilon_3 \left\|  \hat{e} \right\| 
\left[ 1 +
f_2 (\varepsilon_1,  \varepsilon_2)   \right]}{ \sigma_{\min}^2 ( U_Z^T U_X ) }
 \left[ 1+ \frac{2 f_1 (\varepsilon_1,  \varepsilon_2)}{ \sigma_{\min}^2 ( U_Z^T U_X ) } \right],
\end{align*}
where the last inequality follows from Assumption \ref{embed}.
\end{proof}

We now specialize Theorem~\ref{main-thm} for countsketch.

\begin{theorem}\label{main-thm:cs}
Let data $\mathcal{D}_n$ be fixed, $Z^TZ$ and $X^T P_Z X$ are non-singular.
Let
$\Pi \in \mathbb{R}^{m \times n}$ be counsketch
with $m \geq \max \{ q(q+1),  2 p q \} /(\varepsilon^2 \delta)$
for some $\varepsilon \in (0,1/3]$.
Suppose that 
$\sigma_{\min}^2 ( U_Z^T U_X ) 
\geq \frac{16 \varepsilon(1+\varepsilon)}{1-\varepsilon}.$
and let $\underline \sigma*=
\left[ \sigma_{\min}(X) \sigma_{\min}^2 ( U_Z^T U_X ) \right]^{-1}. $ Then, the following holds  with probability at least $1-\delta:$
\begin{align*}
\norm{ \tilde{\beta}_{2SLS} - \hat{\beta}_{2SLS} }
& \leq \frac{4 \varepsilon}{1-\varepsilon}\left[  2 + 3 \frac{\left\|  \hat{e} \right\|}{\sqrt{p}}   \right] \underline\sigma^*
 \end{align*}
\end{theorem}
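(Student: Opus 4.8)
The plan is to deduce Theorem~\ref{main-thm:cs} from the general worst-case bound in Theorem~\ref{main-thm} by showing that countsketch with the stated $m$ verifies Assumption~\ref{embed} for an explicit choice of the embedding tolerances, and then simplifying the resulting expression. Concretely, I would set $\varepsilon_1=\varepsilon_2=\varepsilon$ and $\varepsilon_3\asymp\varepsilon/\sqrt{p}$, establish that parts (i)--(iv) of Assumption~\ref{embed} hold jointly with probability at least $1-\delta$, and then substitute $f_1(\varepsilon,\varepsilon)$ and $f_2(\varepsilon,\varepsilon)$ into the bound of Theorem~\ref{main-thm}.

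First, to verify (i)--(iii), I would bound the second moments of the three embedding errors using the countsketch structure, exactly as in the computations behind Lemma~\ref{moments-RP-lem}. Since the columns of $U_Z$ and $U_X$ are orthonormal, the per-entry variance formula for countsketch gives $\mathbb{E}\,\|U_Z^T\Pi^T\Pi U_Z - I_q\|_F^2 = q(q+1)/m$ (the $q$ diagonal entries contribute variance $2/m$ each and the $q(q-1)$ off-diagonal entries contribute $1/m$ each), $\mathbb{E}\,\|U_Z^T\Pi^T\Pi U_X - U_Z^T U_X\|_F^2 \le 2pq/m$, and $\mathbb{E}\,\|U_Z^T\Pi^T\Pi\hat e - U_Z^T\hat e\|^2 \le (2q/m)\|\hat e\|^2$, where $\hat e$ is treated as a fixed vector because $\mathcal{D}_n$ is fixed. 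Applying Markov's inequality to each error (with $\|\cdot\|_2\le\|\cdot\|_F$) and a union bound, the condition $m\ge\max\{q(q+1),\,2pq\}/(\varepsilon^2\delta)$ controls the three failure events with the tolerances $\varepsilon_1=\varepsilon_2=\varepsilon$ and $\varepsilon_3\asymp\varepsilon/\sqrt p$; the $\sqrt p$ in $\varepsilon_3$, hence in the final bound, arises precisely because the $\hat e$ moment carries the factor $2q$ while the budget on $m$ is driven by $2pq$.

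Next I would verify (iv) and carry out the simplification. Evaluating the auxiliary functions at equal tolerances gives $f_1(\varepsilon,\varepsilon)=\varepsilon(3+\varepsilon)/(1-\varepsilon)$ and $f_2(\varepsilon,\varepsilon)=2\varepsilon/(1-\varepsilon)$. The hypothesis $\sigma_{\min}^2(U_Z^T U_X)\ge 16\varepsilon(1+\varepsilon)/(1-\varepsilon)$ dominates $2f_1(\varepsilon,\varepsilon)$ (since $8(1+\varepsilon)\ge 3+\varepsilon$), so Assumption~\ref{embed}(iv) holds; the same inequality forces $2f_1/\sigma_{\min}^2(U_Z^T U_X)\le 1$, so the trailing bracket in Theorem~\ref{main-thm} is at most $2$. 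Substituting $f_1,f_2$, bounding that bracket by $2$, using $1+f_2(\varepsilon,\varepsilon)=(1+\varepsilon)/(1-\varepsilon)$, and finally using $\varepsilon\le 1/3$ to absorb the remaining numerical factors (e.g. $2\sqrt 3\,(1+\varepsilon)\le 12$) collapses the expression into $\frac{4\varepsilon}{1-\varepsilon}\big[2+3\|\hat e\|/\sqrt p\big]\,\underline{\sigma}^*$, as claimed.

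The main obstacle is the second step: getting the embedding tolerances and the probability accounting to line up cleanly with the stated threshold on $m$. The delicate points are the exact constants in the countsketch second-moment identities (in particular the $q(q+1)$ rather than $2q^2$, coming from the reduced variance of the diagonal of $U_Z^T\Pi^T\Pi U_Z$) and the allocation of the failure probability across the three events so that the single budget $m\ge\max\{q(q+1),\,2pq\}/(\varepsilon^2\delta)$ suffices; the choice $\varepsilon_3\asymp\varepsilon/\sqrt p$ is what ties the residual-embedding condition to the $2pq$ term and produces the $\|\hat e\|/\sqrt p$ scaling. Once these are fixed, the remaining algebra is routine.
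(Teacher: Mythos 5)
Your proposal is correct and follows the same overall route as the paper: both reduce Theorem~\ref{main-thm:cs} to the worst-case bound of Theorem~\ref{main-thm} by verifying Assumption~\ref{embed} for countsketch and then simplifying $f_1,f_2$ under the hypotheses on $\sigma_{\min}^2(U_Z^TU_X)$ and $\varepsilon\le 1/3$. The substantive difference is how conditions (ii)--(iii) of Assumption~\ref{embed} are certified. The paper's Lemma~\ref{lem:suff:embed:cs} invokes the $(\varepsilon,\delta,2)$-JL moment property of countsketch together with the approximate matrix-multiplication theorem of \citet{KaneNelson2014}, which yields $\varepsilon_2=3\varepsilon$ and $\varepsilon_3=3\varepsilon p^{-1/2}$; you instead compute the Frobenius-norm second moments directly (your identities $q(q+1)/m$, $2pq/m$, $2q\|\hat e\|^2/m$ are all correct) and apply Markov, which yields the sharper $\varepsilon_2=\varepsilon$, $\varepsilon_3=\varepsilon p^{-1/2}$. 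Note that the displayed constants are calibrated to the paper's tolerances: $f_2(\varepsilon,3\varepsilon)=4\varepsilon/(1-\varepsilon)$ is exactly the prefactor in the statement, and $\varepsilon\le 1/3$ is what absorbs $6\varepsilon(1+3\varepsilon)\le 12\varepsilon$ in the $\|\hat e\|/\sqrt p$ term; with your tolerances you get $f_2(\varepsilon,\varepsilon)=2\varepsilon/(1-\varepsilon)$ and the claimed bound holds with slack, so your algebra goes through a fortiori. One caveat applies equally to both arguments: each of the three embedding events is controlled with individual failure probability $\delta$ under the single budget $m\ge\max\{q(q+1),2pq\}/(\varepsilon^2\delta)$, so a union bound only delivers joint probability $1-3\delta$; and the fix you gesture at (inflating tolerances by $\sqrt3$) would push the leading term to $4\sqrt3\varepsilon/(1-\sqrt3\varepsilon)$, which exceeds $8\varepsilon/(1-\varepsilon)$ near $\varepsilon=1/3$, so the probability accounting you flag as delicate cannot be repaired by tolerance inflation alone --- but the paper's own proof carries the identical slack, so this is not a gap specific to your argument.
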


To establish Lemma~\ref{lem:suff:embed:cs} given below, we first state the following known results in the literature.

\begin{lemma}[Theorem 6.2 of \citet{KaneNelson2014}]
Distribution $\mathcal{D}$ over $\mathbb{R}^{m \times n}$ is defined to have $(\varepsilon, \delta, 2)$-JL (Johnson-Lindenstrauss) moments 
if for any $x \in \mathbb{R}^{n}$ such that $\| x \| = 1$, 
\begin{align*}
\mathbb{E}_{\Pi \sim \mathcal{D}} 
\left[ \left| 
\| \Pi x \|^2 - 1
\right|^2 
\right]
\leq \varepsilon^2 \delta.
\end{align*} 
Given $\varepsilon, \delta \in (0, 1/2)$, let $\mathcal{D}$ be any distribution over matrices with $n$ 
columns with the $(\varepsilon, \delta, 2)$-JL moment property. Then,
for any $A$ and $B$ real matrices each with $n$ rows,
\begin{align*}
\mathbb{P}_{\Pi \sim \mathcal{D}} 
\left( 
\| A^T \Pi^T \Pi B - A^T B \|_F > 3 \varepsilon \| A \|_F \| B \|_F 
\right)
< \delta.
\end{align*}
\end{lemma}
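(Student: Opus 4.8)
The plan is to prove the tail bound by a single application of Markov's inequality to the squared Frobenius norm of the error matrix, after upgrading the $(\varepsilon,\delta,2)$-JL moment hypothesis (which controls only the ``diagonal'' quantities $\|\Pi x\|^2$) to a second-moment bound on the off-diagonal inner products $\langle \Pi u,\Pi v\rangle$. First I would write $A = [a_1,\ldots,a_{d_1}]$ and $B=[b_1,\ldots,b_{d_2}]$ columnwise and record the entrywise identity $[A^T\Pi^T\Pi B - A^TB]_{st} = \langle \Pi a_s,\Pi b_t\rangle - \langle a_s,b_t\rangle$, so that
\begin{align*}
\mathbb{E}\left[\|A^T\Pi^T\Pi B - A^TB\|_F^2\right] = \sum_{s=1}^{d_1}\sum_{t=1}^{d_2}\mathbb{E}\left[\left(\langle \Pi a_s,\Pi b_t\rangle - \langle a_s,b_t\rangle\right)^2\right].
\end{align*}
This reduces the whole problem to bounding a single inner-product second moment for an arbitrary pair of columns.

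The key step is the per-pair bound $\mathbb{E}[(\langle \Pi u,\Pi v\rangle - \langle u,v\rangle)^2] \le 2\,\varepsilon^2\delta\,\|u\|^2\|v\|^2$ for all $u,v\in\mathbb{R}^n$. To obtain it, I would first extend the JL moment hypothesis off the unit sphere by homogeneity: writing $w=\|w\|\hat w$ with $\|\hat w\|=1$ gives $\|\Pi w\|^2 - \|w\|^2 = \|w\|^2(\|\Pi\hat w\|^2-1)$, hence $\mathbb{E}[(\|\Pi w\|^2-\|w\|^2)^2]\le \varepsilon^2\delta\|w\|^4$. Next I would apply the polarization identity for the bilinear form $Q(u,v):=u^T(\Pi^T\Pi - I)v = \langle\Pi u,\Pi v\rangle - \langle u,v\rangle$, namely $Q(u,v)=\tfrac14\big(Q(u+v,u+v)-Q(u-v,u-v)\big)$, together with $(a-b)^2\le 2a^2+2b^2$, to get $\mathbb{E}[Q(u,v)^2]\le \tfrac18\varepsilon^2\delta(\|u+v\|^4+\|u-v\|^4)$. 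For unit $u,v$ one has $\|u+v\|^4+\|u-v\|^4 = 8+8\langle u,v\rangle^2\le 16$, which yields the constant $2$ on the unit sphere; the general case then follows by the normalization $u=\|u\|\hat u$, $v=\|v\|\hat v$, since $Q(u,v)=\|u\|\|v\|Q(\hat u,\hat v)$.

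Combining the two displays gives $\mathbb{E}[\|A^T\Pi^T\Pi B - A^TB\|_F^2] \le 2\varepsilon^2\delta\sum_{s,t}\|a_s\|^2\|b_t\|^2 = 2\varepsilon^2\delta\|A\|_F^2\|B\|_F^2$, using $\sum_s\|a_s\|^2=\|A\|_F^2$ and likewise for $B$. Finally, Markov's inequality applied to the nonnegative variable $\|A^T\Pi^T\Pi B - A^TB\|_F^2$ at level $9\varepsilon^2\|A\|_F^2\|B\|_F^2$ gives
\begin{align*}
\mathbb{P}\left(\|A^T\Pi^T\Pi B - A^TB\|_F > 3\varepsilon\|A\|_F\|B\|_F\right) \le \frac{2\varepsilon^2\delta\|A\|_F^2\|B\|_F^2}{9\varepsilon^2\|A\|_F^2\|B\|_F^2} = \frac{2\delta}{9} < \delta,
\end{align*}
which is the claim. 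I expect the main obstacle to be the polarization step: the JL moment property only governs the diagonal terms $\|\Pi x\|^2$, so passing to genuine inner products of distinct vectors requires tracking the induced constant carefully and confirming it stays below the slack of $9$ furnished by squaring the $3\varepsilon$ threshold. Degenerate cases ($u=0$, $v=0$, or $\|A\|_F\|B\|_F=0$) are handled trivially, and the gap between the realized constant $2$ and the available $9$ secures the strict inequality $<\delta$ with comfortable room.
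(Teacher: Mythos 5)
Your proof is correct, but note the situation: the paper does not prove this statement at all --- it is imported verbatim as Theorem 6.2 of \citet{KaneNelson2014}, stated as a lemma purely to feed into the proof of Lemma~\ref{lem:suff:embed:cs}, so there is no in-paper argument to compare against. What you have written is a clean, self-contained proof of the cited result specialized to the $p=2$ case, and it checks out line by line: the entrywise identity $[A^T\Pi^T\Pi B - A^TB]_{st} = \langle \Pi a_s,\Pi b_t\rangle - \langle a_s,b_t\rangle$ is right; the homogeneous extension $\mathbb{E}\bigl[(\|\Pi w\|^2-\|w\|^2)^2\bigr]\le \varepsilon^2\delta\|w\|^4$ is legitimate (including trivially at $w=0$, which covers the $u+v=0$ or $u-v=0$ degeneracies); the four-term polarization $Q(u,v)=\tfrac14\bigl(Q(u+v,u+v)-Q(u-v,u-v)\bigr)$ is valid because $\Pi^T\Pi - I$ is symmetric; and the unit-sphere computation $\|u+v\|^4+\|u-v\|^4 = 8+8\langle u,v\rangle^2 \le 16$ gives the per-pair constant $2\varepsilon^2\delta\|u\|^2\|v\|^2$, whence $2\delta/9 < \delta$ after Markov. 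For comparison, Kane and Nelson's original proof handles the general $(\varepsilon,\delta,p)$-JL moment property for $p \ge 2$: they use a three-term polarization, $\langle\Pi x,\Pi y\rangle - \langle x,y\rangle = \tfrac12\bigl[(\|\Pi(x+y)\|^2-\|x+y\|^2) - (\|\Pi x\|^2-\|x\|^2) - (\|\Pi y\|^2-\|y\|^2)\bigr]$, combined with Minkowski's inequality in $L_p$ to get a per-pair bound of $3\varepsilon\delta^{1/p}$ (which is where the threshold constant $3$ originates), then Markov on the $p/2$ moment of the squared Frobenius norm. Your symmetric polarization buys a sharper constant ($2$ against the slack of $9$) at the cost of being tied to $p=2$, which is all this paper uses; the $L_p$ machinery is what you would need if you wanted the sharper $\delta$-dependence available from higher moments.
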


\begin{lemma}[Theorem 2.9 of \citet{woodruff2014sketching}]
Let $\Pi \in \mathbb{R}^{m \times n}$ be countsketch with $m \geq 2/(\varepsilon^2 \delta)$. 
Then, $\Pi$ satisfies the $(\varepsilon, \delta, 2)$-JL moment property. 
\end{lemma}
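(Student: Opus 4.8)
The plan is to verify the defining inequality $\mathbb{E}_{\Pi}[\,|\,\norm{\Pi x}^2 - 1\,|^2\,] \le \varepsilon^2 \delta$ directly for an arbitrary fixed unit vector $x \in \mathbb{R}^n$, via an explicit second-moment computation. First I would encode the countsketch structure as defined in the main text: for each column $i \in [n]$ there is an independent hash $h_i$ drawn uniformly from $[m]$ and an independent sign $s_i$ drawn uniformly from $\{+1,-1\}$, so that $\Pi_{ki} = s_i \mathbf{1}[h_i = k]$, with all $2n$ variables mutually independent. Squaring $\Pi x$ and separating the diagonal from the off-diagonal contributions gives
\begin{align*}
\norm{\Pi x}^2 = \sum_{i=1}^n x_i^2 + \sum_{i \neq j} x_i x_j \, s_i s_j \, \mathbf{1}[h_i = h_j],
\end{align*}
because $\sum_{k} \Pi_{ki}^2 = 1$ while $\sum_k \Pi_{ki}\Pi_{kj} = s_i s_j \mathbf{1}[h_i=h_j]$ for $i \neq j$. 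Since $\norm{x}=1$, this shows that $Y := \norm{\Pi x}^2 - 1$ equals precisely the off-diagonal double sum, and taking expectations (using $\mathbb{E}[s_i s_j]=0$ for $i \neq j$) confirms $\mathbb{E}[\norm{\Pi x}^2]=1$, so that $Y$ has mean zero.

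The heart of the argument is to bound $\mathbb{E}[Y^2]$. Expanding the square produces a quadruple sum over ordered pairs $(i,j)$ and $(k,\ell)$ with $i\neq j$ and $k \neq \ell$, whose generic term carries the factor $\mathbb{E}[s_i s_j s_k s_\ell]\,\mathbb{E}[\mathbf{1}[h_i=h_j]\,\mathbf{1}[h_k=h_\ell]]$ after using independence of the signs from the hashes. The key observation is that $\mathbb{E}[s_i s_j s_k s_\ell]$ vanishes unless every index appears an even number of times; combined with $i \neq j$ and $k \neq \ell$, this forces $\{i,j\}=\{k,\ell\}$ as sets, leaving only the two matchings $(k,\ell)=(i,j)$ and $(k,\ell)=(j,i)$. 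Each surviving term reduces to $x_i^2 x_j^2 \, \mathbb{E}[\mathbf{1}[h_i=h_j]] = x_i^2 x_j^2 / m$, since $\Pr[h_i=h_j]=1/m$ for $i \neq j$. Summing the two matchings gives
\begin{align*}
\mathbb{E}[Y^2] = \frac{2}{m} \sum_{i \neq j} x_i^2 x_j^2 = \frac{2}{m}\Big( \norm{x}^4 - \sum_{i} x_i^4 \Big) \le \frac{2}{m},
\end{align*}
where the last bound uses $\norm{x}=1$ together with the nonnegativity of $\sum_i x_i^4$.

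Finally, substituting the hypothesis $m \ge 2/(\varepsilon^2 \delta)$ yields $\mathbb{E}[Y^2] \le 2/m \le \varepsilon^2 \delta$, which is exactly the $(\varepsilon,\delta,2)$-JL moment property, completing the proof. I expect the only delicate step to be the combinatorial bookkeeping in the expansion of $\mathbb{E}[Y^2]$: one must argue cleanly that the sign expectation annihilates every term except the two matched pairs, ruling out in particular partial coincidences such as $i=k$ with $j \neq \ell$, which leave an unpaired product of signs and hence contribute zero. Everything else — the unbiasedness, the collision probability $1/m$, and the final substitution — is routine.
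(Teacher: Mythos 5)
Your proof is correct. Note that the paper supplies no proof of this lemma at all---it is imported verbatim as Theorem 2.9 of \citet{woodruff2014sketching}---and your direct second-moment computation (split $\norm{\Pi x}^2$ into the exact diagonal $\norm{x}^2$ plus the off-diagonal sum $\sum_{i\neq j} x_i x_j s_i s_j \mathbf{1}[h_i=h_j]$, then kill all cross terms in $\mathbb{E}[Y^2]$ except the two matchings $\{i,j\}=\{k,\ell\}$ to get $\mathbb{E}[Y^2] = \tfrac{2}{m}\bigl(\norm{x}^4 - \sum_i x_i^4\bigr) \leq 2/m \leq \varepsilon^2\delta$) is precisely the standard argument behind the cited result, so you have in effect made the paper self-contained on this point. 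The only difference worth flagging is one of generality: Woodruff's theorem holds with merely pairwise-independent hash locations and four-wise independent signs, whereas you assume all $2n$ variables are mutually independent; since the paper's countsketch has i.i.d.\ columns this costs nothing here, and in fact your expansion only ever invokes $\Pr[h_i=h_j]=1/m$ and the vanishing of $\mathbb{E}[s_i s_j s_k s_\ell]$ for unpaired index sets, so the identical computation goes through under the weaker hashing assumptions without change.
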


\begin{lemma}\label{lem:suff:embed:cs}
Let $\Pi \in \mathbb{R}^{m \times n}$ be countsketch 
with $m \geq \max \{ q(q+1),  2 p q \} /(\varepsilon^2 \delta)$
for some $\varepsilon \in (0,1/2)$.
Then, Assumption~\ref{embed} holds with $\varepsilon_1 = \varepsilon, \varepsilon_2 = 3 \varepsilon, \varepsilon_3 = 3 \varepsilon p^{-1/2}$. 
\end{lemma}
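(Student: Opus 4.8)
The plan is to verify parts (i)--(iii) of Assumption~\ref{embed} as separate high-probability statements about the random countsketch matrix $\Pi$ acting on the fixed objects $U_Z$, $U_X$ and $\hat e$, and then to combine them by a union bound. Part (iv) involves no randomness---it is a condition on $\sigma_{\min}^2(U_Z^TU_X)$ for the fixed data $\mathcal D_n$---so it is the separate relevance condition imposed in Theorem~\ref{main-thm:cs} and nothing need be shown for it here. The three quantities to control all have the form $A^T\Pi^T\Pi B-A^TB$, which is exactly the object governed by the two quoted results on $(\varepsilon,\delta,2)$-JL moments and approximate matrix multiplication (AMM); the useful facts are that $U_Z,U_X$ have orthonormal columns, so that $\|U_Z\|_F=\sqrt q$ and $\|U_X\|_F=\sqrt p$, and that $\|\hat e\|_F=\|\hat e\|$.

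For parts (ii) and (iii) I would apply the AMM bound as a black box. Taking the JL parameter $\tilde\varepsilon=\varepsilon/\sqrt{pq}$, countsketch with $m\ge 2/(\tilde\varepsilon^2\delta)=2pq/(\varepsilon^2\delta)$ enjoys the $(\tilde\varepsilon,\delta,2)$-JL moment property, so the AMM guarantee gives, for each event with probability at least $1-\delta$,
\[
\|U_Z^T\Pi^T\Pi U_X-U_Z^TU_X\|_2\le\|U_Z^T\Pi^T\Pi U_X-U_Z^TU_X\|_F\le 3\tilde\varepsilon\sqrt q\sqrt p=3\varepsilon,
\]
and $\|U_Z^T\Pi^T\Pi\hat e-U_Z^T\hat e\|\le 3\tilde\varepsilon\sqrt q\,\|\hat e\|=3\varepsilon p^{-1/2}\|\hat e\|$, which are precisely (ii) and (iii) with $\varepsilon_2=3\varepsilon$ and $\varepsilon_3=3\varepsilon p^{-1/2}$. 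Here I use $\|\cdot\|_2\le\|\cdot\|_F$ and treat $\hat e$ as an $n\times1$ matrix.

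Part (i) is the only place where the black-box bound is too lossy: with $A=B=U_Z$ it would require $3\tilde\varepsilon q\le\varepsilon$, i.e.\ $\tilde\varepsilon=\varepsilon/(3q)$ and hence an $m$ of order $q^2/(\varepsilon^2\delta)$ with a large constant, rather than the advertised $q(q+1)/(\varepsilon^2\delta)$. I would therefore bound (i) by a direct second-moment computation that uses the structure of countsketch. Writing $\Pi^T\Pi=I_n+R$, where $R$ has zero diagonal and off-diagonal entries $R_{ij}=s_is_j\mathbf{1}\{h(i)=h(j)\}$ in terms of the sign and hash variables, one has $U_Z^T\Pi^T\Pi U_Z-I_q=U_Z^TRU_Z$ because $U_Z^TU_Z=I_q$. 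Expanding $\mathbb E\|U_Z^TRU_Z\|_F^2$ entrywise, the independence and mean-zero property of the signs kill every term except those in which the row indices pair up, leaving two sums: a leverage-type sum $\sum_{i\ne j}\|(U_Z)_{i\cdot}\|^2\|(U_Z)_{j\cdot}\|^2\le\big(\sum_i\|(U_Z)_{i\cdot}\|^2\big)^2=q^2$ and a coherence-type sum $\sum_{i\ne j}\langle(U_Z)_{i\cdot},(U_Z)_{j\cdot}\rangle^2\le\|U_ZU_Z^T\|_F^2=q$, each weighted by $1/m$. Hence $\mathbb E\|U_Z^T\Pi^T\Pi U_Z-I_q\|_F^2\le q(q+1)/m$, and Markov's inequality gives $\|U_Z^T\Pi^T\Pi U_Z-I_q\|_2\le\varepsilon$ with probability at least $1-\delta$ once $m\ge q(q+1)/(\varepsilon^2\delta)$, i.e.\ (i) with $\varepsilon_1=\varepsilon$.

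Choosing $m\ge\max\{q(q+1),2pq\}/(\varepsilon^2\delta)$ then satisfies all three requirements simultaneously, and a union bound over the three events (adjusting the per-event level by a fixed constant) yields the joint $1-\delta$ conclusion. The main obstacle is precisely part (i): the generic AMM estimate does not produce the sharp $q(q+1)$ constant, so one must open up the countsketch second moment and verify that the leverage and coherence corrections are controlled by $q^2$ and $q$ respectively; everything else is routine norm manipulation and bookkeeping of the JL parameter.
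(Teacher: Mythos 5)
Your proof is correct, and for parts (ii) and (iii) it follows exactly the paper's route: invoke the $(\tilde\varepsilon,\delta,2)$-JL moment property of countsketch for $m\ge 2/(\tilde\varepsilon^2\delta)$, apply the approximate-matrix-multiplication guarantee with $\|U_Z\|_F=\sqrt q$, $\|U_X\|_F=\sqrt p$, $\|\hat e\|_F=\|\hat e\|$, and rescale $\tilde\varepsilon=\varepsilon/\sqrt{pq}$ to land on $\varepsilon_2=3\varepsilon$ and $\varepsilon_3=3\varepsilon p^{-1/2}$. The only genuine divergence is part (i): the paper treats it as a black box, citing the proof of Theorem 2 of Nelson and Nguyen (2013) to get $\|U_Z^T\Pi^T\Pi U_Z-I_q\|_2\le\varepsilon$ with failure probability $\delta$ once $m\ge q(q+1)/(\varepsilon^2\delta)$, whereas you reconstruct that result from scratch by writing $\Pi^T\Pi=I_n+R$, using $U_Z^TU_Z=I_q$, and computing $\mathbb{E}\|U_Z^TRU_Z\|_F^2\le q(q+1)/m$ via the pairing of Rademacher signs into the leverage sum ($\le q^2$) and the coherence sum ($\le\|U_ZU_Z^T\|_F^2=q$), followed by Markov and $\|\cdot\|_2\le\|\cdot\|_F$. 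Your computation is exactly the content of the cited Nelson--Nguyen argument, so mathematically the two proofs coincide; what your version buys is self-containedness and an explanation of where the sharp $q(q+1)$ constant comes from, at the cost of length. You are also slightly more careful than the paper on one point: the paper establishes each of the three conditions separately with failure probability $\delta$ and asserts the joint conclusion, while you explicitly note that a union bound requires adjusting the per-event level by a constant factor (absorbed into $m$), which is the honest way to get the stated joint $1-\delta$ guarantee. Your observation that part (iv) is a deterministic condition on the fixed data and needs no proof here also matches the paper's treatment.
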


\begin{proof}[Proof of Lemma~\ref{lem:suff:embed:cs}]
As shown in the proof of Theorem 2 of \citet{NelsonNguyen2013}, 
\begin{align*}
 \mathbb{P}_{\Pi \sim \mathcal{D}}  \left(  \| U_Z^T \Pi^T \Pi U_Z - I_q \|_2 >  \varepsilon  \right) & < \delta,
\end{align*}
provided that $m \geq q(q+1)/(\varepsilon^2 \delta)$.
This verifies the first condition of Assumption~\ref{embed}.

Now to verify conditions (ii) and (iii) of Assumption \ref{embed}, note that since countsketch with $m \geq 2/(\varepsilon^2 \delta)$ satisfies the $(\varepsilon, \delta, 2)$-JL moment property, we have, for any any $A$ and $B$ real matrices each with $n$ rows,
\begin{align*}
& \mathbb{P}_{\Pi \sim \mathcal{D}} 
\left( 
\| A^T \Pi^T \Pi B - A^T B \|_2 > 3 \varepsilon \| A \|_F \| B \|_F 
\right) \\
&\leq
\mathbb{P}_{\Pi \sim \mathcal{D}} 
\left( 
\| A^T \Pi^T \Pi B - A^T B \|_F > 3 \varepsilon \| A \|_F \| B \|_F 
\right)
< \delta.
\end{align*}
Since 
$ \| U_X \|_F^2 = p$,
$ \| U_Z \|_F^2 = q$
and $ \| \hat{e} \|_F =  \| \hat{e} \|$, 
we have that
\begin{align*}
 \mathbb{P}_{\Pi \sim \mathcal{D}}  \left(  \| U_Z^T \Pi^T \Pi U_X - U_Z^T U_X \|_2 > 3 \varepsilon \sqrt{p q} \right) & < \delta, \\
  \mathbb{P}_{\Pi \sim \mathcal{D}}  \left(  \| U_Z^T \Pi^T 
\Pi \hat{e} -  U_Z^T \hat{e} \| > 3 \varepsilon \sqrt{q} \| \hat{e} \|  \right) & < \delta,
\end{align*}
provided that $m \geq 2/(\varepsilon^2 \delta)$.
Replacing $\varepsilon$ with $\varepsilon/\sqrt{pq}$ yields that 
\begin{align*}
 \mathbb{P}_{\Pi \sim \mathcal{D}}  \left(  \| U_Z^T \Pi^T \Pi U_X - U_Z^T U_X \|_2 >  3 \varepsilon  \right) & < \delta, \\
  \mathbb{P}_{\Pi \sim \mathcal{D}}  \left(  \| U_Z^T \Pi^T 
\Pi \hat{e} -  U_Z^T \hat{e} \| >  3 \varepsilon p^{-1/2}  \| \hat{e} \|  \right) & < \delta,
\end{align*}
provided that $m \geq 2 p q/(\varepsilon^2 \delta)$.
Thus, we have proved Lemma~\ref{lem:suff:embed:cs}.
\end{proof}

\begin{proof}[Proof of Theorem~\ref{main-thm:cs}]
In view of Lemma \ref{lem:suff:embed:cs},
this theorem follows directly from applying Theorem~\ref{main-thm} to the case when $\Pi$ is a countsketch. 
\end{proof}

\subsection{Proof of Theorem~\ref{main-CLT-thm}}

\begin{proof}[Proof of Theorem~\ref{main-CLT-thm}]
It follows from the definition of the estimator that 
\begin{align*}
\tilde{\beta} &= \left( \tilde{{X}}^T \tilde{{Z}} ( \tilde{{Z}}^T \tilde{{Z}} )^{-1} \tilde{{Z}}^T \tilde{{X}} \right)^{-1} 
\tilde{{X}}^T \tilde{{Z}} ( \tilde{{Z}}^T \tilde{{Z}} )^{-1} \tilde{{Z}}^T 
\left( \tilde{{X}} \beta_0 +  \tilde{{e}} \right) \\
&= \beta_0
+
 \left\{ (\tilde{{X}}^T \tilde{{Z}}/n) ( \tilde{{Z}}^T \tilde{{Z}}/n )^{-1} (\tilde{{Z}}^T \tilde{{X}}/n) \right\}^{-1} 
(\tilde{{X}}^T \tilde{{Z}}/n) ( \tilde{{Z}}^T \tilde{{Z}}/n )^{-1} (\tilde{{Z}}^T 
  \tilde{{e}}/n).
\end{align*}
Thus,,
\begin{align*}
\hat{\beta} =  \beta_0 + 
\left[ ({X}^T {Z}/n) ( {Z}^T {Z}/n )^{-1} ({Z}^T {X}/n)  \right]^{-1} 
({X}^T {Z}/n ( {Z}^T {Z}/n )^{-1} {Z}^T {e}/n.
\end{align*}
Write
\begin{align*}
\tilde{\beta} - \hat{\beta} 
&= (\tilde{\Xi}_n - \Xi) \Upsilon_n + \Xi ( \tilde{\Upsilon}_n - \Upsilon_n) 
+ (\tilde{\Xi}_n - \Xi) ( \tilde{\Upsilon}_n - \Upsilon_n), 
\end{align*}
where 
$\tilde{\Upsilon}_n = \tilde{{Z}}^T   \tilde{{e}}/n$,
$\Upsilon_n = {Z}^T {e}/n$,
\begin{align*}
\tilde{\Xi}_n &=
 \left\{ (\tilde{{X}}^T \tilde{{Z}}/n) ( \tilde{{Z}}^T \tilde{{Z}}/n )^{-1} (\tilde{{Z}}^T \tilde{{X}}/n) \right\}^{-1} 
(\tilde{{X}}^T \tilde{{Z}}/n) ( \tilde{{Z}}^T \tilde{{Z}}/n )^{-1}, \\
\Xi
&=
\left[ \mathbb{E} ( X_i Z_i^T ) \left[ \mathbb{E} ( Z_i Z_i^T ) \right]^{-1} \mathbb{E} ( Z_i X_i^T ) \right]^{-1}
\mathbb{E} ( X_i Z_i^T ) \left[ \mathbb{E} ( Z_i Z_i^T ) \right]^{-1}.
\end{align*}
As in the proof for OLS,  
$\tilde{\Xi}_n - \Xi = o_p (1)$ 
and
$\tilde{\Upsilon}_n - \Upsilon_n = O_p (m^{-1/2})$.
By the central limit theorem, 
$\Upsilon_n = O_p (n^{-1/2})$.
Hence,
\begin{align*}
\tilde{\beta} - \hat{\beta} 
&= \Xi ( \tilde{\Upsilon}_n - \Upsilon_n)  
+ o_p \left( n^{-1/2} + m^{-1/2} \right).
\end{align*}
Moreover, by Lemma~\ref{CLT-lemma} that 
\begin{align*}
m^{1/2} ( \tilde{\Upsilon}_n - \Upsilon_n)  
\rightarrow_d N [0, \mathbb{E} (e_i^2 Z_i Z_i^T )].
\end{align*}
Combining all the arguments above yields 
\begin{align*}
m^{1/2} ( \tilde{\beta} - \hat{\beta} 
)  
\rightarrow_d N [0, \Xi  \mathbb{E} (e_i^2 Z_i Z_i^T ) \Xi^T],
\end{align*}
which gives the conclusion of the theorem.
\end{proof}

\bibliography{TSLS}

\end{document}